\newcommand{\argmin}[1]{\underset{#1}{\operatorname{arg}\,\operatorname{min}}\;}
\newcommand{\R}{\mathbb{R}}
\newcommand{\opt}{\mathrm{opt}}
\newcommand{\card}[1]{\mathrm{card}\left( #1\right)}
\newcommand{\vc}[1]{\mathrm{vc}\left( #1 \right)}
\newcommand{\diam}[1]{\mathrm{diam}\left(#1\right)}
\newcommand{\CR}{\mathrm{CR}}
\newcommand{\PI}{\mathrm{PI}}
\newcites{APX}{References}
\newtheorem{theorem}{Theorem}[section]
\newtheorem{definition}[theorem]{Definition}
\newtheorem{lemma}[theorem]{Lemma}
\newtheorem{assumption}{Assumption}
\numberwithin{equation}{section}
\begin{document}

\twocolumn[

\aistatstitle{Learning Prediction Intervals for Regression: Generalization and Calibration}

\aistatsauthor{Haoxian Chen \And Ziyi Huang \And Henry Lam \And Huajie Qian \And Haofeng Zhang }

\aistatsaddress{IEOR\\ Columbia University \And  EE\\ Columbia University \And IEOR\\ Columbia University \And  Alibaba Group \And IEOR\\ Columbia University }]

\begin{abstract}
We study the generation of prediction intervals in regression for uncertainty quantification. This task can be formalized as an empirical constrained optimization problem that minimizes the average interval width while maintaining the coverage accuracy across data. We strengthen the existing literature by studying two aspects of this empirical optimization. First is a general learning theory to characterize the optimality-feasibility tradeoff that encompasses Lipschitz continuity and VC-subgraph classes, which are exemplified in regression trees and neural networks. Second is a calibration machinery and the corresponding statistical theory to optimally select the regularization parameter that manages this tradeoff, which bypasses the overfitting issues in previous approaches in coverage attainment. We empirically demonstrate the strengths of our interval generation and calibration algorithms in terms of testing performances compared to existing benchmarks.

\end{abstract}

\section{Introduction}
While most literature in machine learning focuses on point prediction, uncertainty quantification plays, arguably, an equally important role in reliability assessment and risk-based decision-making. In regression, a natural approach to quantify uncertainty is the prediction interval (PI), namely an upper and lower limit for a given feature value $X$ that covers the corresponding outcome $Y$ with high probability. The interval center represents the expected outcome, whereas the width represents the uncertainty. A test point with a high expected outcome, but wide PI width, means that the outcome can still be low with a significant chance, thus signifies a downside risk that should not be overlooked.

In this paper, we study the construction of PIs that satisfy an overall target prediction level across data, known as the expected coverage rate \citep{rosenfeld2018discriminative}. Compared to widely used conditional (on $X$) coverage rate, this notion is advantageously more tangible to measure and easier to achieve. This means that a much wider class of models can be trained to build PIs, as less conditions are needed to impose on the true relation and the model class to obtain satisfactory guarantees. In general, constructing a good PI in this framework requires balancing a tradeoff between the expected interval width and coverage maintenance, which can be formalized as an empirical constrained optimization. This viewpoint has been used in \citet{khosravi2010lower} and \citet{pearce2018high} that focus on neural networks, \citet{rosenfeld2018discriminative} that studies a dual formulation, and \citet{galvan2017multi} that uses multi-objective evolutionary optimization. It also relates to the learning of minimum volume sets \citep{polonik1997minimum,scott2006learning} in which a similar tradeoff between volume and probability content appears. Building on these works, our goal in this paper is to study two key inter-related statistical aspects of this empirical constrained optimization that enhances previous results both in theory and in practice:



\textbf{Feasibility-Optimality Tradeoff for Interval Models.} We develop a learning theory for the PIs constructed from empirical constrained optimization that statistically achieves both feasibility (coverage) and optimality (interval width). Methodologically, we build a general ``sensitivity measure" that controls this tradeoff, which in turn requires developing deviation bounds for simultaneous empirical processes. Our theory in particular covers the Lipschitz continuous model class (in parameter) and finite Vapnik–Chervonenkis (VC)-subgraph class,  exemplified by a wide class of neural networks and regression trees. Such type of joint coverage-width learning guarantees appears the first in the literature. It expands the coverage-only results and the considered model classes in \citet{rosenfeld2018discriminative}. It also generalizes \citet{scott2006learning} as both our constraints and objectives possess extra sophistication related to the shape requirement of the set as an interval, and also we characterize feasibility and optimality attainments explicitly instead of the implicit metric in \citet{scott2006learning}.

\textbf{Calibration Method and Performance Guarantees.} We propose a general-purpose, ready-to-implement calibration methodology to guarantee overall PI coverage with prefixed confidence. This approach is guided by a novel utilization of the high-dimensional Berry-Esseen theorem \citep{chernozhukov2017central}. It is designed to combat the overfitting issue of interval models and perform accurately on the \emph{test} set. We demonstrate empirically how our approach either outperforms other methods in terms of achieving correct coverages or, for those methods with comparable coverages, we attain shorter interval widths. Moreover, our approach applies, with little adjustment, to accurately construct multiple PIs at different prediction levels simultaneously. This adds extra flexibility for decision-makers to construct PIs without needing to pre-select the prediction level in advance.

\section{Related Work}
We first review two most closely related methods, and then move on to other works.


\textbf{Conformal Learning (CL).} 
First proposed in \citet{vovk2005algorithmic}, conformal learning (CL) is a class of methods that leverage data exchangeability to constructs PIs with finite-sample and distribution-free coverage guarantees. The original CL requires retraining for each possible test point and is therefore computationally prohibitive in general. Split/inductive CL \citep{papadopoulos2008inductive,lei2015conformal,lei2018distribution} improves the computational efficiency based on a holdout validation that avoids retraining, but at the cost of higher variability and wider intervals due to less efficient data use. Lying in between are variants based on more efficient cross-validation schemes, including leave-one-out (or the Jackknife; \citet{barber2019predictive,alaa2020discriminative,steinberger2016leave}), K-fold \citep{vovk2015cross} and ensemble methods \citep{gupta2019nested,kim2020predictive}. Recently, quantile regression are combined with CL \citep{kivaranovic2020adaptive,romano2019conformalized} to take into account the heterogeneity of uncertainties across feature values. Despite its generality, the coverage guarantees from CL are only marginal with respect to the training data (except split CL \citep{vovk2012conditional}), whereas our proposed calibration method provides a stronger high confidence guarantee. Moreover, our approach explicitly optimizes the interval width, therefore typically generates shorter PIs than CL.

\textbf{Quantile Regression (QR).}
Quantile regression (QR) estimates the conditional quantiles of $Y$ that can be used to construct PIs. Classical QR methods require strong assumptions (e.g., linearity or other parametric forms; Chapter 4 in \citet{koenker2001quantile}). Approaches that relax these assumptions include quantile regression forests (QRF) \citep{meinshausen2006quantile} and kernel support vector machine (SVM) \citep{steinwart2011estimating}. However, little is known about their finite-sample coverage performance because of estimation errors in the quantiles. Recently, \citet{kivaranovic2020adaptive} proposes calibrating the weight parameter in the pinball loss on a holdout data set to enhance PI coverage. This calibration scheme, however, does not address overfitting on the holdout set, thus could fall short of providing correct coverages on test data as our experiments show.

\textbf{Other Approaches.} PI construction has been substantially studied in classical statistics. To understand this construction, the error of a (point) prediction can typically be decomposed into two components: model uncertainty, which comes from the variability in the training data or method, and outcome uncertainty, which comes from the noise of $Y$ conditional on $X$. The classical literature often assumes well-defined and simple forms on the relation between $X$ and $Y$ (e.g., linear model, Gaussian; \citealt{seber2012linear}). In this case, model uncertainty reduces to parameter estimation errors. Outcome uncertainty, on the other hand, is intrinsic in the population distribution but not the training, i.e. it arises even if the model is perfectly trained. An array of methods account for both sources of variability, which utilize approaches ranging from asymptotic normality \citep{seber2012linear}, deconvolution \citep{schmoyer1992asymptotically}, and resampling schemes such as the bootstrap \citep{stine1985bootstrap} and jackknife \citep{steinberger2016leave}.

To overcome the strong assumptions in classical statistical models, several model-free approaches have been developed. Nonparametric regression, such as spline or kernel-based methods  \citep{doksum2000spline,olive2007prediction}, removes rigid model assumptions but at the expense of strong dimension dependence. Gaussian processes or kriging-based methods \citep{sacks1989design}, popular in the areas of metamodeling and computer emulation, regard outcomes as a response surface and perform Gaussian posterior updates. In particular, stochastic kriging (SK) model \citep{doi:10.1287/opre.1090.0754} constructs PIs that account for both model and output variabilities by using a prior correlation structure that entails both. However, SK does not deliver a frequentist coverage guarantee nor convergence rate. More recently, uncertainty quantification of neural networks regarding their model and output variabilities are studied, via methods such as the delta method and the bootstrap  \citep{papadopoulos2001confidence,khosravi2011comprehensive}. Nonetheless, like in classical statistical models, these approaches can only capture variability due to data and training noises, but not the bias against the true relation.

Lastly, our PI construction follows the \textit{high-quality} criterion in works including \citet{khosravi2010lower, khosravi2011comprehensive, galvan2017multi, pearce2018high,rosenfeld2018discriminative,zhang2019random,zhu2019hdi}, which propose various loss functions to capture the width-coverage tradeoff. They are also related to the highest density intervals in statistics \citep{box2011bayesian}. Our investigations in this paper provide theoretical guarantees in using this criterion.

\section{PI Learning as Empirical Constrained Optimization}\label{sec:problem}
We consider the general regression setting where $X \in \mathcal{X}\subset \R^d$ is the feature vector and $Y\in \mathcal{Y}\subset \R$ is the outcome. Given an i.i.d. data set $\mathcal{D}:=\{(X_i,Y_i)\}_{i=1,\ldots,n}$ each drawn from an unknown joint distribution $\pi$, our goal is to find a PI defined as:
\begin{definition}
An interval $[L(x),U(x)]$, where both $L,U:\mathcal{X}\to \R$, is called a prediction interval (PI) with (overall) coverage rate $1-\alpha$ ($0\le \alpha\le 1$) if $\mathbb{P}_{\pi}(Y\in[L(X),U(X)])\geq 1-\alpha$, where $\mathbb{P}_{\pi}$ denotes the probability with respect to the joint distribution $\pi$.
\end{definition}







We aim to find two functions $L$ and $U$, both in a hypothesis class $\mathcal{H}$. 
To learn the optimal high-quality PI that attains a given coverage rate $1-\alpha$, we consider the following constrained optimization problem:
\begin{equation} \label{OP1}
\begin{aligned}
\min_{L,U\in \mathcal{H}\;\text{and}\;L\leq U}\ &\mathbb{E}_{\pi_X}[U(X)-L(X)]\\
\text{subject to } &\mathbb{P}_{\pi}(Y\in[L(X),U(X)])\geq 1-\alpha
\end{aligned}   
\end{equation}
where $\mathbb{E}_{\pi_X}$ denotes the expectation with respect to the marginal distribution of $X$. 
Given the data $\mathcal{D}$, we approximate \eqref{OP1} with the following empirical constrained optimization problem
\begin{equation}\label{OP2}
\begin{aligned}
\widehat{\opt} (t):
\min_{L,U\in \mathcal{H}\;\text{and}\;L\leq U}\ &\mathbb{E}_{\hat\pi_X}[U(X)-L(X)] \\
\text{subject to }&\mathbb{P}_{\hat\pi}(Y\in[L(X),U(X)])\geq 1-\alpha+t
\end{aligned}
\end{equation}
parameterized by $t\in [0,\alpha]$, where $\mathbb{E}_{\hat\pi_X}$, $\mathbb{P}_{\hat\pi}$ are expectation and probability with respect to the empirical distribution constructed from the data $\mathcal{D}$, e.g., $\mathbb{E}_{\hat\pi_X}[U(X)-L(X)]=\frac{1}{n}\sum_{i=1}^n(U(X_i)-L(X_i))$. The adjustment $t$, which can be viewed as a penalty term for the empirical coverage rate, is used to boost the generalized coverage performance for the optimal interval solved from $\widehat{\opt} (t)$: If no adjustment is made in the constraint ($t=0$), then, because of noise, the true coverage can be lower than $1-\alpha$ with significant probability even if the empirical coverage is above $1-\alpha$. A positive $t$ decreases the probability of such an event.
Choosing $t$ too large, however, would eliminate more intervals from the feasible set, leading to a deterioration in the obtained expected width (objective). One of our main investigations is to balance the coverage and width performance by judiciously choosing $t$.

We point out that, while our focus is on training $L$ and $U$ directly, our approach can also be applied naturally when we are given in advance a well-trained point predictor $f:\mathcal{X}\to\R$ (obtained by any means independent of $\mathcal{D}$). In this case we may seek two non-negative functions $\delta_i : \mathcal X \to [0,\infty)$ such that $[L(x),U(x)]=[f(x)-\delta_1(x), f(x)+\delta_2(x)]$.
Our subsequent development applies by constructing lower and upper bounds for the ``translational" data set $\tilde{\mathcal{D}}:=\{(X_i,Y_i-f(X_i))\}_{i=1,\ldots,n}$.

\section{Joint Coverage-Width Learning Guarantees}\label{sec:learning}
We establish finite-sample generalization error bounds for two major classes: 1) finite VC dimensions, and 2) Lipschitz continuous in parameters, by building on a unified ``sensitivity bound" on the oracle optimization. Corresponding results on consistency are provided in Appendix \ref{sec:consistency}. To begin, 
let $R^*_t(\mathcal{H})$ be the optimal value of
\begin{equation}\label{OP3}
\begin{aligned}
\opt (t):
\min_{L,U\in\mathcal{H}\;\text{and}\;L\leq U}\ &\mathbb{E}_{\pi_X}[U(X)-L(X)]\\
\text{subject to } &\mathbb{P}_{\pi}(Y\in[L(X),U(X)])\geq 1-\alpha+t
\end{aligned}
\end{equation}
which is \eqref{OP1} but with a higher target prediction level, and correspondingly $R^*(\mathcal{H})$ be the optimal value of \eqref{OP1}. 
We make the following assumptions on the hypothesis class $\mathcal{H}$, and the conditional distribution of $Y$ given $X$:
\begin{assumption}\label{class: non-negative translation}
For every function $h \in \mathcal{H}$, we have $h + c\in \mathcal{H}$ for every constant $c\in\R$.
\end{assumption}
\begin{assumption}\label{conditional density: positiveness}
The conditional distribution of $Y$ given $X=x$ has a density $p(\cdot\vert x)$ for every $x\in\mathcal{X}$. Moreover, for every $x,y$ such that $\mathbb{P}_{\pi}(Y\leq y\vert X=x)\in (0,1)$, we have $p(y\vert x)>0$ and that $p(\cdot\vert x)$ is continuous at $y$.
\end{assumption}


The simple closedness property in Assumption \ref{class: non-negative translation} turns out to allow sufficiently tight and tractable analysis for many, potentially complicated, function classes under the same roadmap. Many common classes (e.g., linear, piece-wise constant such as tree-based models, and neural networks with linear activation in the output unit) satisfy Assumption \ref{class: non-negative translation}. It is also straightforward to enforce a class to satisfy Assumption \ref{class: non-negative translation} by simply adding one extra parameter. Assumption \ref{conditional density: positiveness} is a mild non-degeneracy condition on the conditional distribution (e.g., Assumption \ref{conditional density: positiveness} holds when $p(\cdot\vert x)$ is Gaussian or uniform over a certain interval for each $x$). 

We have the following upper bound for $R^*_t(\mathcal{H})-R^*(\mathcal{H})$ for $0<t<\alpha$:
\begin{theorem}[Sensitivity bound with respect to target prediction level]\label{sensitivity bound}
Suppose Assumptions \ref{class: non-negative translation}-\ref{conditional density: positiveness} hold. For every $x\in \mathcal{X}$ and $\beta\in(0,1)$, let
  $  \Gamma(x,\beta):=\inf\big\{p(y_1\vert x) + p(y_2\vert x):\ \mathbb{P}_{\pi}(y_1\leq Y \leq y_2\vert X=x)\leq 1-\beta\big\}$
and $\gamma_{\beta}:= \sup\{z>0:\mathbb{P}_{\pi_X}(\Gamma(X,\beta)< z)\leq \beta\}$. Then, for every $\alpha\in (0,1)$ and $t\in (0,\alpha)$, we have
$    R^*_t(\mathcal{H})-R^*(\mathcal{H})\leq 6t/((\alpha - t)\gamma_{(\alpha-t)/3})$.
\end{theorem}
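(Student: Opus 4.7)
The plan is to perturb a near-optimal solution of $\opt(0)$ into a feasible solution of $\opt(t)$ by symmetric inflation. Let $(L^*, U^*)$ be an $\epsilon$-optimal pair for $\opt(0)$. By Assumption \ref{class: non-negative translation}, $(L^*-c, U^*+c)\in \mathcal H\times\mathcal H$ for every $c\geq 0$, and the expected width increases by exactly $2c$. So it suffices to find the smallest $c\geq 0$ for which the inflated pair satisfies $\mathbb{P}_\pi(Y\in[L^*(X)-c, U^*(X)+c])\geq 1-\alpha+t$, bound that $c$, and send $\epsilon\to 0$. This reduces the theorem to a quantitative question: how fast does symmetric inflation raise coverage?

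Introduce the coverage function $G(c):=\mathbb{E}_{\pi_X}[F_X(U^*(X)+c)-F_X(L^*(X)-c)]$, where $F_x$ is the conditional CDF of $Y$ given $X=x$. Assumption \ref{conditional density: positiveness} yields continuity of $G$, with $G(0)\geq 1-\alpha$. Either $G(0)\geq 1-\alpha+t$ and no inflation is needed, or $c^*:=\inf\{c\geq 0: G(c)\geq 1-\alpha+t\}$ satisfies $G(c^*)=1-\alpha+t$ by continuity, so $G(c^*)-G(0)\leq t$. Since each $F_x$ is absolutely continuous with density $p(\cdot\mid x)$, Fubini yields
\[
G(c^*)-G(0)=\int_0^{c^*}\mathbb{E}_{\pi_X}\!\big[p(U^*(X)+s\mid X)+p(L^*(X)-s\mid X)\big]\,ds,
\]
so the task becomes a uniform lower bound on the integrand for $s\in[0,c^*)$, where $G(s)<1-\alpha+t$.

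Set $\beta:=(\alpha-t)/3$ and $A(s):=\{x:F_x(U^*(x)+s)-F_x(L^*(x)-s)\leq 1-\beta\}$. On $A(s)$ the pair $(L^*(x)-s,U^*(x)+s)$ is admissible in the infimum defining $\Gamma(x,\beta)$, hence $p(U^*(x)+s\mid x)+p(L^*(x)-s\mid x)\geq \Gamma(x,\beta)$. Splitting on $\{\Gamma(X,\beta)\geq\gamma_\beta\}$ and invoking the quantile-type identity $\mathbb{P}_{\pi_X}(\Gamma(X,\beta)<\gamma_\beta)\leq \beta$ (which follows from the definition of $\gamma_\beta$ and left-continuity of CDFs), the integrand is at least $\gamma_\beta(\pi_X(A(s))-\beta)$. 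To lower-bound $\pi_X(A(s))$, note that $F_x(U^*(x)+s)-F_x(L^*(x)-s)>1-\beta$ on $A(s)^c$, so $G(s)\geq (1-\beta)\pi_X(A(s)^c)$; combining with $G(s)<1-\alpha+t$ and $\beta=(\alpha-t)/3$ gives $\pi_X(A(s))\geq 2(\alpha-t)/3$, whence $\pi_X(A(s))-\beta\geq (\alpha-t)/3$. Integrating, $t\geq G(c^*)-G(0)\geq c^*\gamma_\beta(\alpha-t)/3$, so $2c^*\leq 6t/((\alpha-t)\gamma_{(\alpha-t)/3})$. Letting $\epsilon\to 0$ yields the claimed sensitivity bound.

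The main obstacle is calibrating $\beta$: it must simultaneously make the Markov-type bound $\pi_X(A(s))\geq 1-G(s)/(1-\beta)$ exceed $\beta$ by a definite margin---so that the factor $\pi_X(A(s))-\beta$ does not collapse---while keeping $\gamma_\beta$ at a manageable tail level. The specific choice $\beta=(\alpha-t)/3$ is the one that produces the clean factor $\gamma_\beta(\alpha-t)/3$; other constants yield qualitatively similar but less streamlined bounds. The ancillary technicalities (continuity of $G$, the Fubini swap, and measurability of $\Gamma(\cdot,\beta)$) all reduce to routine dominated-convergence arguments under Assumption \ref{conditional density: positiveness}.
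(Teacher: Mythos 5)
Your proposal is correct and follows essentially the same route as the paper's proof: symmetric inflation of an $\epsilon$-optimal pair by a constant $c$, continuity of the coverage function to pin down $c^*$ with $G(c^*)-G(0)\leq t$, the Fubini representation of the coverage increment as an integral of conditional density sums, and the same calibration $\beta=(\alpha-t)/3$ combined with the Markov-type bound (the paper's Lemma A.2) and the quantile property of $\gamma_\beta$ to lower-bound the integrand by $\gamma_{(\alpha-t)/3}(\alpha-t)/3$. The only differences are presentational (you fold the Markov lemma into the $A(s)$ set argument rather than stating it separately).
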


We briefly explain how Theorem \ref{sensitivity bound} helps develop generalization guarantees. Let $\mathcal{H}^2_t,\hat{\mathcal{H}}^2_t\subset \mathcal{H}^2:=\mathcal{H}\times \mathcal{H}$ be the feasible set of $\opt (t)$ and $\widehat{\opt} (t)$ respectively. If a uniform error bound $\Delta t$ over the class $\mathcal{H}^2$ can be established for the empirical coverage constraint in \eqref{OP2}, then $ \mathcal{H}^2_{t+\Delta t}\subset\hat{\mathcal{H}}^2_t$, therefore the shortest interval we can potentially learn from $\widehat{\opt} (t)$ can only be wider than the oracle optimum from \eqref{OP1} by at most $R^*_{t+\Delta t}(\mathcal{H})-R^*(\mathcal{H})$. The density lower bounds in Theorem \ref{sensitivity bound} ensure a sufficient increase of the coverage probability
as interval width increases, which inversely constrains the growth of interval width as the required coverage increases.

To derive finite-sample convergence rate, we first assume the availability of certain deviation bounds for the empirical coverage rate and interval width that appear in $\widehat{\opt}(t)$:
\begin{theorem}[Rate of convergence]\label{convergence rate}
Suppose Assumptions \ref{class: non-negative translation}-\ref{conditional density: positiveness} hold, and the following deviation bounds hold for every $\epsilon,t>0$: $\mathbb{P}\big(\sup_{h\in\mathcal{H}}\lvert\mathbb{E}_{\hat\pi_X}[h(X)]-\mathbb{E}_{\pi_X}[h(X)]\rvert > \epsilon\big)\leq \phi_1(n,\epsilon,\mathcal{H})$ and 
$\mathbb{P}\big(\sup_{L,U\in\mathcal{H}\;\text{and}\;L\leq U}\lvert\mathbb{P}_{\hat\pi}(Y\in [L(X),U(X)])-\mathbb{P}_{\pi}(Y\in [L(X),U(X)])\rvert > t\big)\leq \phi_2(n,t,\mathcal{H})
$. Then for every $t\in (0,\frac{\alpha}{2})$ and $\epsilon>0$, with probability at least $1-\phi_1(n,\epsilon,\mathcal{H})-\phi_2(n,t,\mathcal{H})$, we have, for every optimal solution $(\hat L_{t}^*,\hat U_{t}^*)$ of $\widehat{\opt} (t)$, that $\mathbb{P}_{\pi}(Y\in [\hat L_{t}^*(X),\hat U_{t}^*(X)])\geq 1-\alpha$
and $ \mathbb{E}_{\pi_X}[\hat U_{t}^*(X)-\hat L_{t}^*(X)]\leq \mathcal{R}^*(\mathcal{H})+\frac{12t}{(\alpha-2t)\gamma_{(\alpha-2t)/3}}+4\epsilon$.
\end{theorem}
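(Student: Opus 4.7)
The plan is to condition on the intersection of the two deviation events and then prove the coverage and width claims separately on that good event. Define
$E := \{\sup_{h\in\mathcal{H}}|\mathbb{E}_{\hat\pi_X}[h(X)]-\mathbb{E}_{\pi_X}[h(X)]|\leq\epsilon\}\cap\{\sup_{L\leq U,\,L,U\in\mathcal{H}}|\mathbb{P}_{\hat\pi}(Y\in[L(X),U(X)])-\mathbb{P}_{\pi}(Y\in[L(X),U(X)])|\leq t\}$, which by the union bound has probability at least $1-\phi_1(n,\epsilon,\mathcal{H})-\phi_2(n,t,\mathcal{H})$. Everything below is argued pathwise on $E$, so the final probability statement follows immediately. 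Since the argument uses only the feasibility and optimality properties of $(\hat L_t^*,\hat U_t^*)$, it automatically applies to every optimal solution.

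For the coverage conclusion, feasibility of $(\hat L_t^*,\hat U_t^*)$ in $\widehat{\opt}(t)$ gives $\mathbb{P}_{\hat\pi}(Y\in[\hat L_t^*(X),\hat U_t^*(X)])\geq 1-\alpha+t$. Applying the $\phi_2$ deviation bound on $E$ (valid because $\hat L_t^*\leq \hat U_t^*$) yields $\mathbb{P}_{\pi}(Y\in[\hat L_t^*(X),\hat U_t^*(X)])\geq(1-\alpha+t)-t=1-\alpha$, as required.

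For the width conclusion, I realize the inclusion $\mathcal{H}^2_{2t}\subset\hat{\mathcal{H}}^2_t$ hinted at after Theorem \ref{sensitivity bound}: if $(L,U)\in\mathcal{H}^2$ satisfies $\mathbb{P}_{\pi}(Y\in[L(X),U(X)])\geq 1-\alpha+2t$, then on $E$ we get $\mathbb{P}_{\hat\pi}(Y\in[L(X),U(X)])\geq(1-\alpha+2t)-t=1-\alpha+t$, so $(L,U)$ is feasible for $\widehat{\opt}(t)$. Take a (possibly near-)optimal solution $(L^*,U^*)$ of $\opt(2t)$ with $\mathbb{E}_{\pi_X}[U^*(X)-L^*(X)]\leq R^*_{2t}(\mathcal{H})+\epsilon'$ for arbitrarily small $\epsilon'>0$. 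By the inclusion and optimality of $(\hat L_t^*,\hat U_t^*)$ in $\widehat{\opt}(t)$, we have $\mathbb{E}_{\hat\pi_X}[\hat U_t^*(X)-\hat L_t^*(X)]\leq\mathbb{E}_{\hat\pi_X}[U^*(X)-L^*(X)]$. Converting each side from $\hat\pi_X$ to $\pi_X$ by applying the $\phi_1$ bound separately to the four functions $\hat U_t^*,\hat L_t^*,U^*,L^*\in\mathcal{H}$ (contributing $2\epsilon$ on each side) gives $\mathbb{E}_{\pi_X}[\hat U_t^*(X)-\hat L_t^*(X)]\leq R^*_{2t}(\mathcal{H})+4\epsilon+\epsilon'$. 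Invoking Theorem \ref{sensitivity bound} with $t$ replaced by $2t$ (legitimate since $2t<\alpha$) bounds $R^*_{2t}(\mathcal{H})-R^*(\mathcal{H})$ by $12t/((\alpha-2t)\gamma_{(\alpha-2t)/3})$; letting $\epsilon'\to 0$ yields the stated width bound.

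The main subtlety I anticipate is simply the possible non-attainment of the infimum defining $R^*_{2t}(\mathcal{H})$, handled by the arbitrary $\epsilon'$ slack above. A secondary point worth checking is that the $\phi_1$ bound must be applied four times rather than to the function $U-L$ directly, since $U-L$ need not lie in $\mathcal{H}$; Assumption \ref{class: non-negative translation} plays no role here but was already used to obtain Theorem \ref{sensitivity bound}. Beyond that, the argument is a clean composition of the inclusion principle with the sensitivity bound, and no further machinery is needed.
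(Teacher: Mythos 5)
Your proposal is correct and follows essentially the same route as the paper's proof: condition on the intersection of the two deviation events, use the inclusion $\mathcal{H}^2_{2t}\subset\hat{\mathcal{H}}^2_t\subset\mathcal{H}^2_0$ for feasibility and for comparison against $R^*_{2t}(\mathcal{H})$, convert empirical to population widths at a cost of $4\epsilon$, and finish with Theorem \ref{sensitivity bound} applied at $2t$. The $\epsilon'$ slack for possible non-attainment of the infimum is handled in the paper by working with the infimum over $\mathcal{H}^2_{2t}$ directly, which is the same device.
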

Theorem \ref{convergence rate} translates the deviation bounds of two empirical processes into the probability of jointly achieving optimality and feasibility. With this, our focus is to derive deviation bounds for important hypothesis classes. Such bounds are well-understood in the literature, e.g., Chapter 2.14 in \citet{van1996weak} and Chapter 3 in \citet{vapnik2013nature}. However, in our case, we need to go beyond the standard theory to control two empirical processes simultaneously by choosing a single function class $\mathcal{H}$. Next we present two fairly general choices of $\mathcal{H}$ for which exponential deviation bounds can be obtained for both processes.

To present the results, we introduce some terminologies. Let $\vc{\mathcal{S}}$ be the VC dimension of a class $\mathcal{S}$ of sets. The VC dimension $\vc{\mathcal{G}}$ of a class $\mathcal{G}$ of functions from $\mathcal{X}$ to $\R$ is defined to be the VC dimension of the set of subgraphs $\mathcal{S}_{\mathcal{G}}:=\{\{(x,z)\in\mathcal{X}\times \R:z<g(x) \}:g\in\mathcal{G}\}$. $\mathcal{G}$ is called VC-subgraph if $\vc{\mathcal{G}}<\infty$. For a vector, $\Vert \cdot \Vert_p$ represents its $l_p$-norm for $p\geq 1$. For a function $\xi:\mathcal{X}\to\R$, we denote by $\Vert \xi \Vert_{\psi_2}:=\inf\{c>0:\mathbb{E}_{\pi_X}[\exp(\xi^2(X)/c^2)]\leq 2\}$ its sub-Gaussian norm under the distribution $\pi_X$, and denote by $\Vert \xi \Vert_p:=\big(\mathbb{E}_{\pi_X}[\lvert \xi(X)\rvert^p]\big)^{1/p}$ its $L_p$ norm.
\begin{theorem}[Joint coverage-width guarantee for VC-subgraph class]\label{rate:vc-subgraph}
If the hypothesis class $\mathcal{H}$ is such that the augmented class $\mathcal{H}_+:=\{h+c:h\in \mathcal{H},c\in\R\}$ is VC-subgraph, and $H(x):=\sup_{h\in\mathcal{H}}\lvert h(x)-\mathbb{E}_{\pi_X}[h(X)]\rvert$ satisfies $\Vert H \Vert_{\psi_2}<\infty$, then the deviation bounds in Theorem \ref{convergence rate} satisfy
{\small
\begin{equation}
\begin{aligned}
&\phi_1(n,\epsilon,\mathcal{H})\leq 2\exp\Big( -\frac{n\epsilon^2}{C\Vert H \Vert_{\psi_2}^2\vc{\mathcal{H}_+}} \Big), \\ &\phi_2(n,t,\mathcal{H})\leq 
    \begin{cases}
    4^{n+1}\exp(-t^2n)\;&\text{if}\;n< \frac{C\vc{\mathcal{H}}}{2}\\
    4\Big(\frac{2en}{C\vc{\mathcal{H}}}\Big)^{C\vc{\mathcal{H}}}\exp(-t^2n)\;&\text{if}\;n\geq \frac{C\vc{\mathcal{H}}}{2}
    \end{cases}
\end{aligned}    
\end{equation}}
where $C$ is a universal constant, and $e$ is the base of the natural logarithm. If Assumptions \ref{class: non-negative translation}-\ref{conditional density: positiveness} also hold (in which case $\mathcal{H}=\mathcal{H}_+$), then for every $\eta \in (0,1)$, when $n\geq \frac{C\vc{\mathcal{H}}}{2}$ and we set $t:=\sqrt{\frac{1}{n}\log\frac{8}{\eta}+\frac{C\vc{\mathcal{H}}}{n}\log\frac{2en}{C\vc{\mathcal{H}}}}\leq \frac{\alpha}{4}$, with probability at least $1-\eta$, we have, for every optimal solution $(\hat L_{t}^*,\hat U_{t}^*)$ of $\widehat{\opt} (t)$, that $\mathbb{P}_{\pi}(Y\in [\hat L_{t}^*(X),\hat U_{t}^*(X)])\geq 1-\alpha$ and
{\small
\begin{equation*}
\begin{aligned}
 &\mathbb{E}_{\pi_X}[\hat U_{t}^*(X)-\hat L_{t}^*(X)]\leq \mathcal{R}^*(\mathcal{H})\\
 &+\frac{24}{\alpha \gamma_{\frac{\alpha}{6}}}\sqrt{\frac{1}{n}\log\frac{8}{\eta}+\frac{C\vc{\mathcal{H}}}{n}\log\frac{2en}{C\vc{\mathcal{H}}}}\\
 &+\sqrt{\frac{\vc{\mathcal{H}_+}}{n}\cdot 16C\Vert H \Vert_{\psi_2}^2\log\frac{4}{\eta}}.
\end{aligned} 
\end{equation*}}
\end{theorem}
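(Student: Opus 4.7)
The strategy is to derive the two deviation bounds $\phi_1,\phi_2$ separately by classical empirical-process arguments, then plug them into Theorem~\ref{convergence rate} with $\epsilon$ and $t$ tuned so that each of $\phi_1,\phi_2$ contributes at most $\eta/2$. For $\phi_1$ it suffices to bound the supremum over $\mathcal{H}_+$ since $\mathcal{H}\subseteq\mathcal{H}_+$. As $\mathcal{H}_+$ is VC-subgraph, uniform covering-number estimates in terms of VC dimension (Theorem~2.6.7 in \citet{van1996weak}) give polynomial metric entropy; combined with a sub-Gaussian chaining/Talagrand-type concentration argument adapted to the Orlicz norm $\Vert H \Vert_{\psi_2}<\infty$, one arrives at the stated bound $2\exp(-n\epsilon^2/(C\Vert H \Vert_{\psi_2}^2\vc{\mathcal{H}_+}))$.

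\textbf{Bounding $\phi_2$.} Consider the class of band sets $\mathcal{S}:=\{\{(x,y):L(x)\le y\le U(x)\}:L,U\in\mathcal{H}\}$; each such set is an intersection of the complement of a subgraph of $L$ with a subgraph of $U$. By VC calculus for Boolean combinations of VC classes (Lemma~2.6.17 in \citet{van1996weak}), $\vc{\mathcal{S}}\le C\vc{\mathcal{H}}$ for an absolute constant. Invoking the Vapnik-Chervonenkis inequality with Sauer's lemma (Chapter~3 in \citet{vapnik2013nature}) then produces the stated form: a shattering coefficient of $(2en/(C\vc{\mathcal{H}}))^{C\vc{\mathcal{H}}}$ when $n\ge C\vc{\mathcal{H}}/2$, and the trivial bound $2^{2n}$ (hence the $4^{n+1}$ term after the standard multiplicative prefactor in the VC inequality) otherwise, each multiplied by $\exp(-t^2n)$.

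\textbf{Combining.} Set $\epsilon:=\sqrt{C\Vert H \Vert_{\psi_2}^2\vc{\mathcal{H}_+}\log(4/\eta)/n}$, which gives $\phi_1\le\eta/2$, and take $t$ as in the statement, so that $t^2 n\ge \log(8/\eta)+C\vc{\mathcal{H}}\log(2en/(C\vc{\mathcal{H}}))$ and hence $\phi_2\le\eta/2$. Since $t\le\alpha/4$, we have $\alpha-2t\ge\alpha/2$ and $(\alpha-2t)/3\ge\alpha/6$; monotonicity of $\gamma_\beta$ in $\beta$ (direct from its definition) then gives $\gamma_{(\alpha-2t)/3}\ge\gamma_{\alpha/6}$. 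Theorem~\ref{convergence rate} therefore delivers feasibility and the width bound $R^*(\mathcal{H})+12t/((\alpha-2t)\gamma_{(\alpha-2t)/3})+4\epsilon\le R^*(\mathcal{H})+24t/(\alpha\gamma_{\alpha/6})+4\epsilon$, which, after substituting the chosen $\epsilon$ and $t$, matches the inequality in the theorem.

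\textbf{Main obstacle.} The most delicate step is $\phi_1$: standard VC chaining bounds presume a bounded envelope, whereas here only $\Vert H \Vert_{\psi_2}<\infty$ is assumed. One must therefore invoke an Orlicz-norm version of Talagrand's inequality, or directly estimate a Dudley entropy integral in the $\psi_2$ Orlicz norm, neither of which is entirely off-the-shelf. A secondary nuisance is bookkeeping the absolute constant $C$ so that a single symbol serves in $\phi_1$, in the VC calculus underlying $\phi_2$, and in the final statement.
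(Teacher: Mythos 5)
Your proposal is correct and follows essentially the same route as the paper: a VC covering-number bound plus an Orlicz-norm maximal inequality for $\phi_1$, VC preservation under intersections of subgraph classes plus the classical VC/Sauer--Shelah deviation bound for $\phi_2$, and then Theorem \ref{convergence rate} with $\epsilon,t$ tuned so each term contributes $\eta/2$. The ``main obstacle'' you flag is resolved in the paper exactly as you anticipate, by chaining over the \emph{centered} class $\mathcal{H}_c=\{h-\mathbb{E}_{\pi_X}[h(X)]:h\in\mathcal{H}\}\subset\mathcal{H}_+$ (whose envelope is $H$, unlike $\mathcal{H}_+$ itself) and invoking the $L_1$-to-$\psi_2$ maximal inequality of Theorem 2.14.5 in \citet{van1996weak}.
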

Theorem \ref{rate:vc-subgraph} reveals that, after ignoring logarithmic factors, the sample size $n$ needed to learn a good PI with guaranteed coverage from $\widehat{\opt}(t)$ is of order $\Omega(\vc{\mathcal{H}})$, if $t$ of order $O\big(\sqrt{\vc{\mathcal{H}}/n}\big)$ is adopted. The corresponding optimality gap (in width) is $O\big(\sqrt{\vc{\mathcal{H}_+}/n}\big)$. Appendix \ref{sec:vc-subgraph} provides further discussion regarding the use of $\mathcal H_+$ versus $\mathcal H$ in the bound.


Similar sample complexities for VC-major $\mathcal{H}$ have been proposed in \citet{rosenfeld2018discriminative} for a formulation that can be viewed as the dual of \eqref{OP1}, where the coverage rate is maximized under a mean width constraint. Comparing VC-major and VC-subgraph classes, they both cover many hypothesis classes commonly used in practice, e.g., linear functions, regression trees, and neural networks that are to be discussed momentarily. Nonetheless, in general neither VC-major nor VC-subgraph implies the other, and therefore our VC-subgraph results are in parallel to those in \citet{rosenfeld2018discriminative}. More importantly, their results provide finite-sample errors only for the coverage rate, but not the interval width, whereas we characterize performances \textit{jointly} in coverage and width, thanks to our novel sensitivity measure from Theorem \ref{sensitivity bound}. Moreover, our results appear to bypass a technical issue of  \citet{rosenfeld2018discriminative}. A key result there is that for a VC-major class $\mathcal{H}$, the induced set of between-graphs $\{\{(x,z):L(x)\leq z\leq U(x)\}:L,U\in\mathcal{H}\text{ and }L\leq U\}$ is a VC class. When the class $\mathcal{H}$ is uniformly bounded from below, say $0$, then this is equivalent to $\mathcal{H}$ being VC-subgraph. However, a counter-example for this conclusion is constructed in Theorem 2.1, statement f, in \cite{dudley1987universal}. Our approach overcomes such technical ambiguities.

Our next considered hypothesis class is on Lipschitz continuity with respect to the class parameter:
\begin{theorem}[Joint coverage-width guarantee for the Lipschitz class in parameter]\label{rate:Lipschitz}
Suppose $\mathcal{H}=\{h(\cdot,\theta):\theta\in\Theta\}$ where the parameter space $\Theta$ is a bounded set in $\R^l$. If the functions are Lipschitz continuous in the parameter, i.e.,
$   \lvert h(x,\theta_1) - h(x,\theta_2)\rvert \leq \mathcal{L}(x)\Vert \theta_1-\theta_2 \Vert_2,\;\text{for all}\;\theta_1,\theta_2\in\Theta,x\in\mathcal{X}$
for some $\mathcal{L}:\mathcal{X}\to \R$ such that $\Vert \mathcal{L} \Vert_{2}<\infty$, and $H(x):=\sup_{\theta\in\Theta}\lvert h(x,\theta) - \mathbb{E}_{\pi_X}[h(X,\theta)]\rvert$ satisfies $\Vert H \Vert_{\psi_2}<\infty$, then the deviation bounds in Theorem \ref{convergence rate} satisfy

{\small
$$
\begin{aligned}
 &\phi_1(n,\epsilon,\mathcal{H})\leq 2\exp\Big( -\frac{\epsilon^2n}{C\max\{\log\frac{\diam{\Theta}\Vert \mathcal{L}\Vert_2}{\Vert H\Vert_2},1\}\Vert H\Vert^2_{\psi_2}l} \Big),\\
&\phi_2(n,t,\mathcal{H})\leq \Big(C_{\mathcal{H}}^{2C\log\log C_{\mathcal{H}}} \cdot\max\{\frac{t^2n}{2Cl},1\}\big) \Big)^{2l}\exp(-2t^2n)
\end{aligned}$$}
where $\diam{\Theta}:=\sup_{\theta_1,\theta_2\in \Theta}\Vert \theta_1-\theta_2 \Vert_2$ is the diameter of $\Theta$, $D_{Y\vert X}:=\sup_{x,y}p(y\vert x)$ is the supremum of the conditional density, $C_{\mathcal{H}}:=12\diam{\Theta}D_{Y\vert X}\Vert \mathcal{L}\Vert_1$, and $C$ is a universal constant. If Assumptions \ref{class: non-negative translation}-\ref{conditional density: positiveness} also hold, then for every $\eta \in (0,1)$, when we set $t:=\sqrt{\frac{1}{n}\log\frac{2}{\eta}+\frac{l}{n}\cdot 4C\log (C_{\mathcal{H}})\log\log (C_{\mathcal{H}})}\leq \frac{\alpha}{4}$, with probability at least $1-\eta$, we have, for every optimal solution $(\hat L_t^*,\hat U_t^*)$ of $\widehat{\opt} (t)$, that $\mathbb{P}_{\pi}(Y\in [\hat L_t^*(X),\hat U_t^*(X)])\geq 1-\alpha$ and
{\small
\begin{equation}
\begin{aligned}
&\mathbb{E}_{\pi_X}[\hat U_t^*(X)-\hat L_t^*(X)]\leq \mathcal{R}^*(\mathcal{H})\\
&+\frac{24}{\alpha \gamma_{\frac{\alpha}{6}}}\sqrt{\frac{1}{n}\log\frac{2}{\eta}+\frac{l}{n}\cdot 4C\log (C_{\mathcal{H}}) \log\log (C_{\mathcal{H}}})\\
&+\sqrt{\frac{l}{n}\cdot 16C\max\big\{\log\frac{\diam{\Theta}\Vert \mathcal{L}\Vert_2}{\Vert H\Vert_2},1\big\}\Vert H \Vert_{\psi_2}^2\log\frac{4}{\eta}}. 
\end{aligned}    
\end{equation}}
\end{theorem}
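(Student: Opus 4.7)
The plan is to derive the two deviation bounds $\phi_1$ and $\phi_2$ separately via covering-number arguments tailored to the Lipschitz-in-parameter structure, then combine them with Theorem \ref{convergence rate} by tuning $\epsilon$ and $t$ so that each deviation event holds with probability at most $\eta/2$.

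For $\phi_1$, I view $\{h(\cdot,\theta)-\mathbb{E}_{\pi_X}[h(X,\theta)]:\theta\in\Theta\}$ as a sub-Gaussian--indexed process. Construct an $\epsilon_0$-net $N_{\epsilon_0}\subset\Theta$ of cardinality at most $(3\diam{\Theta}/\epsilon_0)^l$, apply a sub-Gaussian Hoeffding inequality pointwise together with a union bound over $N_{\epsilon_0}$, and control the off-net residual $|h(x,\theta)-h(x,\theta_0)|\leq \mathcal{L}(x)\epsilon_0$ by $\Vert\mathcal{L}\Vert_2\epsilon_0$. Choosing $\epsilon_0\sim \Vert H\Vert_2/\Vert\mathcal{L}\Vert_2$ then produces the stated $\phi_1$ with the logarithmic factor $\log(\diam{\Theta}\Vert\mathcal{L}\Vert_2/\Vert H\Vert_2)$; the $\max\{\cdot,1\}$ merely guards the regime in which the net degenerates.

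The crux is $\phi_2$. Introduce the indicator class $\mathcal{C}:=\{(x,y)\mapsto \mathbbm{1}\{L(x,\theta_L)\leq y\leq U(x,\theta_U)\}:\theta_L,\theta_U\in\Theta,\ L\leq U\}$. Combining parameter Lipschitzness with the uniform density bound $D_{Y\vert X}$, an endpoint-by-endpoint estimate on the symmetric difference between two between-graphs yields
\[
\mathbb{E}_{\pi}\bigl|\mathbbm{1}\{L(X,\theta_L)\leq Y\leq U(X,\theta_U)\}-\mathbbm{1}\{L(X,\theta_L')\leq Y\leq U(X,\theta_U')\}\bigr|\leq D_{Y\vert X}\Vert\mathcal{L}\Vert_1\bigl(\Vert\theta_L-\theta_L'\Vert_2+\Vert\theta_U-\theta_U'\Vert_2\bigr),
\]
so an $\epsilon/(4 D_{Y\vert X}\Vert\mathcal{L}\Vert_1)$-net on $\Theta\times\Theta$ furnishes an $\epsilon$-$L_1(\pi)$ cover of $\mathcal{C}$ of size at most $(C\diam{\Theta}D_{Y\vert X}\Vert\mathcal{L}\Vert_1/\epsilon)^{2l}$. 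Feeding this polynomial uniform-entropy estimate into a chaining/Talagrand-style deviation inequality for bounded indicator classes (in the spirit of Chapter 2.14 of \citet{van1996weak}) yields $\phi_2$ in the stated form; the $\log\log C_{\mathcal{H}}$ factor emerges when optimizing the chaining integral over the truncation radius.

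To close the proof, I invoke Theorem \ref{convergence rate}. Solving $\phi_1(n,\epsilon,\mathcal{H})=\eta/2$ for $\epsilon$ and $\phi_2(n,t,\mathcal{H})=\eta/2$ for $t$ reproduces the displayed choices; the constraint $t\leq\alpha/4$ forces $t<\alpha/2$ and makes the sensitivity factor $\gamma_{(\alpha-2t)/3}\geq\gamma_{\alpha/6}$ available, so Theorem \ref{convergence rate} delivers coverage $\geq 1-\alpha$ with probability at least $1-\eta$, and substituting these $\epsilon,t$ into its width bound gives the advertised expression. \emph{The main obstacle} I anticipate is the $L_1(\pi)$-covering estimate for $\mathcal{C}$: the joint interaction between parameter Lipschitzness of both endpoints and the conditional-density bound $D_{Y\vert X}$ is what distinguishes this setting from a direct VC-subgraph analysis, and pinning down the exact $2l$-exponent along with the $\log\log$ refinement requires delicate chaining and truncation.
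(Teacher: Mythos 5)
Your overall architecture is the paper's: derive $\phi_1$ and $\phi_2$ separately, then solve $\phi_1=\phi_2=\eta/2$ and feed the resulting $\epsilon,t$ into Theorem \ref{convergence rate}, with $t\le\alpha/4$ ensuring $\gamma_{(\alpha-2t)/3}\ge\gamma_{\alpha/6}$. Your $\phi_2$ argument is essentially the one in the paper: the paper builds explicit brackets $I_{h(x,\theta_l^o)\pm\mathcal{L}(x)\epsilon\le y\le h(x,\theta_u^o)\mp\mathcal{L}(x)\epsilon}$ whose $L_1(\pi)$ size is $\le 4D_{Y\vert X}\Vert\mathcal{L}\Vert_1\epsilon$, obtains $N_{[]}(\epsilon,\cdot,L_1(\pi))\le(C_{\mathcal{H}}/\epsilon)^{2l}$, and invokes a Talagrand-type deviation inequality for indicator classes with polynomial bracketing entropy (the $\log\log C_{\mathcal{H}}$ comes from that lemma's validity threshold and from enlarging the constant to make the bound trivial for small $t$). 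Your symmetric-difference estimate is the covering-number version of the same computation; just note that the lemma you need is stated for \emph{bracketing} numbers, and the conversion from your $L_1$ cover to brackets is exactly the endpoint enlargement the paper performs.

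The genuine gap is in $\phi_1$. A single-scale net at radius $\epsilon_0\sim\Vert H\Vert_2/\Vert\mathcal{L}\Vert_2$ does not work: the off-net residual is not $\mathcal{L}(x)\epsilon_0$ ``bounded by $\Vert\mathcal{L}\Vert_2\epsilon_0$'' as a deterministic quantity — after averaging it contributes $\epsilon_0\big(\frac{1}{n}\sum_i\mathcal{L}(X_i)+\Vert\mathcal{L}\Vert_1\big)\approx 2\epsilon_0\Vert\mathcal{L}\Vert_1$, which at your choice of $\epsilon_0$ is of order $\Vert H\Vert_2\Vert\mathcal{L}\Vert_1/\Vert\mathcal{L}\Vert_2$, a constant that does not shrink with $n$ or $\epsilon$. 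So the argument cannot conclude $\sup_\theta\lvert\mathbb{E}_{\hat\pi_X}[h]-\mathbb{E}_{\pi_X}[h]\rvert\le\epsilon$. If you instead shrink the net to radius $\sim\epsilon/\Vert\mathcal{L}\Vert_1$ to kill the residual, the union bound costs $l\log(\diam{\Theta}\Vert\mathcal{L}\Vert_1/\epsilon)$ in the exponent, producing a bound of the form $\exp\big(l\log(\cdot/\epsilon)-cn\epsilon^2/\Vert H\Vert_{\psi_2}^2\big)$, which is not the stated $\phi_1$ (whose entropy factor $\max\{\log(\diam{\Theta}\Vert\mathcal{L}\Vert_2/\Vert H\Vert_2),1\}$ is free of $\epsilon$ and sits multiplicatively in the denominator). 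Obtaining that form genuinely requires a multi-scale argument: the paper bounds the bracketing entropy integral $J_{[]}(\mathcal{H}_c)\le C\sqrt{l\max\{\log(\diam{\Theta}\Vert\mathcal{L}\Vert_2/\Vert H\Vert_2),1\}}$ for the \emph{centered} class (whose Lipschitz coefficient is $\mathcal{L}(x)+\Vert\mathcal{L}\Vert_1$), applies the maximal inequality of van der Vaart and Wellner (Theorem 2.14.2) to control the $L_1$ norm of the supremum, and then Theorem 2.14.5 to upgrade this to a sub-Gaussian norm bound of order $\sqrt{l\max\{\log(\cdot),1\}/n}\,\Vert H\Vert_{\psi_2}$, from which the tail bound follows. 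Replace your single-scale net by this chaining/entropy-integral step and the rest of your plan goes through.
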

The Lipschitz class is beyond the scope of \citet{rosenfeld2018discriminative}. Theorem \ref{rate:Lipschitz} states that the required sample size for this class to achieve a certain learning accuracy is of order $\Omega(l\log(\Vert \mathcal{L}\Vert_1))$, which depends on the dimension of the parameter space and the Lipschitz coefficient. Correspondingly, the optimality gap on the interval width is $O(\sqrt{l\log(\Vert \mathcal{L}\Vert_2)/n})$. Here the logarithmic factor associated with the Lipschitz coefficient is significant because $\Vert \mathcal{L}\Vert_2$ can be exponential in the number of layers for deep neural networks (see more details below). Note that our Lipschitzness is in the class parameter $\theta$ rather than the input $x$. The latter has recently been used to regularize neural networks to improve generalization gaps (e.g., \cite{bartlett2017spectrally,yoshida2017spectral,gouk2018regularisation}) and robustness against adversarial attacks (e.g., \cite{cisse2017parseval,hein2017formal,tsuzuku2018lipschitz}), but can potentially lead to a loss in expressiveness of the network \citep{huster2018limitations,anil2019sorting} because of the size restriction on network weights. Our result does not restrict the sensitivity of the network in the inputs, and in turn its expressiveness.

To further distinguish the technical novelty of Theorem \ref{rate:vc-subgraph} and Theorem \ref{rate:Lipschitz}, note that established results on uniform convergence bounds (UCBs) assuming VC or Lipschitzness on the hypothesis class $\mathcal{H}$ can only handle the objective (width)
on $\mathcal{H}$, but not the constraint (coverage) on a different hypothesis class of indicator functions on the joint $(x,y)$ space. Our analysis explicitly controls this constraint complexity to achieve joint optimality-feasibility guarantees. The proof of Theorem \ref{rate:vc-subgraph}  involves bounding the VC-dimension of the indicator class through intersections of VC set classes, while that of Theorem \ref{rate:Lipschitz} requires directly bounding the bracketing number and evaluating the entropy integral. Moreover, even for the objective, it appears that the explicit UCBs with potentially unbounded VC or Lipschitz classes considered here are new to the best of our knowledge.

We showcase the application of Theorems \ref{rate:vc-subgraph} and \ref{rate:Lipschitz} in two important classes: Regression trees and neural networks. Appendix \ref{sec:linear} presents an additional class of linear hypothesis.



\textbf{Regression Tree. }
Suppose we build two binary trees to construct $L$, $U$ respectively. The tree has at most $S+1$ terminal nodes, and each non-terminal node is split according to $x_{i^*}\leq q$ or $x_{i^*}> q$ for some $i^*\in \{1,\ldots,d\}$ and $q\in \R$. In other words, at most $S$ splits are allowed. A regression tree constructed this way is therefore a piece-wise constant function: $h(x)=\sum_{s=1}^{S+1}c_sI_{x\in R_s}$, where each $c_s\ge 0$, $\cup_{s=1}^{S+1}R_s=\mathcal{X}$, and each $R_s$ is a hyper-rectangle in $\R^d$ that takes the form
\begin{equation*}
    R_s=\left\{x\in\mathcal{X}:
    \begin{array}{l}
       x_{i_{k,1}}\leq q_{i_{k,1}}\;\text{for}\;k=1,\ldots,S_1\\
       x_{i_{k,2}}> q_{i_{k,2}}\;\text{for}\;k=1,\ldots,S_2  \\
       \text{each }q_{i_{k,1}},q_{i_{k,2}}\in [-\infty,+\infty]\\
        0\leq S_1+S_2\leq S
    \end{array}\right\}.
\end{equation*}
Let hypothesis class $\mathcal{H}$ be the collection of all such regression trees. We use Theorem \ref{rate:vc-subgraph}. Note that the augmented class $\mathcal{H}_+=\mathcal{H}$. Since the regression tree takes constant values on each rectangle, its subgraph in the space $\mathcal{X}\times \R$ is a union of hyper-rectangles, i.e., 
   $ \{(x,z)\in\mathcal{X}\times \R:h(x)>z\}=\cup_{s=1}^{S+1}\big(R_s\cup (-\infty,c_s)\big)$.
Note that each $R_s\cup (-\infty,c_s)$ is an intersection of at most $S+1$ axis-parallel cuts in $\R^{d+1}$, and the set of all axis-parallel cuts is shown to have a VC dimension $O(\log(d))$ \citep{gey2018vapnik}. $\vc{\mathcal{H}}$ can therefore be obtained by applying VC bounds for unions and intersections of VC classes of sets \citep{van2009note}:
\begin{theorem}[Regression tree]\label{vc bound:regression tree}
The class $\mathcal{H}$ of regression trees described as above is the same class as its augmentation $\mathcal{H}_+$, and is VC-subgraph with $\vc{\mathcal{H}} = \vc{\mathcal{H}_+}\leq CS^2(\log(S))^2\log(d)$ for some universal constant $C$. If the trees are constructed in such a way that $\max_x h(x) - \min_x h(x)\leq M<\infty$, then Theorem \ref{rate:vc-subgraph} can be applied with $\Vert H\Vert_{\psi_2}\leq C'M$ for another universal constant $C'$.
\end{theorem}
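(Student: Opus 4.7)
The plan is to verify the three claims of Theorem \ref{vc bound:regression tree} in sequence: the augmentation identity $\mathcal{H}=\mathcal{H}_+$, the VC-subgraph bound, and the sub-Gaussian norm bound on $H$.

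For $\mathcal{H}=\mathcal{H}_+$: Given any tree $h(x)=\sum_{s=1}^{S+1} c_s I_{x\in R_s}$ and any $c\in\R$, the function $h+c$ is the tree $\sum_{s=1}^{S+1}(c_s+c)I_{x\in R_s}$ with the same partition $\{R_s\}$ and hence lies in $\mathcal{H}$. (Although the problem description writes $c_s\ge 0$, this non-negativity is inessential for set-theoretic purposes since adding a common shift does not alter the set of subgraphs once we also consider the intercept; alternatively, the class is invariant under translation by any $c\in\R$ by enlarging the range of leaf values, which is precisely what Assumption \ref{class: non-negative translation} requires.) Thus $\mathcal{H}=\mathcal{H}_+$.

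For the VC-subgraph bound, I would start from the decomposition stated just before the theorem,
\begin{equation*}
\{(x,z)\in\mathcal{X}\times\R:h(x)>z\}=\bigcup_{s=1}^{S+1}\bigl(R_s\times(-\infty,c_s)\bigr),
\end{equation*}
and observe that each $R_s\times(-\infty,c_s)$ is an intersection of at most $S+1$ axis-parallel halfspaces in $\R^{d+1}$: at most $S$ halfspaces in the $x$-coordinates defining $R_s$ (since the tree has at most $S$ splits), plus one halfspace $\{z<c_s\}$ in the extra coordinate. Let $\mathcal{A}$ denote the class of axis-parallel halfspaces in $\R^{d+1}$; \citet{gey2018vapnik} gives $\vc{\mathcal{A}}=O(\log d)$. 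I then apply the VC-dimension composition bounds for intersections and unions of set classes from \citet{van2009note}: if $\mathcal{C}$ has VC dimension $V$, the class of intersections of at most $k$ sets from $\mathcal{C}$ has VC dimension $O(Vk\log k)$, and likewise for unions. Applied to intersections of $S+1$ sets from $\mathcal{A}$ this gives a VC dimension $O(S\log S\cdot\log d)$ for the class of hyper-rectangular ``leaf cylinders'' $R_s\times(-\infty,c_s)$. Applying the union bound once more with $k=S+1$ yields
\begin{equation*}
\vc{\mathcal{S}_\mathcal{H}}=O\bigl(S\log S\cdot S\log S\cdot \log d\bigr)=O\bigl(S^2(\log S)^2\log d\bigr),
\end{equation*}
which is the claimed $CS^2(\log S)^2\log d$ bound, and since $\mathcal{H}=\mathcal{H}_+$ the same bound applies to $\vc{\mathcal{H}_+}$.

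For the sub-Gaussian norm bound under the range restriction $\max_x h(x)-\min_x h(x)\le M$, I would note that for any $h\in\mathcal{H}$, $\mathbb{E}_{\pi_X}[h(X)]$ lies between the min and max of $h$, so $|h(x)-\mathbb{E}_{\pi_X}[h(X)]|\le M$ for every $x$, and hence $H(x)\le M$ uniformly. A bounded random variable with $|H|\le M$ satisfies $\mathbb{E}_{\pi_X}[\exp(H(X)^2/c^2)]\le \exp(M^2/c^2)\le 2$ whenever $c\ge M/\sqrt{\log 2}$, so $\Vert H\Vert_{\psi_2}\le M/\sqrt{\log 2}\le C'M$. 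Plugging these two bounds into Theorem \ref{rate:vc-subgraph} completes the application.

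The main technical burden is the bookkeeping for the VC-dimension composition: one must correctly count that each leaf cylinder is carved out by at most $S+1$ axis-parallel cuts (one per tree split plus the $z$-cut) and that the full subgraph is a union of $S+1$ such cylinders, and then invoke the $O(Vk\log k)$ bound twice without losing a factor. Everything else -- the translation invariance for $\mathcal{H}=\mathcal{H}_+$ and the boundedness-to-sub-Gaussianity step -- is routine.
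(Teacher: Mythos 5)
Your proof is correct and takes essentially the same route as the paper's: decompose the subgraph as a union of at most $S+1$ leaf cylinders $R_s\times(-\infty,c_s)$, each an intersection of at most $S+1$ axis-parallel cuts in $\R^{d+1}$, invoke the $O(\log d)$ VC bound of \citet{gey2018vapnik} for axis-parallel cuts, apply the union/intersection composition bound of \citet{van2009note} twice to get $O(S^2(\log S)^2\log d)$, and finish with translation invariance for $\mathcal H=\mathcal H_+$ and boundedness-implies-sub-Gaussianity for $\Vert H\Vert_{\psi_2}\leq C'M$. The only detail the paper spells out that you elide is that all four orientations of cuts ($\leq$, $<$, $\geq$, $>$) share the $O(\log d)$ bound — handled there via VC preservation under complements and coordinate reflections — which is routine and does not affect the validity of your argument.
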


We note that, although upper bounds of VC dimension have been available for classification trees with binary features, and bounds for classification trees with continuous features appeared very recently \citep{leboeuf2020decision}, here we consider continuous-valued regression trees with continuous features whose VC dimensions have not been addressed by previous works. Our proof of Theorem \ref{vc bound:regression tree} is based on recently developed VC results for axis-parallel cuts \citep{gey2018vapnik}.

\textbf{Neural Network. }
Consider the class $\mathcal{H}$ of feedforward neural networks with a fixed architecture and fixed activation functions, indexed by the weights and biases. Suppose the network has $S-1$ hidden layers, one input layer, and two output units. Denote by $W$ the total number of parameters for weights and biases, and by $U$ the total number of computation units (neurons). Let $O_s\in \R^{n_s},s=0,\ldots,S$ be the output of the $s$-th layer. Then $O_s=\phi_s(W_sO_{s-1}+b_s)$ where $W_s\in \R^{n_s\times n_{s-1}}$ is a matrix of weights, $b_s \in \R^{n_s}$ is a vector of biases, and $\phi_s=(\phi_{s,1},\ldots,\phi_{s,n_s})$ is a vector of activation functions. $n_s$ denotes the number of neurons in the $s$-th layer. In particular $O_0 = x$ is the input vector and $O_S=(L(x),U(x))$ is the final output vector. We aim to characterize the class of one output unit $L(x)$ since the class of $U(x)$ is the same.


We utilize Theorem \ref{rate:Lipschitz}. This approach can advantageously handle activation functions beyond sigmoid and piece-wise polynomial (An alternate approach, which we present in Appendix \ref{sec:NN}, uses Theorem \ref{rate:vc-subgraph} and Pollard's pseudo-dimension \citep{pollard2012convergence} that applies to sigmoid and piece-wise polynomial). Assume that each activation function $\phi_{s,k}$ is globally $M$-Lipschitz so that for some constant $M_0$ the growth condition $\lvert \phi_{s,k}(z) \rvert\leq M_0+M\lvert z\rvert$ holds for all $z\in \R$, and that each weight or bias parameter is restricted to the bounded interval $[-B,B]$ for some $B>0$. To apply Theorem \ref{rate:Lipschitz}, we show the following Lipschitz property by a backpropagation-like calculation:
\begin{theorem}[Neural network]\label{Lipschitz property for neural network}
The neural network class
$\mathcal{H}=\{h(\cdot,\theta):\theta \in [-B,B]^{W}\}$ defined above satisfies the Lipschitzness condition with
  $  \mathcal{L}(x)=C\sqrt{S}(BM\sqrt{W})^S(\Vert x\Vert_2+M_0\sqrt{U}+BM\sqrt{W})$
where $C$ is a universal constant. Therefore Theorem \ref{rate:Lipschitz} can be applied with $l=W$, $\diam{\Theta} = 2B\sqrt{W}$, and $\Vert \mathcal{L}\Vert_2 \leq C\sqrt{S}(BM\sqrt{W})^S(\Vert\Vert X\Vert_2\Vert_2+M_0\sqrt{U}+BM\sqrt{W})$.
\end{theorem}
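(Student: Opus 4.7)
}
The plan is to execute a layerwise induction (a ``forward pass'' followed by a ``backward pass'') that tracks two quantities: the size $\|O_s\|_2$ of the hidden representations under one parameter choice, and the sensitivity $\|O_s(\theta_1)-O_s(\theta_2)\|_2$ of those representations to a perturbation of the parameters. Fix $\theta_1,\theta_2\in[-B,B]^W$, and write $O_s^{(i)}$, $W_s^{(i)}$, $b_s^{(i)}$ for the layer-$s$ output, weight matrix, and bias under $\theta_i$. The target is to bound $|h(x,\theta_1)-h(x,\theta_2)|\le \|O_S^{(1)}-O_S^{(2)}\|_2$ by $\mathcal{L}(x)\|\theta_1-\theta_2\|_2$.

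First (forward pass) I would use the growth condition $|\phi_{s,k}(z)|\le M_0+M|z|$ coordinate-wise to get
$\|O_s^{(1)}\|_2\le M_0\sqrt{n_s}+M\|W_s^{(1)}\|_{\mathrm{op}}\|O_{s-1}^{(1)}\|_2+M\|b_s^{(1)}\|_2$,
and then bound $\|W_s^{(1)}\|_{\mathrm{op}}\le \|W_s^{(1)}\|_F\le B\sqrt{n_sn_{s-1}}\le B\sqrt{W}$ and $\|b_s^{(1)}\|_2\le B\sqrt{n_s}\le B\sqrt{U}$. Unrolling this linear recursion gives a bound of the schematic form $\|O_s^{(1)}\|_2\le (BM\sqrt W)^s\|x\|_2+\sum_{j<s}(BM\sqrt W)^j(M_0\sqrt U+BM\sqrt U)$, which is $O((BM\sqrt W)^s(\|x\|_2+M_0\sqrt U+BM\sqrt W))$.

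Second (backward/sensitivity pass) I would add and subtract to write
\[
O_s^{(1)}-O_s^{(2)}=\phi_s(W_s^{(1)}O_{s-1}^{(1)}+b_s^{(1)})-\phi_s(W_s^{(2)}O_{s-1}^{(2)}+b_s^{(2)}),
\]
apply the $M$-Lipschitzness of $\phi_s$, and split the argument difference into three pieces:
\[
(W_s^{(1)}-W_s^{(2)})O_{s-1}^{(1)}\ +\ W_s^{(2)}(O_{s-1}^{(1)}-O_{s-1}^{(2)})\ +\ (b_s^{(1)}-b_s^{(2)}).
\]
Using $\|W_s^{(2)}\|_{\mathrm{op}}\le B\sqrt W$ and bounding $\|(W_s^{(1)}-W_s^{(2)})O_{s-1}^{(1)}\|_2\le \|W_s^{(1)}-W_s^{(2)}\|_F\|O_{s-1}^{(1)}\|_2$ yields the recursion
\[
\|O_s^{(1)}-O_s^{(2)}\|_2\le BM\sqrt W\cdot \|O_{s-1}^{(1)}-O_{s-1}^{(2)}\|_2+M\bigl(\|W_s^{(1)}-W_s^{(2)}\|_F\|O_{s-1}^{(1)}\|_2+\|b_s^{(1)}-b_s^{(2)}\|_2\bigr).
\]
Solving this recursion backwards from $s=S$ down to $s=1$ produces a sum over layers of a backpropagation-type expression, where the layer-$s$ term is weighted by the product $(BM\sqrt W)^{S-s}$ coming from the pass-through factors of later layers.

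Finally I would collapse this per-layer sum using Cauchy--Schwarz. Plugging in the forward-pass bound on $\|O_{s-1}^{(1)}\|_2$, each per-layer multiplier takes the schematic form $M(BM\sqrt W)^{S-1}(\|x\|_2+M_0\sqrt U+BM\sqrt W)$, uniformly in $s$. Cauchy--Schwarz against the concatenated parameter-difference vector then gives a $\sqrt{2S}$ from summing $S$ weight-plus-bias contributions, while $\sqrt{\sum_s\|W_s^{(1)}-W_s^{(2)}\|_F^2+\sum_s\|b_s^{(1)}-b_s^{(2)}\|_2^2}=\|\theta_1-\theta_2\|_2$, producing the claimed $\mathcal L(x)=C\sqrt S(BM\sqrt W)^S(\|x\|_2+M_0\sqrt U+BM\sqrt W)$. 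Verifying $\mathrm{diam}(\Theta)=2B\sqrt W$ and $\|\mathcal L\|_2\le C\sqrt S(BM\sqrt W)^S(\|\|X\|_2\|_2+M_0\sqrt U+BM\sqrt W)$ is then immediate by Minkowski, and Theorem \ref{rate:Lipschitz} applies with $l=W$.

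The main obstacle is bookkeeping: I need to be careful that all the ``$\sqrt{W}$ versus $\sqrt{n_sn_{s-1}}$'' upper bounds are loose enough to yield a \emph{single} universal constant $C$ across layers while still being tight enough to preserve the exponent $S$ and the $\sqrt S$ prefactor, and to make sure the error from telescoping through $\phi_s$ never produces a stray factor of $M^S$ without a matching $B^S$ (the Lipschitzness of $\phi_s$ supplies exactly one $M$ per layer, which must be absorbed into $BM\sqrt W$). The rest is a routine recursion.
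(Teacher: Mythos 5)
Your proposal is correct and follows essentially the same route as the paper's proof: a forward-pass recursion bounding $\Vert O_{s}\Vert_2$ by $(BM\sqrt{W})^{s}(\Vert x\Vert_2+M_0\sqrt{U}+BM\sqrt{W})$, a per-layer sensitivity decomposition whose unrolling yields exactly the paper's telescoping sum with pass-through factors $(BM\sqrt{W})^{S-s}$, and a final Cauchy--Schwarz step converting $\sum_s\Vert[\Delta W_s,\Delta b_s]\Vert_F$ into $\sqrt{S}\,\Vert\theta_1-\theta_2\Vert_2$. The only difference is organizational (a recursion on $\Vert O_s^{(1)}-O_s^{(2)}\Vert_2$ versus the paper's explicit hybrid-network telescoping), which produces the identical layerwise bound.
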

We make two remarks. First, the sample size required in Theorem \ref{rate:Lipschitz} to achieve a certain learning accuracy is of order $l\log(C_{\mathcal{H}})\log\log(C_{\mathcal{H}})$. Applying the Lipschitz constants from Theorem \ref{Lipschitz property for neural network} to evaluate the $C_{\mathcal{H}}$ reveals a required sample size of order $WS$, up to logarithmic factors, for neural networks. Second, the size restriction $B$ on the weights and biases enters into the error bounds in a logarithmic manner. Therefore $B$ is allowed to be (exponentially) large and exerts little impact on the training of the network. 


\section{Data-Driven Coverage Calibration}\label{sec:calibration}
In this section, we propose a general-purpose PI calibration method to balance coverage and width performances in practice. On a high level, our proposal selects the margin $t$ in \eqref{OP2} in a data-driven manner to guarantee a coverage maintenance.

More precisely, we recall that standard practice in validation requires 1) training models multiple times each with different hyperparameters, and then 2) evaluating the trained models on a validation set to select the optimal one. Our proposal aims to select the optimal PI model in 2). In the following, we thus assume multiple ``candidate'' models are already available.


Algorithm \ref{calibration:normalized} shows our procedure, which simultaneously outputs $K$ PIs, each at a given prediction level $1-\alpha_k$ ($k=1,...,K$). It starts from a candidate set of PI models, called $\{\PI_j(x)=[L_j(x),U_j(x)]:j=1,\ldots,m\}$. These models can be obtained from setting $m$ different values at a ``tradeoff" parameter (e.g., the dual multiplier in a Lagrangian formulation of the empirical constrained optimization; see Appendix \ref{sec:calibrate NN} for a neural net example), but can also be a more general collection of PI models. We then use a validation data set $\mathcal{D}_v:=\{(X'_i,Y'_i):i=1,\ldots,n_v\}$, independent of the PI training, to check the feasibility of each candidate PI using the criterion $\hat{\CR}(\PI_j):=(1/n_v)\sum_{i=1}^{n_v}I_{Y'_i\in \PI_j(X'_i)}\geq 1-\alpha_k+\epsilon_j$ for some selected margins $\epsilon_j$. 

\begin{algorithm}[h]
\caption{Normalized PI Calibration}
\label{calibration:normalized}
\DontPrintSemicolon
\SetKwInOut{Input}{Input}\SetKwInOut{Output}{Output}

\textbf{Input:} Candidate PIs $\{\PI_j=[L_j,U_j]:j=1,\ldots,m\}$, target coverage rates $\{1-\alpha_k\in(0,1):k=1,\ldots,K\}$, calibration data $\mathcal{D}_v=\{(X'_i,Y'_i):i=1,\ldots,n_v\}$, and confidence level $1-\beta\in(0,1)$.
\BlankLine
\textbf{Procedure:}\;

\textbf{1.}~For each $\PI_j$, $j=1,\ldots,m$, compute its empirical coverage rate on $\mathcal{D}_v$, $\hat{\CR}(\PI_j):=\frac{1}{n_v}\sum_{i=1}^{n_v}I_{Y'_i\in \PI_j(X'_i)}$. Compute the sample covariance matrix $\hat \Sigma\in \R^{m\times m}$ with $\hat \Sigma_{j_1,j_2} = \frac{1}{n_v}\sum_{i=1}^{n_v}\big(I_{Y'_i\in \PI_{j_1}(X'_i)}-\hat{\CR}(\PI_{j_1})\big)\big(I_{Y'_i\in \PI_{j_2}(X'_i)}-\hat{\CR}(\PI_{j_2})\big)$.\;

\textbf{2.}~Let $\hat{\sigma}^2_j=\hat\Sigma_{j,j}$, and compute $q_{1-\beta}$, the $(1-\beta)$-quantile of $\max_{1\leq j\leq m}\{Z_j/\hat{\sigma}_j:\hat{\sigma}_j>0\}$ where $(Z_1,\ldots,Z_m)$ is a multivariate Gaussian with mean zero and covariance $\hat \Sigma$.\;

\textbf{3.}~For each coverage rate $1-\alpha_k$, $k=1,\ldots,K$, compute
\begin{equation*}
\begin{aligned}
&j^*_{1-\alpha_k} = \argmin{1\leq j\leq m}\Big\{\frac{1}{n+n_v}\Big(\sum_{i=1}^n\lvert \PI_j(X_i)\rvert+\sum_{i=1}^{n_v}\lvert \PI_j(X'_i)\rvert\Big)\\
&:\hat{\CR}(\PI_j)\geq 1-\alpha_k + \frac{ q_{1-\beta}\hat{\sigma}_j}{\sqrt{n_v}}\Big\}    
\end{aligned}
\end{equation*}
where $\lvert \PI_j(\cdot) \rvert:=U_j(\cdot)-L_j(\cdot)$ is the width, and $\{X_i\}_{i=1}^n$ is the training data set.\;

\BlankLine
\Output{$\PI_{j^*_{1-\alpha_k}}$ for $k=1,\ldots,K$.}
\end{algorithm}

The key of our procedure is to tune $\epsilon_j$ based on a uniform central limit theorem (CLT) that captures the overall errors incurred in the empirical coverage rates. Denoting by $\CR(\PI_j):=\mathbb{P}_{\pi}(Y\in \PI_j(X))$ the true coverage rate of $\PI_j$, this CLT implies that, setting $q_{1-\beta}=(1-\beta)$-quantile of $\max_j Z_j/\hat{\sigma}_j$ for some properly chosen Gaussian vector $(Z_j)_{j=1,\ldots,m}$, we have
$\CR(\PI_j)\geq \hat{\CR}(\PI_j)-q_{1-\beta}\hat\sigma_j/\sqrt{n_v}$
for all $j=1,\ldots,m$ uniformly with probability $\approx 1-\beta$.    
The uniformity over $j$ ensures the solution in Step 3, which optimizes the interval width, indeed attains feasibility (target coverage) with $1-\beta$ confidence. In this ``meta-optimization", we pool the training and validation sets together in the objective to improve the width performance. We have the following finite-sample guarantee:
\begin{theorem}\label{feasibility:normalized validator_simple}
Let $1-\underline{\alpha}:=\max_{j=1,\ldots,m}\CR(\PI_j)$, $1-\alpha_{\min}:=1-\min_{k=1,\ldots,K}\alpha_k$, and $\tilde{\alpha}:=\min\{\alpha_{\min},1-\max_{k=1,\ldots,K}\alpha_k\}$. For every collection of interval models $\{\PI_j:1\leq j\leq m\}$, every $n_v$, and $\beta\in(0,\frac{1}{2})$, the PIs output by Algorithm \ref{calibration:normalized} satisfy
\begin{eqnarray}
\notag &&\mathbb P_{\mathcal D_v}(\CR(\PI_{j^*_{1-\alpha_k}})\geq 1-\alpha_k\; \text{for all}\; k=1,\ldots,K)\\
\notag &\geq& 1-\beta-C_1\bigg(\Big(\frac{\log^7(mn_v)}{n_v\tilde{\alpha}}\Big)^{\frac{1}{6}}+\\
&&\exp\big(-C_2n_v\min\big\{\epsilon,\frac{\epsilon^2}{\underline{\alpha}(1-\underline{\alpha})}\big\}\big)\bigg)\label{finite sample error:normalized_simple}
\end{eqnarray}
with $\epsilon=\max\big\{\alpha_{\min}-\underline{\alpha}-C_1\big((\underline{\alpha}(1-\underline{\alpha})/n_v+\log(n_v\alpha_{\min})/n_v^2)\log( m/\beta)\big)^{1/2},0\big\}$, where $\mathbb P_{\mathcal D_v}$ denotes the probability with respect to the calibration data, and $C_1,C_2$ are universal constants.
\end{theorem}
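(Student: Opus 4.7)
The plan is to decompose the failure event into two pieces: a uniform coverage event $B$ on which the Step 3 constraint deterministically implies the claimed true coverage for every selected model, and an existence event $A$ on which the feasible set of Step 3 is nonempty for every target level $1-\alpha_k$. I would bound $\mathbb P(B^c)$ using a self-normalized high-dimensional Berry-Esseen theorem and $\mathbb P(A^c)$ using Bernstein's inequality for the oracle-best model; a union bound then yields \eqref{finite sample error:normalized_simple}.

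The deterministic reduction is immediate: set
\[
B:=\Big\{\sqrt{n_v}\big(\hat{\CR}(\PI_j)-\CR(\PI_j)\big)/\hat\sigma_j\le q_{1-\beta}\text{ for every }j\text{ with }\hat\sigma_j>0\Big\}.
\]
On $B$, the Step 3 constraint for any $j^*=j^*_{1-\alpha_k}$ rearranges to $\CR(\PI_{j^*})\ge\hat{\CR}(\PI_{j^*})-q_{1-\beta}\hat\sigma_{j^*}/\sqrt{n_v}\ge 1-\alpha_k$; the degenerate case $\hat\sigma_{j^*}=0$ is handled directly since the coverage indicators are then constant on $\mathcal D_v$. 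To bound $\mathbb P(B^c)$, let $S_j:=(1/\sqrt{n_v})\sum_i(I_{Y'_i\in\PI_j(X'_i)}-\CR(\PI_j))$ and $\sigma_j^2:=\CR(\PI_j)(1-\CR(\PI_j))$. The Chernozhukov-Chetverikov-Kato high-dimensional CLT, applied to the centered Bernoulli vector, gives a Kolmogorov-distance bound of order $(\log^7(mn_v)/(n_v\tilde\alpha))^{1/6}$ between the laws of $\max_j S_j/\sigma_j$ and $\max_j Z^0_j/\sigma_j$ with $Z^0\sim N(0,\Sigma)$, where the $\tilde\alpha^{-1}$ arises from the lower bound on $\sigma_j^2$ needed for the Berry-Esseen third-moment term when the relevant coverages lie in $[1-\alpha_{\min},1-\tilde\alpha]$. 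To pass to $q_{1-\beta}$, which is computed from $\hat\Sigma$, I would combine a Bernstein bound on $\max_{j_1,j_2}|\hat\Sigma_{j_1,j_2}-\Sigma_{j_1,j_2}|$ with the Gaussian comparison and anti-concentration inequalities of \citet{chernozhukov2017central} to show that the quantiles of $\max_j Z_j/\sigma_j$ under $\Sigma$ and of $\max_j Z_j/\hat\sigma_j$ under $\hat\Sigma$ differ by a vanishing amount, producing $\mathbb P(B^c)\le\beta+C_1(\log^7(mn_v)/(n_v\tilde\alpha))^{1/6}$.

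For $A$, let $j_0$ attain $\CR(\PI_{j_0})=1-\underline\alpha$ and observe that feasibility of $j_0$ at the tightest target $\alpha_{\min}$ implies feasibility for every $\alpha_k$. The required inequality $\hat{\CR}(\PI_{j_0})\ge 1-\alpha_{\min}+q_{1-\beta}\hat\sigma_{j_0}/\sqrt{n_v}$ holds once $\hat{\CR}(\PI_{j_0})\ge 1-\underline\alpha-\epsilon$ for $\epsilon$ as in the statement: the subtracted correction is obtained by controlling $q_{1-\beta}\le C\sqrt{\log(m/\beta)}$ via a Gaussian maximal inequality and bounding $\hat\sigma_{j_0}^2\le\underline\alpha(1-\underline\alpha)+O(\text{Bernstein slack})$, which is exactly the square-root term defining $\epsilon$. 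A Bernstein inequality applied to the Bernoulli sum $\hat{\CR}(\PI_{j_0})$ with variance $\underline\alpha(1-\underline\alpha)$ then yields $\mathbb P(A^c)\le C_1\exp(-C_2 n_v\min\{\epsilon,\epsilon^2/(\underline\alpha(1-\underline\alpha))\})$.

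The technically delicate step is the self-normalized Gaussian approximation: the Chernozhukov-Chetverikov-Kato CLT is phrased for unnormalized maxima, whereas $q_{1-\beta}$ is a quantile of a maximum normalized by the data-driven $\hat\sigma_j$ under the data-driven covariance $\hat\Sigma$. Bridging these requires simultaneously invoking the Berry-Esseen bound, Bernstein-type deviations for $\hat\sigma_j^2-\sigma_j^2$, and Gaussian anti-concentration, while carefully tracking how the potentially small Bernoulli variances (of order $\tilde\alpha$) enter the third-moment term; this bookkeeping is what generates the $(\log^7(mn_v)/(n_v\tilde\alpha))^{1/6}$ factor in the final bound.
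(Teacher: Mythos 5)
Your overall architecture (an existence event handled by Bennett/Bernstein for the oracle model $j_0$, plus a uniform event certified by the normalized high-dimensional Berry--Esseen theorem, with $q_{1-\beta}\leq C\sqrt{\log(m/\beta)}$ and an empirical-Bernstein control of $\hat\sigma_{j_0}$ producing exactly the $\log(n_v\alpha_{\min})/n_v^2$ term in $\epsilon$) matches the paper's proof. But there is a genuine gap in how you define and bound the uniform event $B$. You take $B$ over \emph{all} $j$ with $\hat\sigma_j>0$ and claim $\mathbb P(B^c)\leq\beta+C_1(\log^7(mn_v)/(n_v\tilde\alpha))^{1/6}$ from the self-normalized CLT. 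That CLT requires a uniform lower bound on the population variances $\sigma_j^2=\CR(\PI_j)(1-\CR(\PI_j))$, and the theorem is asserted for \emph{every} collection of interval models, so some $\PI_j$ may have true coverage arbitrarily close to $0$ or $1$; for those the third-moment/variance constants blow up and the stated rate does not follow. You gesture at this by saying the $\tilde\alpha^{-1}$ comes from coverages "in $[1-\alpha_{\min},1-\tilde\alpha]$," but you never justify restricting $B$ to that range. Your dismissal of the degenerate case $\hat\sigma_{j^*}=0$ is also wrong: constant empirical indicators on $\mathcal D_v$ do not determine the true coverage, so a model with small true coverage can still have $\hat\CR=1$, pass the Step 3 constraint vacuously, and be selected.

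The paper resolves this with a three-event decomposition. The Gaussian-approximation event $E_2$ is restricted to indices with $\CR(\PI_j)\in(\tilde\alpha/2,\,1-\alpha_{\min})$, where the Bernoulli variances are bounded below by roughly $\tilde\alpha/4$; a stochastic-dominance argument shows the quantile of the restricted Gaussian maximum is at most $q_{1-\beta}$, so the full-set margin still works. Models with $\CR(\PI_j)\geq 1-\alpha_{\min}$ need no control because selecting them is harmless, and models with $\CR(\PI_j)\leq\tilde\alpha/2$ are handled by a separate event $E_3$ (union bound plus Bennett's inequality) showing their empirical coverage stays below the selection threshold, with an error $m\exp(-C n_v\tilde\alpha)$ that is then shown to be dominated by the $(\log^7(mn_v)/(n_v\tilde\alpha))^{1/6}$ term. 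To repair your proof you would need to add this partition of the index set and the accompanying concentration argument for the low-coverage models.
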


The most important implication of Theorem \ref{feasibility:normalized validator_simple} is that the finite-sample deterioration in the confidence level using our procedure depends only \emph{logarithmically} on $m$ and is \emph{independent} of $K$. These enable both the use of many candidate models and the output of many PIs at different prediction levels. The latter implies that our algorithm can advantageously generate all PIs for arbitrarily many target levels simultaneously with a single validation exercise, thus is computationally cheap and comprises a strength.  We provide further interpretation on the error terms of \eqref{finite sample error:normalized_simple} in Appendix \ref{sec:calibration discussion}. Besides coverage attainment guarantee in Theorem \ref{feasibility:normalized validator_simple}, our calibration procedure also possesses guaranteed performance regarding the optimality of the width, provided that only the calibration data are used to assess the width in Step 3 of Algorithm \ref{calibration:normalized}. More details can be found in Appendix \ref{sec:calibration discussion}.

In addition, we provide and compare an alternate calibration scheme, viewed as an ``unnormalized" (as opposed to ``normalized") version of Algorithm \ref{calibration:normalized} when handling the standard error $\hat\sigma_j$, in Appendix \ref{sec:alternate}.


We discuss Algorithm \ref{calibration:normalized} in relation to a naive, but natural approach that simply selects the model with the shortest average interval width, among candidate models with empirical coverage rates on the validation dataset reaching the target levels (i.e., $t=0$ in \eqref{OP2}). This latter approach is also known as the PAV validation scheme in \citet{kivaranovic2020adaptive} (which we call NNVA in Section \ref{sec:experiments}). Our algorithms improve PAV in two aspects. First, our proposal guarantees with high probability that the target level will be achieved by the test coverage thanks to a corrective margin, while PAV does not offer such a guarantee and tends to fall short. Second, our proposal enjoys a higher statistical power than PAV in that unlike PAV whose analysis is based on concentration bounds, Algorithm \ref{calibration:normalized} is analyzed via an asymptotically tight joint CLT. These guarantees build on recent high-dimensional Berry-Esseen bounds \citep{chernozhukov2017central} (which notably does not require functional complexity measures but only the geometry of ``hit sets''). Moreover, we will observe in the experiments in Section \ref{sec:experiments} that our proposal empirically performs better than PAV.

\section{Experiments}\label{sec:experiments}

\begin{table*}[h]
\small
  \caption{Single PI at the 95\% prediction level. The best results are in \textbf{bold}.}
  \label{single-level}
  \centering
  \renewcommand{\arraystretch}{1}
  \setlength{\tabcolsep}{2pt}
  \begin{tabular}{|c|cc|cc|cc|cc|cc|cc|cc|cc|}
    \hline
&\multicolumn{2}{c|}{Synthetic1}& \multicolumn{2}{c|}{Synthetic2}& \multicolumn{2}{c|}{Synthetic3} &\multicolumn{2}{c|}{Boston} &\multicolumn{2}{c|}{Concrete}&\multicolumn{2}{c|}{Energy}&\multicolumn{2}{c|}{Wine} & \multicolumn{2}{c|}{Yacht}\\
Methods &EP & IW& EP & IW& EP & IW &EP & IW &EP & IW & EP & IW & EP & IW & EP & IW\\
\hline

QRF & 1.00 & 3.122 & 1.00 & 3.667 & 1.00  & 3.609 & 0.46 & 2.085 & 0.72 & 1.995 & 0.78 & 0.664 & 0.00 & 2.671 & 0.22 & 1.110\\
 
CV+ & 1.00 & 0.302 & 1.00 & 2.039 & 1.00 & 3.523 & 0.96 & \textbf{1.538} & 0.88 & 1.413 & 0.98 & \textbf{0.500} & 1.00 & 4.359 & 0.92 & 0.339\\

SCQR & 0.90 & 2.264 & 0.78 & 2.863 & 0.98 & 2.804 & 0.66 & 2.309 & 0.62 & 2.289 & 0.68 & 0.837 & 0.30 & 2.810 & 0.86 & 1.681\\

SVMQR & 0.00 & 0.256 & 0.00 & 2.351 & 0.00 & 1.855 & 0.06 & 1.340 & 0.10 & 1.376 & 0.18 & 0.411 & 0.48 & 2.975 & 0.30 & 0.483\\

SCL & 0.98 & 0.313 & 0.70 & 2.211 & 1.00 & 3.754 & 0.88 & 2.053 & 0.88 & 1.630 & 0.74 & 0.769 & 0.80 & 4.437 & 0.92 & 0.906\\

NNVA & 0.00 & 0.221 & 0.74 & 1.630 & 0.04 & 1.991 & 0.58 & 1.837 & 0.28 & 1.607 & 0.44 & 0.234 & 0.00 & 1.817 & 0.80 & 0.154\\

Ours-NNGN & 1.00 & \textbf{0.296} & 1.00 & \textbf{1.921} & 1.00 &  \textbf{2.176} & 0.90 & 2.477 & 0.86 & 2.375 & 0.62 & 0.398 & 0.74 & 2.155 & 0.92 & \textbf{0.217}\\

Ours-NNGU & 1.00 & \textbf{0.296} & 1.00 & 2.557 & 1.00 & 3.155 & 0.96 & 2.692 & 0.96 & \textbf{2.643} & 1.00 & 0.561 & 0.98 & \textbf{2.648} & 1.00 & 0.299\\
\hline
 \end{tabular}
\end{table*}

\begin{table*}[h]
\small
  \caption{Simultaneous PIs at $19$ target prediction levels.  The best results are in \textbf{bold}.}
  \label{multiple-level}
  \centering
  \renewcommand{\arraystretch}{1}
  \setlength{\tabcolsep}{2pt}
  \begin{tabular}{|c|cc|cc|cc|cc|cc|cc|cc|cc|}
    \hline
&\multicolumn{2}{c|}{Synthetic1}& \multicolumn{2}{c|}{Synthetic2}& \multicolumn{2}{c|}{Synthetic3} &\multicolumn{2}{c|}{Boston} &\multicolumn{2}{c|}{Concrete}&\multicolumn{2}{c|}{Energy}&\multicolumn{2}{c|}{Wine} & \multicolumn{2}{c|}{Yacht}\\
Methods &MEP & MIW& MEP & MIW& MEP & MIW &MEP & MIW &MEP & MIW & MEP & MIW & MEP & MIW & MEP & MIW\\ 
\hline

QRF  & 1.00 & 1.911 & 0.92 & 1.657 & 1.00 & 1.805 & 0.46 & 1.099 & 0.72 & 1.108 & 0.78 & 0.328 & 0.00 & 1.460 & 0.22 & 0.703\\

CV+ &  1.00 & 0.176 & 1.00 & 1.078 & 1.00 & 1.935 & 0.66 & 0.780 & 0.60 & 0.736 & 1.00 & 0.245 & 1.00 & 2.289 & 0.86 & 0.114\\

SCQR & 0.74 & 1.356 & 0.74 & 2.218 & 0.56 & 1.808 & 0.14 & 1.622 & 0.16 & 1.575 & 0.24 & 0.760 & 0.00 & 2.493 & 0.20 & 1.554\\

SVMQR & 0.00 & 0.115 & 0.00 & 1.311 & 0.00 & 1.215 & 0.00 & 0.582 & 0.00 & 0.619 & 0.00 & 0.225 & 0.04 & 1.575 & 0.00 & 0.086\\

SCL & 0.24 & 0.175 & 1.00 & 1.004 & 0.96 & 1.940 & 0.78 & 1.039 & 0.66 & 0.896 & 0.74 & 0.373 & 0.70 & 2.470 & 0.80 & 0.325\\

NNVA & 0.00 & 0.160 & 0.28 & 0.969 & 0.04 & 1.606 & 0.26 & 1.086 & 0.16 & 0.956 & 0.00 & 0.151 & 0.00 & 1.147 & 0.04 & 0.079\\

Ours-NNGN & 1.00 & 0.170 & 1.00 & \textbf{1.050} & 1.00 & \textbf{1.727} & 0.76 & 1.244 & 0.72 & 1.092 & 0.90 & \textbf{0.177} & 0.72 & 1.267 & 0.96 & \textbf{0.120}\\

Ours-NNGU & 1.00 & \textbf{0.168} & 1.00 & 1.083 & 1.00 & 1.734 & \textbf{0.82} & 1.287 & \textbf{0.78} & 1.123 & 0.60 & 0.183 & 0.98 & \textbf{1.276} & 0.72 & 0.145\\
\hline
 \end{tabular}
\end{table*}


\textbf{Datasets.} We evaluate our approaches on both synthetic datasets and real-world benchmark datasets through comparisons to the state-of-the-arts. The real-world datasets (“Boston”, “Concrete”, “Wine”, “Energy” and “Yacht”) have been widely used in previous studies \citep{hernandez2015probabilistic,gal2016dropout,lakshminarayanan2017simple} for regression tasks. The generative distributions for the three synthetic datasets are: 
\begin{equation*}
    \begin{array}{lll}
\text{(1)}:
f(x) = \frac{c^T x}{2} + 10\sin(\frac{c^T x}{8}) + \frac{||x||_{2}}{10}\epsilon, x\sim N(0,I_{10}),
\\
\text{(2)}:
f(x) = \frac{1}{8}(c^T x)^{2}\sin(c^T x) + \frac{||x||_{2}}{10}\epsilon, x\sim N(0,I_{7}),
\\
\text{(3)}:
f(x) = \frac{1}{2} c^{T}x \cdot \cos(c^{T}x)^{2} + \frac{||x||_{2}}{10}\epsilon, x\sim N(0,I_{9}).
    \end{array}
\end{equation*}  
where $\epsilon\sim N(0,1)$, and $c$ is a constant in $[-2,2]^{10}$, $[-2,2]^7$, $[-2,2]^9$ respectively.

\textbf{Experimental Setup.} We conduct experiments under two scenarios: the \textbf{single PI case}, where one PI at a single prediction level $1-\alpha$ are constructed, and the \textbf{simultaneous PI case}, where $K$ PIs at $K$ different prediction levels $1-\alpha_1,\cdots,1-\alpha_K$ are constructed. Each trial is repeated for $N$ times to estimate the confidence of coverage attainment. We adopt neural networks as our PI models with the following loss function: $l(x,y,L,U):=(U(x)-L(x))^2 + \lambda (\max\{L(x)-y,0\}+\max\{y-U(x),0\})^2$, where $\lambda>0$ is a penalty for miscoverage and $(L(x),U(x))$ is the output vector containing the lower and upper bounds. By adjusting $\lambda$, PI models with different coverage levels can be trained, which are then fed into our calibration algorithms to obtain the final PI. We implement two calibration strategies: the normalized Gaussian PI calibration in Algorithm \ref{calibration:normalized} (NNGN), and the unnormalized version in Algorithm \ref{calibration:unnormalized} in Appendix \ref{sec:alternate} (NNGU). In addition, we also test the calibration scheme (NNVA) that directly compares the empirical coverage rates on the validation dataset to the target levels, without the Gaussian margin. Note that this is the PAV validation scheme in \citet{kivaranovic2020adaptive}.
 %



\textbf{Baselines.} We compare our NNGN, NNGU with the following state-of-art approaches: quantile regression forests (QRF) \citep{meinshausen2006quantile}, 
CV+ prediction interval (CV+) \citep{barber2019predictive}, split conformalized quantile regression (SCQR) \citep{romano2019conformalized}, quantile regression via SVM (SVMQR) \citep{steinwart2017liquidsvm}, and split conformal learning (SCL) \citep{lei2018distribution}. 
Implementation details can be found in Appendix \ref{sec:numerics additional}.




\textbf{Evaluation Metrics.} For the single PI analysis, our models are evaluated on both exceedance probability ($EP$) and interval width ($IW$). $EP$ captures the success in achieving the target confidence level, while $IW$ indicates the average interval width. For the simultaneous PI analysis, we use multiple exceedance probability ($MEP$) and multiple interval width ($MIW$). $MEP$ measures the proportion of trials where all PIs reach the target prediction levels simultaneously (i.e., family-wise correctness). Formally:
\begin{equation*}
    \begin{array}{cc}
EP:& \frac{1}{N}\sum_{i = 1}^{N}\mathbbm{1}\{CR_{i} \geq PL\}\\
IW:& \frac{1}{Nn}\sum_{i =1}^{N}\sum_{j =1}^{n}(U_{i,j}-L_{i,j})\\
MEP:&\frac{1}{N}\sum_{i = 1}^{N}\mathbbm{1}\{\bigcap^K_{k=1}\{CR_{i,k}\geq PL_k\}\}\\
MIW:& \frac{1}{KNn}\sum_{k = 1}^{K}\sum_{i = 1}^{N}\sum_{j
= 1}^{n}(U_{i,j,k}-L_{i,j,k})
    \end{array}
\end{equation*}
where $n$ is the size of testing data, $CR_i$ ($CR_{i,k}$) is the estimated coverage rate from the $i$-th repetition and $PL$ ($PL_k$) is the target prediction level $1-\alpha$ ($1-\alpha_k$). Throughout our experiments, the confidence level $1-\beta$ is set to $0.9$ in the calibration algorithms. For both cases, the best result is achieved by the model with the smallest $IW/MIW$ value among those with $EP/MEP \ge 0.90$. If no model achieves $EP/MEP \ge 0.90$, then the one with highest $EP/MEP$ is the best.

\textbf{Single PI Analysis.} Table \ref{single-level} reports the values of $EP$ and $IW$ for PI generation at 95\% prediction level on 3 synthetic datasets and 5 real-world datasets. It is shown that QRF, CV+ and our NNGU and NNGN are the only four methods that achieve the required confidence level in all synthetic cases, and among the four our NNGN consistently generates PIs of shortest width. Moreover, NNGU attains the target confidence level on all real datasets as well. NNGN seems to fall below the target confidence for some real datasets, but is better than all other baseline methods except CV+ and SCL. Among the methods with high $EP$, the interval widths of our NNGN and NNGU are the smallest in 6 out of 8 datasets. In contrast, the $EP$ values by NNVA are below the target confidence in all cases. Numerically, the averaged $EP$ of NNGN/NNGU is 1.4/1.7 times higher than the one of NNVA. This shows in particular that NNVA may fail to ensure a correct coverage rate with high confidence on test data, which necessitates our calibration approaches.

\textbf{Simultaneous PIs Analysis.} 
Table \ref{multiple-level} reports the values of $MEP$ and $MIW$ for simultaneous PIs at 19 target prediction levels: 50\%, 52.5\%, 55\%, 57.5\% $\ldots$, 95\%. Our calibration approaches NNGN and NNGU are always the best in terms of achieving the tightest interval under high $MEP$, or otherwise achieves the highest $MEP$ among all. Among the 6 datasets where the target confidence level 0.9 can be attained, NNGN/NNGU yields the smallest width in 4/2 of them. In the remaining 2 datasets, NNGU achieves the highest $MEP$. NNGU attains the target $MEP$ or the highest $MEP$ in 6 out of 8 datasets. Compared to the case of single-level target, the $MEP$ performance gaps are more significant in multi-level PI constructions. This is because the coverage rates in baseline algorithms get increasingly overfitted as more simultaneous target levels are compared against. Thanks to the use of the uniform safety margin, our NNGN and NNGU schemes are free of overfitting even in this case. These results demonstrate that our methods can accurately construct multiple PIs at different prediction levels simultaneously. Finally, compared to NNGU, NNGN tends to generate shorter PIs, and we recommend NNGN as the preferred choice.

We provide more experimental results in Appendix \ref{sec:numerics additional}. 

\section{Conclusion} In this paper, we study the generation of PIs for regression that satisfy an expected coverage rate. This problem can be cast into an empirical constrained optimization framework that minimizes the expected interval width subject to a coverage satisfaction constraint. We develop a general learning theory to characterize the optimality-feasibility tradeoff in this optimization, in particular joint guarantees on both a short expected interval width and an attainment of the target prediction level. We also propose a readily implementable calibration procedure, constructed based on a high-dimensional Berry-Esseen Theorem, to select the best PI model among trained candidates, which offers a practical approach to build simultaneous PIs at multiple target prediction levels with statistical validity. We demonstrate the empirical strengths of our proposed approach by applying it to neural-network-based PI models with our proposed calibration procedure, and comparing them with other baselines across synthetic and real-data examples.

\section*{Acknowledgments}
We gratefully acknowledge support from the National Science Foundation under grants CAREER CMMI-1834710 and IIS-1849280.

{
\bibliographystyle{abbrvnat}
\bibliography{reference}
}

\newpage

\onecolumn
\aistatstitle{Learning Prediction Intervals for Regression: Generalization and Calibration: Supplementary Materials}

\appendix
We provide further results and discussions in this supplemental material. Appendix \ref{sec:consistency} presents results on the consistency of the obtained PI from the empirical constrained optimization. Appendix \ref{sec:vc-subgraph} provides additional discussion on Theorem \ref{rate:vc-subgraph}. Appendix \ref{sec:linear} shows the joint coverage-width guarantee for the linear hypothesis class. Appendix \ref{sec:NN} shows an alternate analysis for neural networks using Pollard's pseudo-dimension and our derived results for the VC-subgraph class. Appendix \ref{sec:calibration discussion} further discusses the finite-sample guarantees for our coverage calibration procedure. Appendix \ref{sec:alternate} presents and explains an alternate calibration procedure. Appendix \ref{sec:calibrate NN} discusses a Lagrangian formulation to train neural networks that construct PIs. Appendix \ref{sec:ep} reviews some background in empirical processes. Appendix \ref{sec:numerics additional} illustrates experimental details and additional experimental results. Finally, Appendix \ref{sec:proofs} shows all technical proofs. 

\section{Results on Basic Consistency}\label{sec:consistency}
This section presents our results regarding asymptotic consistency in using $\widehat{\opt} (t)$ to approximate the PI rendered by \eqref{OP1}. Assuming the weak uniform law of large numbers for both the empirical interval width and coverage rate, we first show the following general result:
\begin{theorem}[A general consistency result]\label{basic consistency}
Denote by $(\hat{L}^*_t,\hat{U}^*_t)$ an optimal solution of $\widehat{\opt} (t)$. Suppose Assumptions \ref{class: non-negative translation}-\ref{conditional density: positiveness} hold. If the hypothesis class $\mathcal{H}$ is weak $\pi_X$-Glivenko-Cantelli (GC), and the induced set class $\{\{(x,y)\in \mathcal{X}\times \R:L(x)\leq y\leq U(x)\}:L,U\in\mathcal{H},L\leq U\}$ is weak $\pi$-GC in the product space (see Section \ref{sec:ep} for related definitions), then $\widehat{\opt} (t)$ is consistent with respect to \eqref{OP1} in the sense that there exists a sequence $t_n\to 0$ such that, with probability tending to one,
$ \mathbb{P}_{\pi}(Y\in [\hat{L}_{t_n}^*(X),\hat{U}_{t_n}^*(X)])\geq 1-\alpha$,
and that
 $\mathbb{E}_{\pi_X}[\hat{U}_{t_n}^*(X)-\hat{L}_{t_n}^*(X)]\to\mathcal{R}^*(\mathcal{H})$ in probability.
\end{theorem}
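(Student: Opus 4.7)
The plan is to reduce the claim to an application of Theorem \ref{convergence rate}, driven by a deterministic sequence $t_n \to 0$ chosen from the weak Glivenko-Cantelli (GC) hypotheses. The three ingredients are (i) converting the pointwise-in-$\epsilon,t$ GC bounds into uniform-in-$n$ vanishing bounds along suitable sequences; (ii) running Theorem \ref{convergence rate} along those sequences; (iii) verifying that the resulting excess-width bound tends to zero.

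For (i), I would extract deterministic sequences $\epsilon_n \downarrow 0$ and $\delta_n \downarrow 0$ such that $\phi_1(n,\epsilon_n,\mathcal{H}) \to 0$ and $\phi_2(n,\delta_n,\mathcal{H}) \to 0$. This is a standard diagonal argument: weak $\pi_X$-GC of $\mathcal{H}$ gives, for every fixed $\epsilon>0$, $\phi_1(n,\epsilon,\mathcal{H}) \to 0$, and by definition $\phi_1$ is non-increasing in $\epsilon$. For each $k \in \mathbb{N}$ pick $n_k$ so that $\phi_1(n,1/k,\mathcal{H}) \leq 1/k$ for all $n \geq n_k$, and set $\epsilon_n = 1/k$ on $n_k \leq n < n_{k+1}$. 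The sequence $\delta_n$ is constructed in the same way from the weak $\pi$-GC property of the induced between-graphs class.

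For (ii)-(iii), define $t_n := \max\{\delta_n, n^{-1/2}\} \wedge (\alpha/4)$, so $t_n \downarrow 0$, $t_n \geq \delta_n$ eventually, and $t_n < \alpha/2$ as required by Theorem \ref{convergence rate}. By monotonicity of $\phi_2$ in $t$, $\phi_2(n,t_n,\mathcal{H}) \leq \phi_2(n,\delta_n,\mathcal{H}) \to 0$. Applying Theorem \ref{convergence rate} with $\epsilon_n, t_n$ yields, on an event of probability at least $1 - \phi_1(n,\epsilon_n,\mathcal{H}) - \phi_2(n,t_n,\mathcal{H}) \to 1$, the simultaneous bounds $\mathbb{P}_\pi(Y \in [\hat L^*_{t_n}(X), \hat U^*_{t_n}(X)]) \geq 1-\alpha$ and $\mathbb{E}_{\pi_X}[\hat U^*_{t_n}(X) - \hat L^*_{t_n}(X)] \leq \mathcal{R}^*(\mathcal{H}) + 12 t_n/((\alpha-2 t_n)\gamma_{(\alpha-2 t_n)/3}) + 4\epsilon_n$. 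Directly from the definition, $\gamma_\beta$ is non-decreasing in $\beta$, so for $t_n \leq \alpha/4$ one has $\gamma_{(\alpha-2t_n)/3} \geq \gamma_{\alpha/6}$, which is strictly positive under Assumption \ref{conditional density: positiveness} (this is what makes Theorem \ref{sensitivity bound} non-vacuous). Hence the upper bound tends to $\mathcal{R}^*(\mathcal{H})$ deterministically. The matching lower bound $\mathbb{E}_{\pi_X}[\hat U^*_{t_n}(X) - \hat L^*_{t_n}(X)] \geq \mathcal{R}^*(\mathcal{H})$ follows because the true-coverage guarantee above makes $(\hat L^*_{t_n}, \hat U^*_{t_n})$ feasible for the oracle problem \eqref{OP1}, and convergence in probability to $\mathcal{R}^*(\mathcal{H})$ is obtained.

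The main obstacle is the diagonal extraction of $\epsilon_n, \delta_n$ in (i) from purely pointwise GC convergence without an a priori rate; it is standard but should be recorded precisely, using only that $\phi_1, \phi_2$ are non-increasing in their deviation argument and tend to zero as $n \to \infty$ at every fixed positive level. A secondary technical checkpoint is strict positivity of $\gamma_{\alpha/6}$, which is implicit in the use of Theorem \ref{sensitivity bound} but should be traced back to Assumption \ref{conditional density: positiveness}; all remaining steps are direct invocations of previously established results.
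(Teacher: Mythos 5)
Your proposal is correct and follows essentially the same route as the paper: the paper constructs $t_n$ by the identical diagonal extraction from the weak GC property, then re-derives the chain of inequalities of Theorem \ref{convergence rate} inline (keeping $\epsilon$ fixed and arbitrary rather than diagonalizing $\epsilon_n\to 0$, which is an immaterial difference), and obtains the lower bound $\mathbb{E}_{\pi_X}[\hat U^*_{t_n}-\hat L^*_{t_n}]\geq \mathcal{R}^*(\mathcal{H})$ from feasibility exactly as you do. The positivity of $\gamma_{\alpha/6}$ that you flag is likewise taken for granted in the paper's proof, so no gap is introduced relative to it.
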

Theorem \ref{basic consistency} states that, if the weak uniform law of large numbers holds for both the hypothesis class and the induced set of ``between''-graphs, the interval learned from $\widehat{\opt}(t)$ has the desired coverage rate and a vanishing optimality gap in width by properly selecting the margin $t$. This general result requires a simultaneous control of the function class $\mathcal{H}$ and its induced set class. Our next result shows that, under a mild boundedness condition on the conditional density function, GC property of the function class $\mathcal{H}$ can be propagated to the induced set class, therefore it suffices to control the class $\mathcal{H}$ only.
\begin{theorem}[Consistency for strong $\pi_X$-GC hypothesis]\label{consistency:strong GC H}
Assume the conditional density function from Assumption \ref{conditional density: positiveness} is bounded, i.e., $\sup_{x,y}p(y\vert x)<\infty$, and that $\mathbb{E}_{\pi}[\lvert Y\rvert]<\infty$, then strong $\pi_X$-GC of the hypothesis class $\mathcal{H}$ implies strong $\pi$-GC of the induced set class defined in Theorem \ref{basic consistency}. Therefore, if Assumptions \ref{class: non-negative translation}-\ref{conditional density: positiveness} are further assumed, the conclusion of Theorem \ref{basic consistency} holds for strong $\pi_X$-GC $\mathcal{H}$.
\end{theorem}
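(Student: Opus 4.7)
The plan is a two-step reduction that converts the question about indicator sets into one about smooth Lipschitz-composed classes, to which a standard Glivenko-Cantelli preservation result applies. First, since $\mathbf{1}\{L(x)\le y\le U(x)\}=\mathbf{1}\{y\le U(x)\}-\mathbf{1}\{y<L(x)\}$ and the strong $\pi$-GC property is preserved under finite differences of uniformly bounded classes, it suffices to show that the subgraph class $\mathcal{G}_0:=\{(x,y)\mapsto\mathbf{1}\{y\le h(x)\}:h\in\mathcal{H}\}$ is strong $\pi$-GC. For this, fix $\epsilon>0$ and pick $(1/\epsilon)$-Lipschitz sandwich surrogates $\psi_\epsilon^\pm:\R\to[0,1]$ satisfying $\psi_\epsilon^-(t)\le\mathbf{1}\{t\ge 0\}\le\psi_\epsilon^+(t)$ with $\psi_\epsilon^+-\psi_\epsilon^-$ supported in $[-\epsilon,\epsilon]$. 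The bounded-density hypothesis $C:=\sup_{x,y}p(y\mid x)<\infty$ then yields the uniform population gap $\sup_{h\in\mathcal{H}}\mathbb{E}_\pi[\psi_\epsilon^+(h(X)-Y)-\psi_\epsilon^-(h(X)-Y)]\le 2C\epsilon$.

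Combining the pointwise sandwich $\psi_\epsilon^-(h(X)-Y)\le\mathbf{1}\{Y\le h(X)\}\le\psi_\epsilon^+(h(X)-Y)$ with this gap gives, for every data realization,
\begin{equation*}
\sup_{h\in\mathcal{H}}\bigl|(\mathbb{E}_{\hat\pi}-\mathbb{E}_\pi)\mathbf{1}\{Y\le h(X)\}\bigr|\le \sum_{\sigma\in\{+,-\}}\sup_{h\in\mathcal{H}}\bigl|(\mathbb{E}_{\hat\pi}-\mathbb{E}_\pi)\psi_\epsilon^\sigma(h(X)-Y)\bigr|+2C\epsilon.
\end{equation*}
Thus it is enough to show that each Lipschitz-composed class $\mathcal{F}_\epsilon^\sigma:=\{(x,y)\mapsto\psi_\epsilon^\sigma(h(x)-y):h\in\mathcal{H}\}$ is strong $\pi$-GC; letting $n\to\infty$ and then $\epsilon\downarrow 0$ closes out $\mathcal{G}_0$. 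For $\mathcal{F}_\epsilon^\sigma$ I would invoke a classical GC preservation principle under continuous bounded maps (e.g., Theorem 2.10.6 of van der Vaart and Wellner, 1996): the lift $\{(x,y)\mapsto h(x):h\in\mathcal{H}\}$ is strong $\pi$-GC by the strong $\pi_X$-GC of $\mathcal{H}$, the singleton $\{(x,y)\mapsto y\}$ is strong $\pi$-GC by $\mathbb{E}_\pi|Y|<\infty$, and the composed class has a universally bounded envelope of $1$. Applying this to both $L$ and $U$ and recombining through the decomposition yields strong $\pi$-GC of the induced between-graph set class, at which point both hypotheses of Theorem~\ref{basic consistency} are verified and its consistency conclusion applies.

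The main technical obstacle is the envelope condition in the preservation step: the standard continuous-mapping GC preservation typically presumes integrable envelopes on the source classes, whereas strong $\pi_X$-GC of $\mathcal{H}$ does not a priori guarantee one. The resolution is that only the envelope of the \emph{composed} class really matters for the conclusion, and that envelope is automatically bounded by $1$ thanks to the boundedness of $\psi_\epsilon^\sigma$; if a strict reading of the cited theorem is required, one can truncate $\mathcal{H}$ by composing with $\cdot\wedge M\vee(-M)$ to obtain bounded input envelopes, apply the theorem, and then let $M\to\infty$ after $\epsilon\downarrow 0$, since the sandwich inequality remains valid pointwise under truncation. An alternative route uses the Dudley-Gine-Zinn empirical-covering characterization of strong GC combined with the Lipschitz comparison $|\psi_\epsilon^\sigma(h_1(x)-y)-\psi_\epsilon^\sigma(h_2(x)-y)|\le\epsilon^{-1}|h_1(x)-h_2(x)|$, which directly transfers random entropy bounds from $\mathcal{H}$ to $\mathcal{F}_\epsilon^\sigma$.
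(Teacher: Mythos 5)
Your overall architecture matches the paper's: reduce the between-graph indicators to one-sided subgraph indicators, sandwich $\mathrm{sign}$ between continuous bounded surrogates $\psi_\epsilon^{\pm}$, use the bounded conditional density to make the population gap $O(\epsilon)$, and invoke a GC-preservation theorem for the smooth surrogate classes. (The paper decomposes the indicator as a \emph{product} of two one-sided indicators and uses a product-preservation lemma, whereas you use a difference; both are fine.) The one place where your argument has a genuine gap is exactly the obstacle you flag and then wave away: the unboundedness of $\mathcal{H}$. Under Assumption \ref{class: non-negative translation}, $\mathcal{H}$ is closed under arbitrary constant shifts, so $\sup_{h\in\mathcal{H}}\lvert\mathbb{E}_{\pi_X}[h(X)]\rvert=\infty$ and $\mathcal{H}$ has no integrable envelope. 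The preservation theorem you cite (and the version the paper uses, Kosorok's Theorem 9.26) requires not only an integrable envelope for the \emph{composed} class but also uniformly bounded means for the \emph{source} classes; the latter fails here, so "the envelope of the composed class is bounded by $1$" does not rescue the application. Your truncation fix does not close this either: after replacing $h$ by $h\wedge M\vee(-M)$ the sandwich controls $\mathbf{1}\{y\le h_M(x)\}$, not $\mathbf{1}\{y\le h(x)\}$, so the claim that "the sandwich inequality remains valid pointwise under truncation" is not the relevant statement, and you would still owe (i) a uniform-over-$\mathcal{H}$ bound on the discrepancy between the truncated and untruncated indicator classes, empirically and in population, and (ii) a proof that the truncated class is itself strong GC, which runs into the same bounded-means hypothesis.

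What the paper does instead is split $\mathcal{H}$ three ways according to whether $\lvert\mathbb{E}_{\pi_X}[U(X)]\rvert\le M$, $>M$, or $<-M$. Strong $\pi_X$-GC guarantees an integrable envelope $F$ only for the \emph{centered} class $\{h-\mathbb{E}_{\pi_X}[h(X)]\}$ (Lemma 8.13 in Kosorok), and on the two extreme pieces one bounds $Y-U(X)$ by $Y\pm F(X)\mp M$ to show the indicator is uniformly close to $1$ (resp.\ $0$) both empirically and in population once $M$ is large, using the classical SLLN and $\mathbb{E}_\pi\lvert Y\rvert<\infty$. Only on the bounded-mean piece does the preservation theorem legitimately apply to the surrogate classes. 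Some version of this mean-localization step (or your truncation route completed with a domination argument such as $\sup_h\lvert\mathbf{1}\{y\le h(x)\}-\mathbf{1}\{y\le h_M(x)\}\rvert\le\mathbf{1}\{\lvert y\rvert>M\}$ plus a separate proof that truncation preserves strong GC) is needed for the proof to be complete.
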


\section{Further Discussion of Theorem \ref{rate:vc-subgraph}}\label{sec:vc-subgraph}
We provide further discussion on Theorem \ref{rate:vc-subgraph} regarding $\mathcal H_+$ versus $\mathcal H$ in the bound. Note that, since $\mathcal{H}\subset \mathcal{H}_+$, the augmented class $\mathcal{H}_+$ being VC-subgraph is a stronger condition than $\mathcal{H}$ being VC-subgraph. Nonetheless, we comment that this is a technical assumption used to accommodate potentially unbounded outcomes or PIs (e.g., Assumption \ref{class: non-negative translation} implies unboundedness of functions in $\mathcal{H}$). When $Y$ is uniformly bounded, say within $[0,1]$, it suffices to consider bounded $L,U$ only in PI construction. In that case, $\mathcal{H}$ being VC-subgraph already suffices to ensure similar finite-sample bounds.

\section{Linear Hypothesis Class}\label{sec:linear}
We present a joint coverage-width guarantee for PIs constructed from a linear hypothesis class. Consider the linear hypothesis $\mathcal{H}=\{a^Tx+b:\Vert a \Vert_1\leq B,b\in\R\}$ for some $B>0$. The $l_1$-norm of the coefficient is set bounded to control model complexity.


We demonstrate how Theorem \ref{rate:vc-subgraph} is applied to this class. First note that the augmented class $\mathcal{H}_+=\mathcal{H}$ is the same class of the linear function class $\{a^Tx+b:\Vert a \Vert_1\leq B,b\in\R\}$, which is VC-subgraph of dimension at most $d+2$ (see, e.g., Theorem 2.6.7 in \citeAPX{van1996weak}), and hence $\vc{\mathcal{H}}= \vc{\mathcal{H}_+}\leq d+2$. Therefore
\begin{equation}
\phi_1(n,\epsilon,\mathcal{H})\leq 2\exp\Big( -\frac{n\epsilon^2}{C\Vert H \Vert_{\psi_2}^2\vc{\mathcal{H}_+}} \Big)\label{objective deviation:vc}
\end{equation}
and
$$\phi_2(n,t,\mathcal{H})\leq 
    \begin{cases}
    4^{n+1}\exp(-t^2n)\;&\text{if}\;n< \frac{\vc{\mathcal{H}}}{2}\\
    4\Big(\frac{2en}{\vc{\mathcal{H}}}\Big)^{\vc{\mathcal{H}}}\exp(-t^2n)\;&\text{if}\;n\geq \frac{\vc{\mathcal{H}}}{2}
    \end{cases}$$in Theorem \ref{rate:vc-subgraph} hold with both $\vc{\mathcal{H}}$ and $\vc{\mathcal{H}_+}$ replaced by $d+2$. To derive the $\Vert H \Vert_{\psi_2}$ in \eqref{objective deviation:vc}, we calculate $H(x)= \sup_{\Vert a\Vert_1\leq B}\lvert a^T(x-\mathbb{E}[X]) \rvert =B\Vert x-\mathbb{E}[X]\Vert_{\infty}$, leading to $\Vert H\Vert_{\psi_2}=B\Vert\Vert X-\mathbb{E}[X]\Vert_{\infty}\Vert_{\psi_2}$.


The above analysis is a direct application of general VC theory to the linear function class, and the bound $\phi_1$ exhibits a polynomial dependence on the dimension $d$. A finer analysis that exploits the linear structure can potentially deliver bounds with much lighter dimension dependence, e.g., \citeAPX{zhang2002covering} provides specialized covering number bounds for linear function classes with norm-constrained coefficients which ultimately translate into tighter deviation bounds. The theory in \citeAPX{zhang2002covering} however requires that the variable $X$ has a bounded support, whereas here we are able to show a logarithmic dependence for unbounded $X$ through a more elementary treatment. Specifically, the maximal deviation can be expressed as $\sup_{h \in \mathcal{H}}\lvert \mathbb{E}_{\hat\pi_X}[h(X)] - \mathbb{E}_{\pi_X}[h(X)] \rvert\leq B\Vert \frac{1}{n}\sum_{i=1}^nX_i-\mathbb{E}_{\pi_X}[X]\Vert_{\infty}$, and applying the sub-Gaussian concentration inequality to the supremum norm gives rise to the following:
\begin{theorem}[Linear hypothesis class]\label{deviation bounds for linear hypothesis}
For the linear class $\mathcal{H}$ defined as above we have
\begin{equation*}
    \phi_1(n,\epsilon,\mathcal{H})\leq 2\exp\Big( -\frac{\epsilon^2n}{CB^2\Vert \Vert X -\mathbb{E}_{\pi_X}[X]\Vert_{\infty} \Vert_{\psi_2}^2\log d} \Big)
\end{equation*}
where $C$ is a universal constant.
\end{theorem}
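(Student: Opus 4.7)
The plan is to reduce the supremum over the linear class to a concentration bound on the $\ell_\infty$-norm of the centered empirical mean, and then to invoke a sub-Gaussian maximal inequality so that the dimension $d$ enters only logarithmically. Because the intercept $b$ cancels in the centered difference, for any $h(x) = a^Tx + b \in \mathcal{H}$ we have
\begin{equation*}
\mathbb{E}_{\hat\pi_X}[h(X)] - \mathbb{E}_{\pi_X}[h(X)] = a^T\bigl(\bar X - \mathbb{E}_{\pi_X}[X]\bigr),
\end{equation*}
where $\bar X := \frac{1}{n}\sum_{i=1}^n X_i$. Taking the supremum over $\Vert a\Vert_1 \leq B$ and using $\ell_1/\ell_\infty$ duality yields
\begin{equation*}
\sup_{h\in\mathcal{H}}\bigl\lvert \mathbb{E}_{\hat\pi_X}[h(X)]-\mathbb{E}_{\pi_X}[h(X)]\bigr\rvert = B\,\bigl\Vert\bar X - \mathbb{E}_{\pi_X}[X]\bigr\Vert_\infty,
\end{equation*}
so it suffices to establish the tail bound $\mathbb{P}\bigl(B\Vert\bar X - \mathbb{E}_{\pi_X}[X]\Vert_\infty > \epsilon\bigr)\leq 2\exp\bigl(-cn\epsilon^2/(B^2\tau^2\log d)\bigr)$ with $\tau := \Vert\Vert X-\mathbb{E}_{\pi_X}[X]\Vert_\infty\Vert_{\psi_2}$.

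For this concentration step I would chain together two standard $\psi_2$ facts. Setting $Z_i := X_i - \mathbb{E}_{\pi_X}[X]$, the coordinatewise domination $\lvert Z_{i,k}\rvert \leq \Vert Z_i\Vert_\infty$ together with the monotonicity of the $\psi_2$ norm implies $\Vert Z_{i,k}\Vert_{\psi_2} \leq \tau$ for every $k$. The classical sub-Gaussian concentration for i.i.d.\ mean-zero sums then yields $\Vert \bar Z_k\Vert_{\psi_2} \leq C_1\tau/\sqrt{n}$, where $\bar Z_k := \frac{1}{n}\sum_i Z_{i,k}$. Finally, the sub-Gaussian maximal inequality (e.g., Lemma~2.2.2 of \citet{van1996weak}) upgrades the per-coordinate bound to
\begin{equation*}
\bigl\Vert\max_{1\leq k\leq d}\lvert\bar Z_k\rvert\bigr\Vert_{\psi_2}\leq C_2\,\tau\,\sqrt{\log d}/\sqrt{n},
\end{equation*}
and the standard implication that $\Vert Y\Vert_{\psi_2}\leq M$ gives $\mathbb{P}(\lvert Y\rvert > t)\leq 2\exp(-t^2/M^2)$ converts this into the desired tail for $\Vert\bar Z\Vert_\infty$; setting $t = \epsilon/B$ and absorbing universal constants into $C$ recovers the stated form of $\phi_1$.

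The only real subtlety is how the dimension enters the bound: the sub-Gaussian maximal inequality packages the $d$ coordinate tails into a $\sqrt{\log d}$ inflation of the $\psi_2$ norm, which then appears as a $\log d$ factor in the exponent denominator rather than as a linear $d$ prefactor outside. Aside from this bookkeeping, no chaining or covering argument is needed because the hypothesis class is affine and the $b$-parameter is absorbed by centering; the remainder is constant tracking. The main obstacle to watch for is making sure the sub-Gaussian concentration of the sum is invoked coordinatewise so that only $\tau$ appears in the bound rather than some vector-valued quantity that would reintroduce a worse dimension dependence.
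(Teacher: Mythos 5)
Your proposal is correct and follows essentially the same route as the paper: reduce the supremum to $B\Vert\bar X-\mathbb{E}_{\pi_X}[X]\Vert_\infty$ via $\ell_1/\ell_\infty$ duality, bound each coordinate's $\psi_2$ norm by $\tau$ through domination, apply general Hoeffding to the i.i.d.\ sums, and invoke the sub-Gaussian maximal inequality (the same Lemma~2.2.2 of van der Vaart and Wellner) to pick up the $\sqrt{\log d}$ factor before converting to a tail bound. No gaps.
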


\section{Alternate Analysis of Neural Networks using VC Dimension}\label{sec:NN}
In Section \ref{sec:learning} we have established the joint coverage-width guarantee for PIs constructed from neural networks using our Lipschitz class results (Theorem \ref{rate:Lipschitz}). Here we provide an alternate approach to analyze neural networks via our VC class results (Theorem \ref{rate:vc-subgraph}). We consider the VC dimension of a real-valued neural network as a VC-subgraph class, which is also known as Pollard's pseudo-dimension \citeAPX{pollard2012convergence}. Bounds for pseudo-dimension are relatively well-established for neural networks with sigmoid or piece-wise polynomial activation. For example, when all activation functions are sigmoid, the class is VC-subgraph with $\vc{\mathcal{H}}= O(W^2U^2)$ (Theorem 14.2 in \citeAPX{anthony2009neural}). Alternatively, if all the activation functions are piece-wise polynomials with a bounded number of pieces and of bounded degrees, e.g., rectified linear unit (ReLU, see \citeAPX{lecun2015deep,goodfellow2016deep}) or linear activation, then we have $\vc{\mathcal{H}}=O(WU)$ (Theorem 8 in \citeAPX{bartlett2019nearly}) and simultaneously that $\vc{\mathcal{H}}=O(WS^2+WS\log(W))$ (Theorem 8.8 in \citeAPX{anthony2009neural}, Theorem 6 in \citeAPX{bartlett2019nearly}, and Theorem 1 in \citeAPX{bartlett1999almost}). Similar bounds are also available (e.g., Theorem 8.14 in \citeAPX{anthony2009neural}) when the network involves both sigmoid and piece-wise polynomial activation functions. On the other hand, the augmented class $\mathcal{H}_+$ is a subclass of an augmented neural network where the output unit of $\mathcal{H}$ serves as the last hidden layer (with a single neuron) followed by a new output unit with linear activation, i.e., the class $\{ah+b:h\in\mathcal{H},a\in\R,b\in\R\}$, and thus its VC-subgraph property can be propagated to this augmented class.

\section{More Discussions of Finite-Sample Guarantees for the Coverage Calibration Procedure}\label{sec:calibration discussion}
We provide further interpretations on the margin $q_{1-\beta}\hat{\sigma}_j/\sqrt{n_v}$ in Algorithm \ref{calibration:normalized}, and the error terms of \eqref{finite sample error:normalized_simple} in Theorem \ref{feasibility:normalized validator_simple}. The margin $q_{1-\beta}\hat{\sigma}_j/\sqrt{n_v}$ in Algorithm \ref{calibration:normalized} is reasoned from the CLT that $\sqrt{n_v}\big(\hat{\CR}(\PI_1)-\CR(\PI_1),\ldots,\hat{\CR}(\PI_m)-\CR(\PI_m)\big)\overset{d}{\to} N(0,\Sigma)$ where $\Sigma$ is the covariance matrix with $\Sigma_{j_1,j_2}=\mathrm{Cov}_{\pi}(I_{Y\in \PI_{j_1}(X)},I_{Y\in \PI_{j_2}(X)})$. Approximating $\Sigma$ with the sample covariance $\hat\Sigma$ from Step 1 of Algorithm \ref{calibration:normalized} and applying the continuous mapping theorem, we have $\sqrt{n_v}\max_{j}(\hat{\CR}(\PI_j)-\CR(\PI_j))/\hat{\sigma}_j\overset{d}{\to} \max_j Z_j/\hat{\sigma}_j$ where $(Z_j)_{j=1,\ldots,m}$ follows $N(0,\Sigma)$. Therefore, using the $1-\beta$ quantile of $\max_j Z_j/\hat{\sigma}_j$ in the margin leads to a uniform control of the statistical errors in $\hat{\CR}(\PI_j)$'s with probability approximately $1-\beta$. Theorem \ref{feasibility:normalized validator_simple} states this approximation concretely.

The polynomial term in \eqref{finite sample error:normalized_simple} corresponds to the error of the joint central limit convergence, and the exponential error term quantifies the probability of the undesirable event that none of the candidate PIs satisfies the penalized constraint in Step 3. In practice, one usually targets at relatively high coverage rates, say at least $50\%$, and would train the candidate PIs in such a way that the true coverage rates of some of the PIs sufficiently exceed the highest target level, e.g., by heavily penalizing the coverage error. In that case, $\alpha_{\min} = \tilde{\alpha}$, and $\underline{\alpha}<\alpha_{\min}$ with a sufficient gap, therefore using a sample size $n_v$ of order $\Omega(\log^7(m)/\alpha_{\min})$ is enough for ensuring $\epsilon>0$ and the probability \eqref{finite sample error:normalized_simple} close to $1-\beta$ so that correct coverage rates are guaranteed with high confidence. This logarithmic dependence on $m$ allows us to advantageously use lots of candidate models in the calibration step.


Another notable feature of the finite-sample error is its independence of $K$, the number of target rates. This independence arises from the choice of the margin based on the Gaussian supremum that leads to a uniform control of the statistical errors in the empirical coverage rates. This provides the flexibility of constructing PIs for arbitrarily many target levels simultaneously.

Besides coverage attainment, our calibration procedure also possesses guaranteed performance regarding the other side of the feasibility-optimality tradeoff, provided that only the calibration data are used to assess the width in Step 3 of Algorithm \ref{calibration:normalized}. This is detailed in the following result:
\begin{theorem}\label{optimality guarantee: normalized}
Assume all the candidate PIs in Algorithm \ref{calibration:normalized} are selected from a hypothesis class $\mathcal H$ whose envelope $H(x):=\sup_{h\in \mathcal H}\lvert h(x) - \mathbb E_{\pi_X}[h(X)] \rvert$ has a finite sub-Gaussian norm $\Vert H \Vert_{\psi_2}<\infty$. If in Step 3 of Algorithm \ref{calibration:normalized} each $j^*_{1-\alpha_k}$ is selected according to
\begin{equation*}
j^*_{1-\alpha_k} = \argmin{1\leq j\leq m}\Big\{\frac{1}{n_v}\sum_{i=1}^{n_v}\lvert \PI_j(X'_i)\rvert:\hat{\CR}(\PI_j)\geq 1-\alpha_k + \frac{ q_{1-\beta}\hat{\sigma}_j}{\sqrt{n_v}}\Big\}  
\end{equation*}
and all other steps are kept the same, then for every $\epsilon>0$ we have
\begin{eqnarray*}
&&\mathbb P_{\mathcal D_v}\Big(\mathbb E_{\pi_X}[U_{j^*_{1-\alpha_k}}(X) - L_{j^*_{1-\alpha_k}}(X)]\leq \min_{j:\mathrm{CR(\mathrm{PI}_j)\geq 1-\alpha_k+\epsilon}}\mathbb E_{\pi_X}[U_{j}(X) - L_{j}(X)]+2C\epsilon\Vert H\Vert_{\psi_2}\text{ for all }k=1,\ldots,K\Big)\\
&\geq&1-8m\exp \Big( -\frac{1}{4}\max\big\{\epsilon - C\sqrt{\frac{\log (m/\beta)}{n_v}}, 0\big\}^2n_v\Big)
\end{eqnarray*}
for some universal constant $C$.
\end{theorem}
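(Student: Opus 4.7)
The plan is to construct two uniform concentration events over the $m$ candidate PIs --- one controlling the empirical widths and one controlling the empirical coverages --- and then run a short sandwich argument between the algorithm's selection $j^*_{1-\alpha_k}$ and an oracle minimizer of the true width over the $\epsilon$-feasible set $\{j:\CR(\PI_j)\geq 1-\alpha_k+\epsilon\}$. Since the events will not depend on $k$, the guarantee is automatically simultaneous over $k=1,\ldots,K$, which explains why $K$ does not appear in the failure probability.

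For the width event, write $W(\PI_j):=\mathbb{E}_{\pi_X}[U_j(X)-L_j(X)]$ and $\hat W(\PI_j):=\frac{1}{n_v}\sum_i(U_j(X_i')-L_j(X_i'))$. Each summand of $\hat W(\PI_j)-W(\PI_j)$ has the form $(U_j(X_i')-\mathbb{E}_{\pi_X}[U_j(X)])-(L_j(X_i')-\mathbb{E}_{\pi_X}[L_j(X)])$, so it is pointwise bounded in absolute value by $2H(X_i')$ and hence sub-Gaussian with norm a universal multiple of $\Vert H\Vert_{\psi_2}$. A sub-Gaussian Hoeffding-type inequality together with a union bound over the $m$ candidates gives, after absorbing universal constants,
\begin{equation*}
\mathbb{P}_{\mathcal D_v}\Big(\max_j|\hat W(\PI_j)-W(\PI_j)|>\epsilon_1\Vert H\Vert_{\psi_2}\Big)\leq 4m\exp\big(-\tfrac{1}{4}\epsilon_1^2 n_v\big).
\end{equation*}
For the coverage event, Hoeffding's inequality applied to the bounded indicators $I_{Y_i'\in\PI_j(X_i')}$ together with a union bound yields $\max_j|\hat{\CR}(\PI_j)-\CR(\PI_j)|\leq \epsilon_2$ with probability at least $1-4m\exp(-2\epsilon_2^2 n_v)$. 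Using $\hat\sigma_j\leq 1/2$ and the Gaussian-maximum bound $q_{1-\beta}\leq C\sqrt{\log(m/\beta)}$, the algorithm's safety margin satisfies $q_{1-\beta}\hat\sigma_j/\sqrt{n_v}\leq C\sqrt{\log(m/\beta)/n_v}$, so setting $\epsilon_2=\max\{\epsilon-C\sqrt{\log(m/\beta)/n_v},0\}$ ensures that on the coverage event any $j_0$ with $\CR(\PI_{j_0})\geq 1-\alpha_k+\epsilon$ automatically satisfies the Step 3 feasibility constraint $\hat{\CR}(\PI_{j_0})\geq 1-\alpha_k+q_{1-\beta}\hat\sigma_{j_0}/\sqrt{n_v}$, i.e.\ $j_0$ lies in the algorithm's empirical feasible set.

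On the intersection of both events the sandwich is immediate: taking $\epsilon_1=\epsilon$ and using the feasibility of $j_0$ together with the optimality of $j^*_{1-\alpha_k}$ for $\hat W$,
\begin{equation*}
W(\PI_{j^*_{1-\alpha_k}})\leq \hat W(\PI_{j^*_{1-\alpha_k}})+\epsilon\Vert H\Vert_{\psi_2}\leq \hat W(\PI_{j_0})+\epsilon\Vert H\Vert_{\psi_2}\leq W(\PI_{j_0})+2\epsilon\Vert H\Vert_{\psi_2},
\end{equation*}
where $j_0$ is any minimizer of $W$ over $\{j:\CR(\PI_j)\geq 1-\alpha_k+\epsilon\}$; the $2C\epsilon\Vert H\Vert_{\psi_2}$ in the theorem absorbs the constant from the sub-Gaussian norm comparison. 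The same event is valid for every $k$, and combining the two failure probabilities produces the stated $8m\exp(-\tfrac{1}{4}\max\{\epsilon-C\sqrt{\log(m/\beta)/n_v},0\}^2 n_v)$ tail. The main obstacle will be the bookkeeping of universal constants so that both tails really fit into the single expression required --- the $\tfrac{1}{4}$ exponent is set by the sub-Gaussian width bound (the Hoeffding exponent $2$ for coverage is easily the less binding one), and one must align the sub-Gaussian norm of $U_j-L_j-\mathbb{E}_{\pi_X}[U_j(X)-L_j(X)]$ against the pointwise envelope $2H$ using the standard inequality $\Vert\xi\Vert_{\psi_2}\leq c\Vert|\xi|\Vert_{\psi_2}$, which requires no new hypothesis on $\mathcal H$ beyond the given $\Vert H\Vert_{\psi_2}<\infty$.
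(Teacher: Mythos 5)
Your proposal is correct and follows essentially the same route as the paper's proof: a sub-Gaussian concentration bound on the empirical widths (union-bounded over the $m$ candidates), a Hoeffding bound on the empirical coverages combined with $q_{1-\beta}\hat\sigma_j/\sqrt{n_v}\leq C\sqrt{\log(m/\beta)/n_v}$, and then the sandwich through an oracle $j_0$ feasible for Step 3, with $K$-independence coming for free since neither event depends on $k$. The only cosmetic differences are that the paper splits $U_j-L_j$ into its two pieces before applying the sub-Gaussian tail (where you bound the difference directly by $2H$) and uses a one-sided rather than two-sided coverage deviation; neither affects the result.
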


\section{Alternate Calibration Scheme}\label{sec:alternate}
We present an alternate coverage calibration scheme than Algorithm \ref{calibration:normalized} that switches the Gaussian vector used in the margin from ``normalized'' to ``unnormalized''. To explain, the margin $q_{1-\beta}\hat\sigma_j/\sqrt{n_v}$ used in Algorithm \ref{calibration:normalized} is set proportional to the standard deviation of the empirical coverage rate for each individual PI. An alternative is to set $q'_{1-\beta}$, the $1-\beta$ quantile of $\max\{Z_j:1\leq j\leq m\}$, as a uniform margin for all the PIs, which also captures the uniform error in coverage rates due to the convergence $\sqrt{n_v}\max_{j}(\hat{\CR}(\PI_j)-\CR(\PI_j))\overset{d}{\to} \max_j Z_j$. This alternative scheme is depicted in Algorithm \ref{calibration:unnormalized}.
\begin{algorithm}
\caption{Unnormalized PI Calibration}
\label{calibration:unnormalized}
\DontPrintSemicolon
\SetKwInOut{Input}{Input}\SetKwInOut{Output}{Output}

\Input{~Same as in Algorithm \ref{calibration:normalized}.}

\BlankLine

\textbf{Procedure:}\;

\textbf{1.}~Same as in Algorithm \ref{calibration:normalized}.\;

\textbf{2.}~Compute $q'_{1-\beta}$, the $(1-\beta)$-quantile of $\max\{Z_j:1\leq j\leq m\}$ where $(Z_1,\ldots,Z_m)$ is a multivariate Gaussian with mean zero and covariance $\hat\Sigma$.\;

\textbf{3.}~For each coverage rate $k=1,\ldots,K$ compute
\begin{equation*}
    j^*_{1-\alpha_k} = \argmin{1\leq j\leq m}\Big\{\frac{1}{n+n_v}\Big(\sum_{i=1}^n\lvert \PI_j(X_i)\rvert+\sum_{i=1}^{n_v}\lvert\PI_j(X'_i)\vert\Big):\hat{\CR}(\PI_j)\geq 1-\alpha_k + \frac{ q'_{1-\beta}}{\sqrt{n_v}}\Big\}
\end{equation*}
where $\{X_i\}_{i=1}^n$ is the training data set.\;
\BlankLine
\Output{$\PI_{j^*_{1-\alpha_k}}$ for $k=1,\ldots,K$.}
\end{algorithm}


Algorithm \ref{calibration:unnormalized} enjoys a similar finite-sample performance guarantee:
\begin{theorem}\label{feasibility:unnormalized validator}
Under the same setting of Theorem \ref{feasibility:normalized validator_simple}, the finite sample error \eqref{finite sample error:normalized_simple} continues to hold for the PIs output by Algorithm \ref{calibration:unnormalized}, but with $\epsilon=\max\{\alpha_{\min}-\underline{\alpha}-C_1\sqrt{\log(m/\beta)/n_v},0\}$.
\end{theorem}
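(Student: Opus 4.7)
The proof plan parallels that of Theorem \ref{feasibility:normalized validator_simple}, with the variance-weighted margin $q_{1-\beta}\hat\sigma_j/\sqrt{n_v}$ replaced uniformly by $q'_{1-\beta}/\sqrt{n_v}$. I decompose the overall failure probability into two events: (a) a uniform feasibility event on which $\CR(\PI_j) \geq \hat{\CR}(\PI_j) - q'_{1-\beta}/\sqrt{n_v}$ for every $j$ simultaneously, which combined with the constraint in Step 3 of Algorithm \ref{calibration:unnormalized} forces $\CR(\PI_{j^*_{1-\alpha_k}}) \geq 1-\alpha_k$ for every $k$; and (b) a non-emptiness event guaranteeing that the feasible set in Step 3 is non-empty for every target level. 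The output of the algorithm then satisfies all coverage targets on the intersection of (a) and (b), so it suffices to bound the probability of each failing.

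For (a), I would invoke the high-dimensional Berry--Esseen bound of \citet{chernozhukov2017central} applied to the centered empirical coverage vector $\sqrt{n_v}(\hat{\CR}(\PI_j) - \CR(\PI_j))_{j=1}^m$ over rectangle (or half-space) test sets. This yields an approximation of the distribution of $\sqrt{n_v}\max_j(\hat{\CR}(\PI_j) - \CR(\PI_j))$ by that of $\max_j Z_j$ in Kolmogorov distance, at rate $(\log^7(mn_v)/n_v)^{1/6}$. A Gaussian-comparison step then replaces $\Sigma$ by $\hat\Sigma$ in the quantile of the maximum, controlled by a sub-Gaussian deviation bound on $\|\hat\Sigma - \Sigma\|_\infty$ together with an anti-concentration inequality for Gaussian maxima. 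This gives precisely the first polynomial term of \eqref{finite sample error:normalized_simple} and shows that event (a) fails with probability at most $\beta + C_1(\log^7(mn_v)/n_v)^{1/6}$.

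For (b), I would apply Bernstein's inequality to the indicator $I_{Y'_i \in \PI_{j^\circ}(X'_i)}$ for $j^\circ := \argmax{j} \CR(\PI_j)$, whose variance equals $\underline{\alpha}(1-\underline{\alpha})$. Requiring that $\hat{\CR}(\PI_{j^\circ}) \geq 1-\alpha_{\min} + q'_{1-\beta}/\sqrt{n_v}$ with high probability reduces to a Bernstein deviation of order $\min\{\epsilon,\epsilon^2/(\underline{\alpha}(1-\underline{\alpha}))\}$ in the exponent, where $\epsilon$ is the slack between $\alpha_{\min}-\underline{\alpha}$ and the uniform margin. Using the classical Gaussian maximum bound together with $\max_j \hat\Sigma_{jj}\leq 1/4$, one obtains $q'_{1-\beta}\leq C_0\sqrt{\log(m/\beta)}$, whence the required slack becomes $\epsilon = \max\{\alpha_{\min}-\underline{\alpha} - C_1\sqrt{\log(m/\beta)/n_v},0\}$ as stated.

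The main obstacle is the Gaussian-comparison step for the unnormalized maximum: one must show that swapping $\Sigma$ for $\hat\Sigma$ in the quantile of $\max_j Z_j$ inflates the error by at most $(\log^7(mn_v)/n_v)^{1/6}$, which is absorbed into the first polynomial term. This follows from the anti-concentration inequalities of \citet{chernozhukov2017central} after a concentration bound on $\|\hat\Sigma-\Sigma\|_\infty$. The remaining steps are a direct and in fact slightly simpler variant of the normalized proof, since the unnormalized margin avoids analyzing the sensitivity of the ratios $Z_j/\hat\sigma_j$ when some $\hat\sigma_j$ are small; the price paid is the coarser $\sqrt{\log(m/\beta)/n_v}$ scaling of $\epsilon$, which does not exploit the variance factor $\underline{\alpha}(1-\underline{\alpha})$ that appears (only) in the Bernstein-type subtracted term of the normalized version.
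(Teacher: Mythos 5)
Your overall architecture (a uniform coverage-error event plus a non-emptiness event, with the CLT handling the former and a Bennett/Bernstein bound with $q'_{1-\beta}\leq C\sqrt{\log(m/\beta)}$ handling the latter) matches the paper's in spirit, and your treatment of the non-emptiness event is essentially the paper's bound on $\mathbb P(E_1^c)$. The genuine gap is in your event (a): you apply the high-dimensional Berry--Esseen theorem to the \emph{full} vector $\sqrt{n_v}(\hat{\CR}(\PI_j)-\CR(\PI_j))_{j=1}^m$, but the theorem is stated ``for every collection of interval models,'' so some candidates may have true coverage arbitrarily close to (or exactly) $0$ or $1$. For such near-degenerate Bernoulli indicators the standardized moment constant $D$ in the Berry--Esseen bound blows up (and the variance lower bound fails), so the uniform CLT over all $m$ coordinates cannot be established at the claimed rate $\big(\log^7(mn_v)/(n_v\tilde\alpha)\big)^{1/6}$ --- note that $\tilde\alpha$ is determined by the target levels, not by the candidate coverages, so it cannot serve as a variance floor for all $m$ indicators.

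The paper avoids this with a three-event decomposition. The CLT lower bound is invoked only for the candidates with $\CR(\PI_j)\in(\tilde\alpha/2,\,1-\alpha_{\min})$ --- the only ones for which a uniform coverage lower bound is actually needed, since candidates with $\CR(\PI_j)\geq 1-\alpha_{\min}$ meet every target automatically --- and on this subcollection the Bernoulli variances are bounded below by roughly $\tilde\alpha/4$, which yields $D=C/\sqrt{\tilde\alpha}$; a stochastic-dominance step then shows the $(1-\beta)$-quantile of the maximum over the subcollection is at most $q'_{1-\beta}$. The remaining candidates with $\CR(\PI_j)\leq \tilde\alpha/2$ are handled by a separate event: Bennett's inequality plus a union bound shows they fail the margin constraint at every target level except with probability $m\exp(-C_3 n_v\tilde\alpha)$, which is then argued to be absorbed into the polynomial term. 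To repair your proof you would need to introduce this split (or impose an extra assumption bounding all candidate coverages away from $0$ and $1$, which the theorem does not make).
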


We compare Algorithms \ref{calibration:normalized} and \ref{calibration:unnormalized} in terms of statistical efficiency. Like for Algorithm \ref{calibration:normalized}, if the target coverage rates are above $50\%$ and the maximal achieved coverage rate of the candidate PIs sufficiently exceeds the highest target level, then $\alpha_{\min} = \tilde{\alpha}$, and $\underline{\alpha}<\alpha_{\min}$ with a sufficient gap. The new expression for $\epsilon$ in Theorem \ref{feasibility:unnormalized validator} now implies a sample size $n_v$ of order $\Omega\big(\frac{\log m}{\alpha_{\min}^2}+\frac{\log^7m}{\alpha_{\min}}\big)$ for Algorithm \ref{calibration:unnormalized} to guarantee correct coverage rates with high confidence. Note that the dependence on $\alpha_{\min}$ grows from a linear one in Algorithm \ref{calibration:normalized} to quadratic, suggesting that Algorithm \ref{calibration:normalized} is more powerful in the case of high target coverage levels. However, when the target coverage levels are moderate (around $50\%$), Algorithm \ref{calibration:unnormalized} is more efficient instead, due to a smaller margin than the one in Algorithm \ref{calibration:normalized} for PIs with moderate coverage rates. To explain, denote by $\PI_{\bar{j}}$ the PI with the maximal sample standard deviation (coverage closest to $50\%$), i.e., $\hat\sigma_{\bar{j}}=\max_{1\leq j\leq m}\hat\sigma_j$, then the margin used for $\PI_{\bar{j}}$ in Algorithm \ref{calibration:normalized} satisfies
\begin{eqnarray*}
q_{1-\beta}\hat\sigma_{\bar{j}}&=&\hat\sigma_{\bar{j}}\cdot 1-\beta\text{ quantile of }\max_{1\leq j\leq m}Z_j/\hat\sigma_j\\
&=&1-\beta\text{ quantile of }\max_{1\leq j\leq m}\hat\sigma_{\bar{j}}Z_j/\hat\sigma_j\\
&>&1-\beta\text{ quantile of }\max_{1\leq j\leq m}Z_j\text{\ \ \ \  if not all $\hat\sigma_j$'s are equal}\\
&=&q'_{1-\beta}
\end{eqnarray*}
which is the margin used by Algorithm \ref{calibration:unnormalized}.


\section{A Lagrangian Formulation  for Training Neural-Network-Based Prediction Intervals}\label{sec:calibrate NN}
We discuss a Lagrangian formulation of \eqref{OP2} for training neural networks to construct PIs. This formulation has the dual multiplier set as the tunable parameter to balance the tradeoff between the objective and the constraint in \eqref{OP2}.
Specifically, we use
$$L(\delta;\lambda)=\mathbb{E}_{\hat\pi_X}[U(X)-L(X)]+ \lambda (1-\alpha+t- \mathbb{P}_{\hat\pi}(Y\in[L(X),U(X)]))$$
or
$$L(\delta;\lambda)=\frac{1}{n}\sum_{i=1}^{n}(U(x_i)-L(x_i))+  \frac{\lambda}{n}\sum_{i=1}^{n}  I_{y_i\notin[L(x_i),U(x_i)]}+ \text{constant}$$
where $\lambda$ is the multiplier. In practice, we use a ``soft" version of the Lagrangian function for gradient descent. The ``soft" loss we adopt is introduced in Section \ref{sec:experiments}. 

We can build multiple PI models by using different parameters $\lambda>0$. Then, these models are calibrated using Algorithm \ref{calibration:normalized} or \ref{calibration:unnormalized} so that the coverage constraint in \eqref{OP1} is satisfied. Intuitively, if $\lambda$ is large, $\sum_{i=1}^{m}(U(X_i)-L(X_i))$ contributes less to the overall loss function, and hence the resulting interval tends to be wide but have a high coverage rate. On the contrary, a small $\lambda$ entails a short interval with a low coverage rate. Hence, a neural network is a reasonable approach to solve  \eqref{OP2} since a neural network with the above loss can directly address the tradeoff between the interval width and the coverage rate.

\section{Empirical Process Background}\label{sec:ep}
For a class $\mathcal{G}$ of measurable functions from $\mathcal{X}$ to $\R$ such that $\mathbb{E}_{\pi_X}[\lvert g(X)\rvert]<\infty$ for every $g\in\mathcal{G}$, we say it is weak (resp. strong) $\pi_X$-Glivenko–Cantelli (GC) if $\sup_{g\in\mathcal{G}}\big\lvert \frac{1}{n}\sum_{i=1}^ng(X_i)-\mathbb{E}_{\pi_X}[g(X)] \big\rvert \to 0$ in probability (resp. almost surely) as $n\to\infty$. For a class $\mathcal{S}$ of measurable subsets of $\mathcal{X}$, i.e., $S\subset\mathcal{X}$ for every $S\in\mathcal{S}$, we say it's weak (resp. strong) $\pi_X$-GC if the corresponding indicator class $\{I_{\cdot \in S}:S\in \mathcal{S}\}$ is weak (resp. strong) $\pi_X$-GC. When no ambiguity arises, we sometimes suppress the underlying distribution $\pi_X$.

A collection of $k$ points $\{x_1,\ldots,x_k\}\subset \mathcal{X}$ is said to be shattered by a class $\mathcal{S}$ of subsets of $\mathcal{X}$ if $\card{\{\{x_1,\ldots,x_k\}\cap S:S\subset\mathcal{S}\}}=2^k$, where $\card{\cdot}$ denotes the cardinality of a set. The VC dimension of the class $\mathcal{S}$ is defined as $\vc{\mathcal{S}}:=\max\{k:\exists\; \{x_1,\ldots,x_k\}\subset\mathcal{X}\text{ shattered by }\mathcal{S}\}$. It is called a VC class if $\vc{\mathcal{S}}<\infty$. A class $\mathcal{G}$ of functions from $\mathcal{X}$ to $\R$ is called VC-subgraph with VC dimension $d$ if the set of subgraphs $\mathcal{S}_{\mathcal{G}}:=\{\{(x,z)\in\mathcal{X}\times \R:z<g(x) \}:g\in\mathcal{G}\}$ is a VC class on the product space $\mathcal{X}\times \R$ with $\vc{\mathcal{S}_{\mathcal{G}}}=d$. Without ambiguity we use the same notation $\vc{\mathcal{G}}$ to denote the VC dimension of a VC-subgraph class $\mathcal{G}$. Note that VC or VC-subgraph classes are combinatorial in nature and distribution-independent, whereas GC classes here are with respect to a specific distribution $\pi_X$.

Given a class $\mathcal{G}$ of functions from $\mathcal{X}$ to $\R$, and a probability measure $Q$ on $\mathcal{X}$, the $\epsilon$-covering number $N(\epsilon,\mathcal{G},L_2(Q))$ is the minimum number of $L_2(Q)$-balls of size $\epsilon$ needed to cover the whole class $\mathcal{G}$. A pair of functions $l,u:\mathcal{X}\to\R$ is called a bracket of size $\epsilon$ with respect to $L_2(Q)$ if $l\leq u$ almost surely and $(\mathbb{E}_Q[(u-l)^2])^{1/2}\leq \epsilon$, and every function $g$ such that $l\leq g\leq u$ is said to be contained in the bracket. The $\epsilon$-bracketing number $N_{[]}(\epsilon,\mathcal{G},L_2(Q))$ is the minimum number of brackets of size $\epsilon$ needed to cover the whole class $\mathcal{G}$.

The above terminologies extend to the product space $\mathcal{X}\times \mathcal{Y}$ with the joint distribution $\pi$ in a straightforward manner, i.e., by replacing each occurrence of $\mathcal{X}$ and $\pi_X$ with $\mathcal{X}\times \mathcal{Y}$ and $\pi$ respectively.

\section{Experimental Details and Additional Experiments}\label{sec:numerics additional}

We illustrate additional experiments and experimental details, which are divided into two subsections. Appendix \ref{sec:low} presents and visualizes different PI construction approaches on one-dimensional examples. Appendix \ref{sec:pareto} illustrates the Pareto curves for the results in Section \ref{sec:experiments}. Appendix \ref{sec:high} provides details of our experimental implementations.

\subsection{Illustration in One-Dimensional Examples}\label{sec:low}

We conduct experiments and visualize PI construction on three univariate examples. Table \ref{single-level result} shows the generative distributions for the three univariate synthetic datasets. Implementation details can be found in Section \ref{sec:high}.

\begin{table}[h] 
  \centering
  \begin{tabular}{ccccc}
    \hline
    Index & Tested Function & Function Space & Variable Space & Noise Type($\epsilon$)\\
    \hline
    \vspace{0.5mm}
      1 & $f(x) = \sin(x) + x\epsilon$  & $\mathbb{R}\rightarrow \mathbb{R}$ & $x\sim Unif[-3,3]$ & $\epsilon\sim Unif[-2,2]$ \\
    \vspace{0.5mm}
      2 & $f(x) = \frac{x^{2}}{2} + \cos{x} + x\epsilon$  & $\mathbb{R}\rightarrow \mathbb{R}$ & $x\sim Unif[-3,3]$ & $\epsilon \sim Unif[-2,2]$ \\
      
      3 & $f(x) = x^2 + \frac{\sin(x)}{8} + x\epsilon$  & $\mathbb{R}\rightarrow \mathbb{R}$ & $x\sim Unif[-3,3]$ & $\epsilon \sim Unif[-1,2]$ \\
    \hline
  \end{tabular}
  \caption{Tested Functions}
 \label{single-level result}  
\end{table}


Figure \ref{1D_examples} illustrates the PIs. We test PIs on a testing dataset and evaluate their performances using the metrics of the coverage rate ($CR$) and the interval width ($IW$). All baselines are targeted to attain the prediction level $95\%$. The titles of all plots are named as Synthetic 1d-$\{$index of synthetic dataset$\}$. Each row shows the performances of the same approach but on different datasets, and each column shows the performances of different approaches on the same dataset. The upper and lower bounds of the PIs that \textbf{attain} the 95\% target level are shown in solid and green lines, otherwise in dashed and red lines. The covered areas are shaded with their corresponding colors. Data points are the black dots in the plot. The corresponding $CR$ and $IW$ are shown in the label at the upper center of each plot.

We observe that on all the datasets, our approaches NNGN and NNGU outperform other methods in terms of always attaining the $95\%$ target prediction level and having narrow intervals at the same time. QRF, CV+, SVMQR and NNVA do not attain the $95\%$ prediction levels, which is consistent with the observation that no finite-sample coverage guarantees are known for these approaches. SCQR and SCL attain the $95\%$ prediction level but their intervals appear much wider than NNGN and NNGU. Also, since NNGU is designed to be more conservative than NNGN, the intervals calibrated by NNGN are generally shorter than the ones calibrated by NNGU. This observation is consistent with Table \ref{multiple-level} in Section \ref{sec:experiments}.

\begin{figure}[H]
 \centering
 \includegraphics[height = 18cm, width = 17cm]{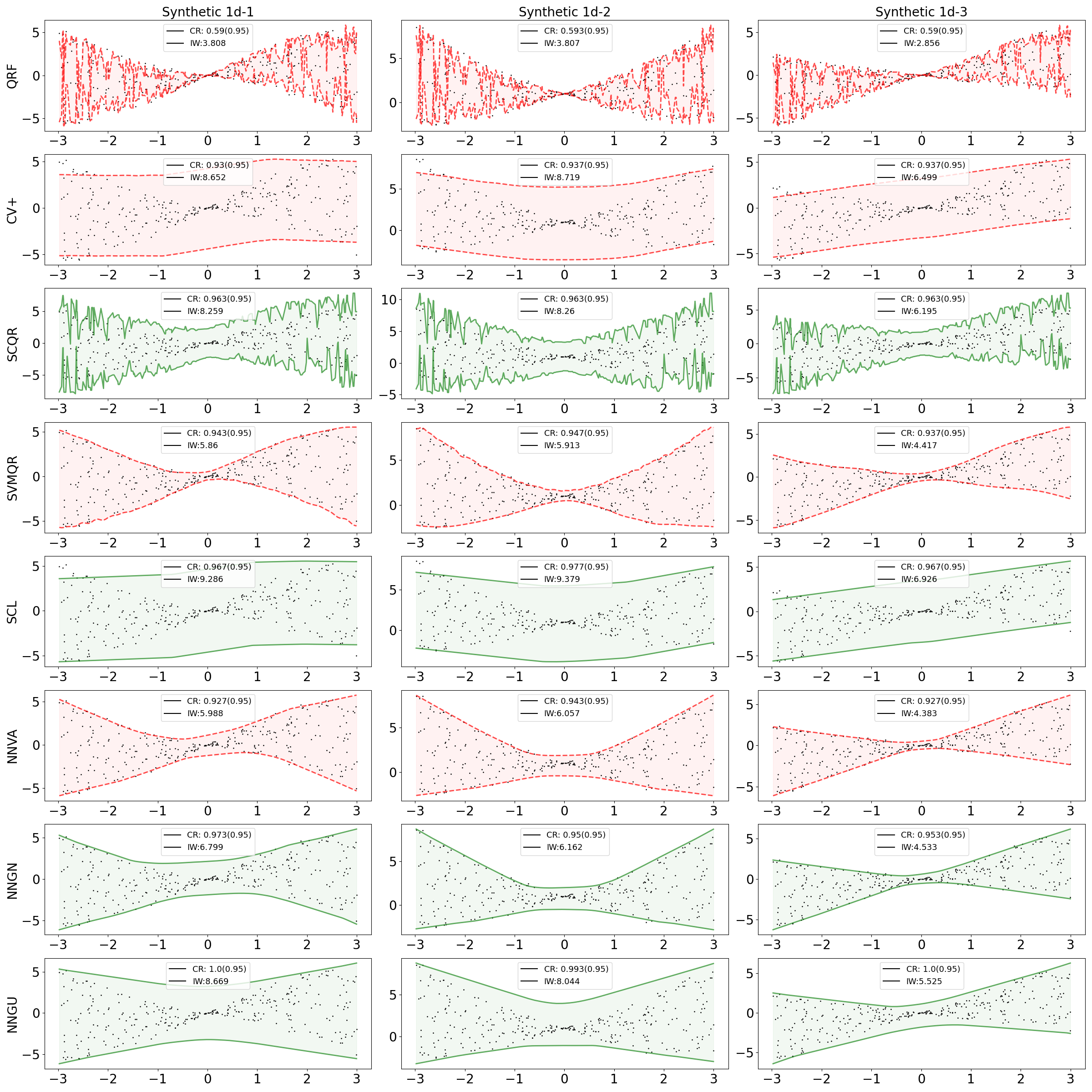}
 \caption{Comparison of single PI constructions. PIs that attain $95\%$ target level are shown in solid and green lines, otherwise in dashed and red lines.} 
 \label{1D_examples}
\end{figure}

\subsection{Pareto Curves}\label{sec:pareto}
We illustrate the Pareto curves for the simultaneous PIs results with 19 target prediction levels in Section \ref{sec:experiments}, in terms of the coverage rate ($CR$) (X-axis) and the interval width ($IW$) (Y-axis), to offer a more intuitive comparison. The titles of all plots are named as the datasets in Section \ref{sec:experiments}. The arrangement of the plots are the same as the ones in Section \ref{sec:low}. Specifically, each subplot contains two curves, one constructed with (input coverage, width) and another constructed with (achieved coverage, width). The curve representing the input coverage and width is shown in dashed line while the one representing the achieved coverage and width is shown in solid line along with a $90\%$ confidence interval as the shaded area. Each point in the line denotes the average obtained from 
$N = 50$ repetitions of trials.

Ideally, a method performs well if in its Pareto curves, 1) the dashed line is on the left of the solid line, and 2) at the same time there is no level intersection between the solid line and the shaded area, meaning that within the same input coverage level, there is a sufficiently high probability in achieving the coverage level. Other than that, a smaller average $IW$ at each input coverage level is better since it refers to a less conservative predictive ability. From Figures \ref{pareto_1} and \ref{pareto_2}, we notice that CV+, NNGN, and NNGU have a lot more cases where there is no intersection between the dashed line and shaded area while SCL, QRF, SVMQR, SCQR and NNVA do not. Also the former three methods tend to generate much shorter PIs than the others. Specifically, there are $5$ datasets where there is no intersection between the dashed line and the shaded area for CV+ and NNGN, and $4$ for NNGU. However, NNGN and NNGU can achieve a much shorter $MIW$ (average of $IW$ over all input coverage levels) than the rest of the methods.

One might notice that the two Pareto curves are a bit farther away from each other for NNVA, NNGN, and NNGU when the input coverage is small. This issue can be solved by choosing the calibration parameter more appropriately by, for example, training a more continuous spectrum of candidate models.

\begin{figure}[H]
 \centering
 \includegraphics[height = 18cm, width = 17cm]{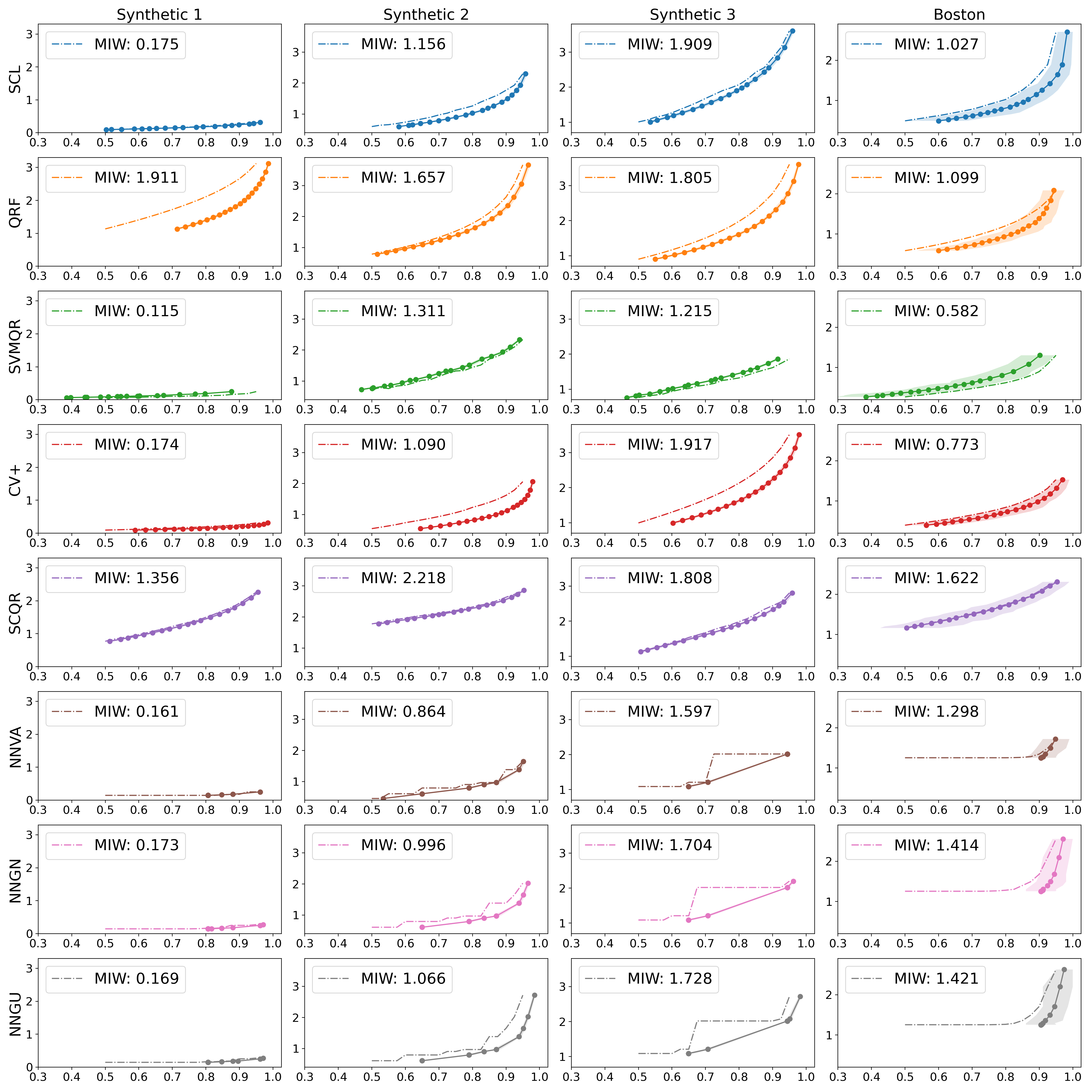}
 \caption{Comparison of simultaneous PIs constructions in synthetic datasets 1 to 3, and Boston dataset. Dashed lines are constructed with (input coverage, width), and solid lines are constructed with (achieved coverage, width). Each point in the line denotes the average obtained from $N = 50$ repetitions of trials. Shaded area is the $90\%$ confidence interval.} 
 \label{pareto_1}
\end{figure}

\begin{figure}[H]
 \centering
 \includegraphics[height = 18cm, width = 17cm]{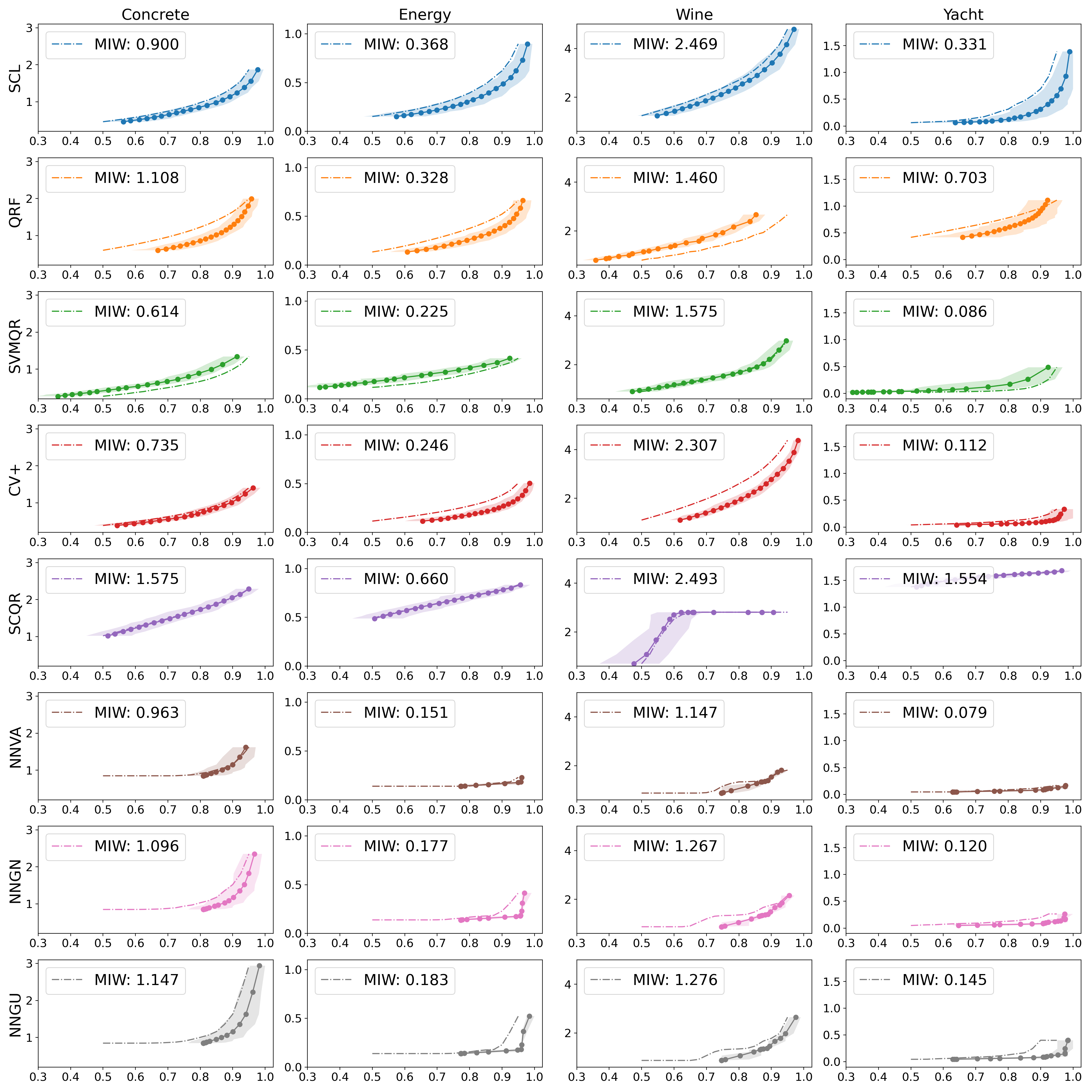}
 \caption{Comparison of simultaneous PIs constructions in Concrete, Energy, Wine and Yacht dataset. Dashed lines are constructed with (input coverage, width), and solid lines are constructed with (achieved coverage, width). Each point in the line denotes the average obtained from $N = 50$ repetitions of trials. Shaded area is the $90\%$ confidence interval.} 
 \label{pareto_2}
\end{figure}

\subsection{Implementation Details}\label{sec:high}

We elaborate more details about our experimental implementations in Sections \ref{sec:experiments} and \ref{sec:low}.  

\textbf{Datasets.} Three synthetic datasets and five real-world benchmark datasets have been shown in Section \ref{sec:experiments}. The real-world datasets are the open-access datasets “Boston”, “Concrete”, “Energy”, “Wine” and “Yacht” that have been widely used in previous studies \citep{hernandez2015probabilistic,gal2016dropout,lakshminarayanan2017simple} for regression tasks. Table \ref{datasets} shows their details.

\begin{table}[h] 
  \centering
  \begin{tabular}{cccc}
    \hline
    Dataset & N & d & Open-access Link\\
    \hline
    Boston: Boston Housing & 506 & 13 & kaggle.com/c/boston-housing\\
    Concrete: Concrete Strength & 1030 & 8 & kaggle.com/aakashphadtare/concrete-data\\
    Energy: Energy Efficiency & 768  & 8 & kaggle.com/elikplim/eergy-efficiency-dataset\\
    Wine: Red Wine Quality & 1599 & 11 & kaggle.com/uciml/red-wine-quality-cortez-et-al-2009  \\
    Yacht: Yacht Hydrodynamics & 308 & 6 & archive.ics.uci.edu/ml/datasets/yacht+hydrodynamics\\
    \hline
  \end{tabular}
  \caption{Full names and details of benchmarking regression datasets. $N$ is the number of samples in the dataset and $d$ is the dimension of the feature vector.}
 \label{datasets}  
\end{table}

The data are split into training and testing sets as follows. For the methods where validation data are needed for calibration (NNVA, NNGN, NNGU), we use a proportion of training data as the validation data. In the single PI case, all of the real-world datasets have 80\%/20\% training/testing split. For 
NN methods, ``Boston'' has $24\%$ of data for validation; ``Concrete'' has $16\%$ of data for validation; ``Energy'' has $10\%$ of data for validation; ``Wine'' has $12\%$ of data for validation; ``Yacht'' has $15\%$ of data for validation. The three multivariate synthetic datasets in Section \ref{sec:experiments} have 1600/3000 training/testing split. For NN methods, 350 data points are used for validation. In the simultaneous PIs case, all the real-world datasets have 80\%/20\% training/testing split. For NN methods, ``Boston'' has $24\%$ of data for validation; ``Concrete'' has $16\%$ of data for validation; ``Energy'' has $24\%$ of data for validation; ``Wine'' has $24\%$ of data for validation; ``Yacht'' has $24\%$ of data for validation. The three multivariate synthetic datasets in Section \ref{sec:experiments} have 1600/3000 training/testing split. For NN methods, 350 data points are used for validation. In addition, for the experiments in Section \ref{sec:low}, the three univariate synthetic datasets have 1200/300 training/testing split. For NN methods, $60$ data points are used for validation.

\textbf{Implementations.}
We provide details of our algorithms and baseline approaches in Section \ref{sec:experiments}. They appear in the same order as in Tables \ref{single-level} and \ref{multiple-level}. 

(1) QRF: quantile regression forests, as proposed in \citet{meinshausen2006quantile}. Our code is based on \textit{RandomForestQuantileRegressor} from the package \textit{scikit-garden} in Python.

(2) CV+: CV+ prediction interval, as proposed in Section 3 in \citet{barber2019predictive}. In addition, the base regression algorithm is a neural network using mean square loss.

(3) SCQR: split conformalized quantile regression, as proposed in Algorithm 1 in \citet{romano2019conformalized}. The base quantile regression algorithm is exactly the QRF in (1).

(4) SVMQR: quantile regression via support vector machine, the code of which is available in \citet{steinwart2017liquidsvm}.

(5) SCL: split conformal learning with correction. The original split conformal learning is described in Algorithm 2 in \citet{lei2018distribution}. The base regression algorithm is a neural network using mean square loss. Moreover, we apply a ``correction" method to stipulate the coverage constraint with high confidence, which has been proposed in Equation 7 in Proposition 2b in \cite{vovk2012conditional} to enhance the split/inductive conformal learning. Specifically, we change the prediction level $1-\alpha$ to $1-\alpha'$ by letting  
$$\beta \ge bin_{n_v,\alpha} (\lfloor \alpha'(n_v + 1) - 1\rfloor),$$
where $1-\beta$ is the prefixed confidence level (90\% throughout our experiments), $n_v$ is the size of the calibration set and $bin_{n_v,\alpha}$ is the cumulative binomial
distribution function with $n_v$ trials and probability of success $\alpha$.

(6) NNVA: neural networks using the loss shown in Section \ref{sec:experiments} with a vanilla scheme. In the vanilla scheme, we build multiple PI models by choosing different parameters $\lambda >0$ in the loss, and then select PIs with the smallest interval width among those whose empirical coverage rates on the validation dataset is larger than the target prediction levels, i.e., without the Gaussian margin in Algorithm \ref{calibration:normalized}.

(7) Ours-NNGN: neural networks using the loss shown in Section \ref{sec:experiments} with the normalized Gaussian PI calibration in Algorithm \ref{calibration:normalized}. We build multiple PI models by choosing different parameters $\lambda >0$ in the loss.

(8) Ours-NNGU: neural networks using the loss shown in Section \ref{sec:experiments} with the unnormalized Gaussian PI calibration in Algorithm \ref{calibration:unnormalized}. We build multiple PI models by choosing different parameters $\lambda >0$ in the loss.

For (6)(7)(8), in order to improve the training of the neural networks, we use the ensemble technique in \citeAPX{pearce2018high}. That is, instead of directly taking the outputs of one network, we train networks several times by using different initializations and then define the final prediction $U$ and $L$ as
\begin{align*}
    &U = \overline{U} + 1.96\sigma_{U}, \text{ where } \overline{U} = \frac{1}{e}\sum_{j = 1}^{e}\hat{U}_{j}, \ \sigma_{U} = \frac{1}{e-1}\sum_{j = 1}^{e}(\hat{U}_{j}-\overline{U})^2 \\
    &{L} = \overline{L} - 1.96\sigma_{L}, \text{ where } \overline{L} = \frac{1}{e}\sum_{j = 1}^{e}\hat{L}_{j},\
    \sigma_{L} = \frac{1}{e-1}\sum_{j = 1}^{e}(\hat{L}_{j}-\overline{L})^2
\end{align*}
where $e$ denotes the ensemble size. 

\textbf{Architectures and Hyper-parameters.} For (2)(5)(6)(7)(8), we train fully connected neural networks with ReLU activations, using the Adam method for stochastic optimization. Note that the base neural networks in (2)(5) have only one output unit (point prediction) while our neural networks in (6)(7)(8) have two output units (lower and upper bounds of PIs). Nevertheless, the neural networks in (2)(5)(6)(7)(8) have the same architecture of hidden layers on each dataset. On ``Boston'' and ``Concrete'', we have 1 hidden layer and each hidden layer has $50$ neurons. On ``Energy'', ``Wine'' and ``Yacht'', we have 2 hidden layers and each hidden layer has $64$ neurons. On three multivariate synthetic datasets in Section \ref{sec:experiments}, we have 2 hidden layers and each hidden layer has $50$ neurons. On the three univariate synthetic datasets in Section \ref{sec:low}, we have 1 hidden layer and each hidden layer has $50$ neurons.
In addition, $N=50$ repetitions of trials are run on each dataset. The confidence level is $1-\beta=90\%$ in all experiments. 

\section{Technical Proofs}
\label{sec:proofs}

\subsection{Proofs for Results in Section \ref{sec:learning}}

\begin{proof}[Proof of Theorem \ref{sensitivity bound}]
We first present a lemma:
\begin{lemma}\label{prob lower bound}
Let $\xi$ be a $[0,1]$-valued random variable such that $\mathbb E[\xi] \leq 1-\beta$ for some $\beta\in[0,1]$, then we have $\mathbb{P}(\xi \leq 1-\beta')\geq \beta-\beta'$ for every $\beta'\in(0,\beta)$.
\end{lemma}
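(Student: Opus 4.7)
The plan is to prove Lemma \ref{prob lower bound} by a short one-shot argument using a reverse-Markov-style bound on $Y := 1-\xi$, which takes values in $[0,1]$ and satisfies $\mathbb{E}[Y] \geq \beta$. The event $\{\xi \leq 1-\beta'\}$ is exactly $\{Y \geq \beta'\}$, so it suffices to lower-bound $\mathbb{P}(Y \geq \beta')$.

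Concretely, I would split the expectation as
\[
\mathbb{E}[Y] = \mathbb{E}[Y\,\mathbf{1}_{Y < \beta'}] + \mathbb{E}[Y\,\mathbf{1}_{Y \geq \beta'}] \leq \beta'\,\mathbb{P}(Y < \beta') + 1\cdot \mathbb{P}(Y \geq \beta'),
\]
using the upper bound $Y \leq \beta'$ on the first event and $Y \leq 1$ on the second. This simplifies to $\mathbb{E}[Y] \leq \beta' + (1-\beta')\mathbb{P}(Y \geq \beta')$. Combining with $\mathbb{E}[Y] \geq \beta$ and solving yields
\[
\mathbb{P}(Y \geq \beta') \geq \frac{\beta-\beta'}{1-\beta'} \geq \beta-\beta',
\]
where the last inequality uses $1-\beta' \leq 1$ and $\beta-\beta' \geq 0$. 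Translating back to $\xi$ gives the claim.

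There is really no serious obstacle here; the only thing to watch is the edge case $\beta' \in (0,\beta)$ (so that $1-\beta' \in (0,1)$ and the division is valid), and to note that the bound $(\beta-\beta')/(1-\beta')$ is actually slightly stronger than the stated $\beta-\beta'$, so the inequality in the lemma holds with room to spare. The lemma is essentially a two-line consequence of truncating the expectation of $1-\xi$ at the threshold $\beta'$.
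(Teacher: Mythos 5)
Your proof is correct and is essentially the same argument as the paper's: the paper lower-bounds $\mathbb{E}[\xi]$ by the truncation $\xi' = (1-\beta')\mathbf{1}_{\xi > 1-\beta'}$, which is exactly your splitting of $\mathbb{E}[1-\xi]$ at the threshold $\beta'$, and both yield the same intermediate bound $(\beta-\beta')/(1-\beta')$ before relaxing to $\beta-\beta'$. No gaps.
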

\begin{proof}
Define
\begin{equation*}
    \xi':=\begin{cases}
    0&\text{if }\xi\leq 1-\beta'\\
    1-\beta'&\text{otherwise}
    \end{cases}.
\end{equation*}
Then $\xi\geq \xi'$ almost surely, hence
\begin{equation*}
    1-\beta \geq \mathbb E[\xi] \geq \mathbb E[\xi'] = (1-\beta')(1- \mathbb{P}(\xi\leq 1-\beta'))
\end{equation*}
which gives
\begin{equation*}
    \mathbb{P}(\xi\leq 1-\beta')\geq \frac{\beta-\beta'}{1-\beta'}\geq \beta-\beta'.
\end{equation*}
\end{proof}
Now we turn to the main proof. For any $\epsilon >0$, let $(L_{\epsilon}, U_{\epsilon})\in\mathcal H\times \mathcal H$ be an $\epsilon$-optimal solution of \eqref{OP1}, i.e., $\mathbb{P}_{\pi}(Y\in[L_{\epsilon}(X),U_{\epsilon}(X)])\geq 1-\alpha$ and $\mathbb{E}_{\pi_X}[U_{\epsilon}(X)-L_{\epsilon}(X)]\leq R^*(\mathcal{H}) + \epsilon$. Consider the enlarged interval $L^c_{\epsilon}:=L_{\epsilon} - c$, $U^c_{\epsilon}:=U_{\epsilon} + c$, where $c\geq 0$ is a constant. Let
\begin{equation*}
P(c):=\mathbb{P}_{\pi}(Y\in[L^c_{\epsilon}(X),U^c_{\epsilon}(X)])    
\end{equation*}
be the coverage rate of the new interval. $P(c)$ satisfies $\lim_{c\to +\infty}P(c)=1$ because of the continuity of measure, hence the smallest $c$ such that the coverage rate is above $1-\alpha + t$ is finite, i.e.,
\begin{equation*}
    c^*:=\inf\big\{c\geq 0:P(c) \geq 1-\alpha + t\big\}<\infty.
\end{equation*}

We want to derive an upper bound for $c^*$. If $c^*=0$, every non-negative number is a valid upper bound, therefore we focus on the non-trivial case $c^*> 0$. In this case, we must have $P(c^*)=1-\alpha + t$ due to continuity of the coverage probability function $P(\cdot)$. To explain the continuity of $P(\cdot)$, by conditioning on $X$ we can rewrite
\begin{eqnarray}
P(c)&=&\mathbb{E}_{\pi_X}[\mathbb{P}_{\pi}(Y\in[L^c_{\epsilon}(X),U^c_{\epsilon}(X)]\vert X)]\label{P(c) conditioning}\\
\notag &=&\mathbb{E}_{\pi_X}\Big[\int_{L_{\epsilon}(X)-c}^{U_{\epsilon}(X)+c}p(y\vert X)dy\Big]
\end{eqnarray}
and note that $\int_{L_{\epsilon}(X)-c}^{U_{\epsilon}(X)+c}p(y\vert X)dy$ is continuous in $c$ and bounded by $1$ almost surely, therefore the continuity follows from bounded convergence theorem. For every $c\in [0,c^*]$, we have $P(c)\leq 1-\alpha+t$ by the definition of $c^*$, hence applying Lemma \ref{prob lower bound} to the conditioned form \eqref{P(c) conditioning} of $P(c)$ gives
\begin{equation*}
    \mathbb{P}_{\pi_X}\Big(\mathbb{P}_{\pi}(Y\in[L^c_{\epsilon}(X),U^c_{\epsilon}(X)]\vert X)\leq 1-\frac{\alpha-t}{3}\Big)\geq \frac{2}{3}(\alpha-t).
\end{equation*}


Now we write
\begin{eqnarray*}
1-\alpha+t&=&P(c^*)\\
&=&\mathbb{P}_{\pi}(Y\in[L_{\epsilon}^{c^*}(X),L_{\epsilon}(X))\cup (U_{\epsilon}(X),U_{\epsilon}^{c^*}(X)])+\mathbb{P}_{\pi}(Y\in[L_{\epsilon}(X),U_{\epsilon}(X)])\\
&\geq&\mathbb{E}_{\pi_X}[\mathbb{P}_{\pi}(Y\in[L_{\epsilon}^{c^*}(X),L_{\epsilon}(X))\cup (U_{\epsilon}(X),U_{\epsilon}^{c^*}(X)]\vert X)]+1-\alpha\\
&&\text{\ \ by conditioning on the $X$ and feasibility of $(L_{\epsilon},U_{\epsilon})$}\\
&=&\mathbb{E}_{\pi_X}\Big[\int_0^{c^*} p(L_{\epsilon}^c(X)\vert X) + p(U_{\epsilon}^c(X)\vert X)dc\Big] + 1-\alpha\\
&\geq&\mathbb{E}_{\pi_X}\Big[\int_0^{c^*}\Gamma\big(X,1 - \mathbb{P}_{\pi}(Y\in[L_{\epsilon}^c(X),U_{\epsilon}^c(X)]\vert X)\big)dc\Big]+ 1-\alpha\\
&&\text{\ \ by the definition of $\Gamma(\cdot,\cdot)$}\\
&=&\int_0^{c^*}\mathbb{E}_{\pi_X}\big[\Gamma\big(X,1 - \mathbb{P}_{\pi}(Y\in[L_{\epsilon}^c(X),U_{\epsilon}^c(X)]\vert X)\big)\big]dc+ 1-\alpha\\
&\geq&\int_0^{c^*}\gamma_{\frac{\alpha-t}{3}}\mathbb{P}_{\pi_X}\big(\mathbb{P}_{\pi}(Y\in[L_{\epsilon}^c(X),U_{\epsilon}^c(X)]\vert X)\leq 1-\frac{\alpha-t}{3}\text{ and }\Gamma(X,\frac{\alpha-t}{3})\geq \gamma_{\frac{\alpha-t}{3}}\big)dc+ 1-\alpha\\
&\geq&\frac{\alpha-t}{3}\gamma_{\frac{\alpha-t}{3}}c^*+1-\alpha.
\end{eqnarray*}
Therefore
\begin{equation*}
    c^*\leq \frac{3t}{(\alpha-t)\gamma_{\frac{\alpha-t}{3}}}.
\end{equation*}
Note that $(L_{\epsilon}^{c^*},U_{\epsilon}^{c^*})$ is feasible for \eqref{OP3}, therefore by optimality we have
\begin{eqnarray*}
    R^*_t(\mathcal{H})&\leq &\mathbb{E}_{\pi_X}[U_{\epsilon}^{c^*}(X)-L_{\epsilon}^{c^*}(X)]\\
    &\leq& R^*(\mathcal{H}) + \epsilon + 2c^*\\
    &\leq& R^*(\mathcal{H}) + \epsilon +  \frac{6t}{(\alpha-t)\gamma_{\frac{\alpha-t}{3}}}.
\end{eqnarray*}
Since $\epsilon$ is arbitrary, sending $\epsilon$ to $0$ completes the proof.
\end{proof}

\begin{proof}[Proof of Theorem \ref{convergence rate}]
Let $\hat{\mathcal{H}}^2_t$ and $\mathcal{H}^2_t$ be the feasible set of \eqref{OP2} and \eqref{OP3} respectively. When the events
\begin{equation*}
    W_{\epsilon}:=\big\{\sup_{h\in \mathcal{H}}\lvert \mathbb{E}_{\hat\pi_X}[h(X)]-\mathbb{E}_{\pi_X}[h(X)] \rvert\leq \epsilon\big\}
\end{equation*}
and
\begin{equation*}
    C_t:=\big\{\sup_{L,U\in \mathcal{H}\;\text{and}\;L\leq U}\lvert \mathbb{P}_{\hat\pi_X}(Y\in [L(X),U(X)])-\mathbb{P}_{\pi_X}(Y\in [L(X),U(X)]) \rvert\leq t\big\}
\end{equation*}
occur, it holds that $\mathcal{H}^2_{2t}\subset \hat{\mathcal{H}}^2_t\subset \mathcal{H}^2_0$, therefore $(\hat L_t^*,\hat U_t^*)\in\hat{\mathcal{H}}^2_t\subset \mathcal{H}^2_0$ is feasible for \eqref{OP1}. We also have
\begin{eqnarray}
&&\notag\mathbb{E}_{\pi_X}[\hat U_t^*(X)-\hat L_t^*(X)]\\
\notag &\leq& \mathbb{E}_{\hat\pi_X}[\hat U_t^*(X)-\hat L_t^*(X)] + 2\epsilon\text{\ \ because of $W_{\epsilon}$}\\
\notag&\leq& \inf_{(L,U)\in\mathcal{H}^2_{2t}\subset \hat{\mathcal{H}}^2_t}\mathbb{E}_{\hat\pi_X}[U(X)-L(X)]+2\epsilon\text{\ \ by optimality of $(\hat L_t^*,\hat U_t^*)$ in $\hat{\mathcal{H}}^2_t$}\\
\notag&\leq& \inf_{(L,U)\in\mathcal{H}^2_{2t}}\mathbb{E}_{\pi_X}[U(X)-L(X)]+4\epsilon\text{\ \ because of $W_{\epsilon}$}\\
\notag&=&\mathcal{R}_{2t}^*(\mathcal{H})+4\epsilon\\
\notag &\leq &\mathcal{R}^*(\mathcal{H})+\frac{12t}{(\alpha-2t)\gamma_{\frac{\alpha-2t}{3}}}+4\epsilon\text{\ \ by Theorem \ref{sensitivity bound}}.
\end{eqnarray}
Note that $\mathbb{P}(W_{\epsilon}\cap C_t)\geq 1-\mathbb{P}(W_{\epsilon}^c)-\mathbb{P}(C_t^c)\geq 1-\phi_1(n,\epsilon,\mathcal{H}) - \phi_2(n,t,\mathcal{H})$, concluding the theorem.
\end{proof}

\begin{proof}[Proof of Theorem \ref{rate:vc-subgraph}]
We will need the following results:
\begin{lemma}[Adapted from Theorem 2.6.7 in \citeAPX{van1996weak}]\label{vc:covering bound}
Let $\Vert g\Vert_{Q,2}$ be the $L_2$-norm of a function $g$ under a probability measure $Q$. For a VC-subgraph class $\mathcal{G}$ of functions from $\mathcal{X}$ to $\R$, and every probability measure $Q$ on $\mathcal{X}$, we have for every $\epsilon \in (0,1)$
\begin{equation*}
    N(\epsilon \Vert G\Vert_{Q,2},\mathcal{G},L_2(Q))\leq C(\vc{\mathcal{G}}+1)(16e)^{\vc{\mathcal{G}}+1}\big(\frac{1}{\epsilon}\big)^{2\vc{\mathcal{G}}}
\end{equation*}
where $G(x):=\sup_{g\in\mathcal{G}}\lvert g(x) \rvert$ is the envelope function of $\mathcal{G}$ and $C$ is a universal constant.
\end{lemma}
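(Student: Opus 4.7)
The plan is the classical subgraph-to-set reduction: first prove a uniform covering-number bound for VC classes of sets, then transfer it to the function class via the envelope-weighted product measure on $\mathcal{X}\times\R$.

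\emph{Step 1: Reduction to a VC set class.} Let $V:=\vc{\mathcal{G}}$ and let $\mathcal{S}_{\mathcal{G}}=\{A_g:g\in\mathcal{G}\}$ where $A_g:=\{(x,t)\in\mathcal{X}\times\R:0\le t< g(x)\text{ or } g(x)\le t<0\}$. By definition of VC-subgraph, the class of ordinary subgraphs has VC dimension $V$, and $\mathcal{S}_{\mathcal{G}}$ is obtained from it by symmetric operations that do not increase the VC dimension, so $\vc{\mathcal{S}_{\mathcal{G}}}\le V+1$ (up to a constant factor absorbed into $C$). Given the probability measure $Q$ on $\mathcal{X}$ and the envelope $G$, define a probability measure $\mu$ on $\mathcal{X}\times\R$ by $d\mu(x,t) = \mathbf{1}\{|t|\le G(x)\}\,\frac{dQ(x)\,d\lambda(t)}{2\|G\|_{Q,1}}$, where $\lambda$ is Lebesgue measure. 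The key identity is
\begin{equation*}
\|g_1-g_2\|_{Q,1} = 2\|G\|_{Q,1}\,\mu(A_{g_1}\triangle A_{g_2})\qquad\text{for all }g_1,g_2\in\mathcal{G},
\end{equation*}
so covering $\mathcal{S}_{\mathcal{G}}$ in $L_1(\mu)$ yields a covering of $\mathcal{G}$ in $L_1(Q)$.

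\emph{Step 2: Covering numbers of VC set classes.} I would establish that for any VC class of sets $\mathcal{S}$ of dimension $d$ and any probability measure $\mu$,
\begin{equation*}
N(\delta,\mathcal{S},L_1(\mu)) \le C d (4e)^{d}\bigl(1/\delta\bigr)^{d}
\end{equation*}
by the standard probabilistic argument: sample $m$ i.i.d. points from $\mu$; if two sets $S_1,S_2\in\mathcal{S}$ satisfy $\mu(S_1\triangle S_2)\ge\delta$, then with probability $\ge 1-(1-\delta)^m$ they disagree on at least one of the $m$ sampled points. Choosing $m\asymp \delta^{-1}\log(\#\text{equivalence classes})$ and invoking Sauer's lemma (the number of distinct intersection patterns on $m$ points is at most $(em/d)^d$) yields the claim after balancing. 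This is the main technical obstacle, but the calculation is standard.

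\emph{Step 3: From $L_1$ to $L_2$ and normalization by $\|G\|_{Q,2}$.} Since every $g\in\mathcal{G}$ satisfies $|g|\le G$, we have $\|g_1-g_2\|_{Q,2}^2 \le 2\|G\|_{Q,\infty\text{-free trick}} \cdot \|g_1-g_2\|_{Q,1}$; more precisely I would apply the pointwise bound $(g_1-g_2)^2\le 2G\cdot|g_1-g_2|$ and Cauchy--Schwarz to obtain $\|g_1-g_2\|_{Q,2}^2\le 2\|G\|_{Q,2}\|g_1-g_2\|_{Q,2}$ after a suitable chaining, or more simply note that $\|g_1-g_2\|_{Q,2}\le \bigl(2\|G\|_{Q,1}\mu(A_{g_1}\triangle A_{g_2})\bigr)^{1/2}\cdot (2\|G\|_{Q,\infty})^{1/2}$ is too crude; the cleaner route is to work directly with $L_2(\mu)$ on sets, for which $\mu(S_1\triangle S_2)=\|\mathbf{1}_{S_1}-\mathbf{1}_{S_2}\|_{L_2(\mu)}^2$, and use the identity $\|g_1-g_2\|_{Q,2}^2 = 2\|G\|_{Q,2}^2\cdot\|\mathbf{1}_{A_{g_1}}-\mathbf{1}_{A_{g_2}}\|^2_{L_2(\tilde\mu)}$ under the measure $d\tilde\mu(x,t)\propto G(x)\,\mathbf{1}\{|t|\le G(x)\}dQ(x)d\lambda(t)/(2\|G\|_{Q,2}^2)$. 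Combining Step~2 at level $\delta=\epsilon^2$ (for $L_2$ set-coverings, where the exponent doubles to $2V$) with this identity yields $N(\epsilon\|G\|_{Q,2},\mathcal{G},L_2(Q))\le C(V+1)(16e)^{V+1}\epsilon^{-2V}$, matching the stated bound.

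The main obstacle is Step~2, the set-class covering bound; everything else is bookkeeping involving the envelope and the product measure. Absorbing the factor $(V+1)$ from $\vc{\mathcal{S}_{\mathcal{G}}}\le V+1$ and the constants from the sampling argument into the universal constant $C$ yields the precise form of the lemma.
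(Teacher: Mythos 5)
The paper does not actually prove this lemma: it is imported verbatim (with the constant left unspecified) from Theorem 2.6.7 of van der Vaart and Wellner, so what you are doing is reconstructing the textbook proof. Your overall architecture is indeed the standard one — relate $\Vert g_1-g_2\Vert_{Q,1}$ (or the $G$-weighted version for $L_2$) to the measure of the symmetric difference of subgraph-type sets under a normalized product measure on $\mathcal{X}\times\R$, bound covering numbers of the induced VC set class, and rescale — and Steps 1 and 3 are essentially right up to small slips: the set $A_g$ is exactly $S_g\,\triangle\,\{t<0\}$ with $S_g$ the ordinary subgraph, so its VC dimension equals $\vc{\mathcal{G}}$ exactly (symmetric difference with a fixed set preserves shattering); you need this exact equality, because if the dimension were genuinely $\vc{\mathcal{G}}+1$ the exponent in the final bound would become $2\vc{\mathcal{G}}+2$, which cannot be absorbed into a universal constant $C$. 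Likewise your Step 3 ``identity'' is really only the one-sided inequality $\Vert g_1-g_2\Vert_{Q,2}^2\leq 2\int G\lvert g_1-g_2\rvert\,dQ$, but that is the direction you need, so it is harmless.

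The genuine gap is Step 2. The log-free bound $N(\delta,\mathcal{S},L_1(\mu))\leq Cd(4e)^d(1/\delta)^d$ for a VC class of sets is Haussler's theorem (Theorem 2.6.4 in van der Vaart and Wellner), and it does \emph{not} follow from the sampling-plus-Sauer argument you sketch. That elementary argument (sample $m\asymp \delta^{-1}\log N$ points, union-bound over pairs of a $\delta$-packing, count traces by Sauer's lemma, balance) only yields a bound of the form $\big(C\delta^{-1}\log(1/\delta)\big)^d$; the self-referential inequality $N\leq (em/d)^d$ with $m\asymp\delta^{-1}\log N$ cannot be solved without the extra logarithm. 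Carried through your Steps 1 and 3, this would give $N(\epsilon\Vert G\Vert_{Q,2},\mathcal{G},L_2(Q))\lesssim \big(C\epsilon^{-2}\log(1/\epsilon)\big)^{\vc{\mathcal{G}}}$, which is strictly weaker than the stated $(1/\epsilon)^{2\vc{\mathcal{G}}}$ dependence and is not fixable by adjusting the universal constant. To obtain the lemma as stated you must invoke (or reprove) Haussler's packing bound, whose proof is a genuinely different and more delicate probabilistic argument (based on the density/unique-labeling structure of VC classes), not bookkeeping. If you are willing to cite Haussler's theorem at that point, the rest of your argument goes through; as a self-contained proof, Step 2 as written fails. (For what it's worth, the weaker log-carrying bound would still suffice for the way the paper later uses the lemma, since the entropy integral only changes by lower-order terms — but it does not establish the statement under review.)
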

\begin{lemma}[Adapted from Theorem 2.14.1 in \citeAPX{van1996weak}]\label{L1 bound:covering}
Using the notations from Lemma \ref{vc:covering bound}, we define
\begin{equation*}
    J(\mathcal{G}):=\sup_{Q}\int_0^1\sqrt{1+\log N(\epsilon \Vert G\Vert_{Q,2},\mathcal{G},L_2(Q))}d\epsilon
\end{equation*}
where the supremum is taken over all discrete probability measures $Q$ with $\Vert G\Vert_{Q,2}<\infty$. Then we have
\begin{equation*}
\Vert\sup_{g\in\mathcal{G}}\lvert \mathbb{E}_{\hat\pi_X}[g(X)]-\mathbb{E}_{\pi_X}[g(X)] \rvert\Vert_1\leq \frac{1}{\sqrt{n}}\cdot CJ(\mathcal{G})\Vert G\Vert_2
\end{equation*}
where the $L_1$ norm $\Vert \cdot \Vert_1$ on the left hand size is with respect to the product measure $\pi_X^n$ (i.e., the data), and $C$ is a universal constant.
\end{lemma}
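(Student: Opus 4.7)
The plan is to deduce this lemma from the classical maximal inequality for empirical processes indexed by a VC-type class, essentially reproducing (and specializing) the proof of Theorem 2.14.1 in van der Vaart and Wellner. Since the statement already fixes $J(\mathcal{G})$ as the uniform entropy integral under discrete measures, the main task is to connect the left-hand expected supremum to this integral times the envelope $L_2$-norm, with the $1/\sqrt{n}$ scaling arising from symmetrization.

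First I would apply the symmetrization inequality: for i.i.d.\ Rademacher signs $\epsilon_1,\dots,\epsilon_n$ independent of the data,
\begin{equation*}
\Vert \sup_{g\in \mathcal{G}}\lvert \mathbb{E}_{\hat\pi_X}[g(X)]-\mathbb{E}_{\pi_X}[g(X)]\rvert\Vert_1 \leq \tfrac{2}{n}\,\mathbb{E}\Big[\sup_{g\in\mathcal{G}}\Big\lvert \sum_{i=1}^n \epsilon_i g(X_i)\Big\rvert\Big].
\end{equation*}
Next I would condition on $X_1,\dots,X_n$ and invoke the fact that the Rademacher process $g\mapsto \sum_i \epsilon_i g(X_i)$ is sub-Gaussian with increments controlled by $\sqrt{n}\,\Vert g_1-g_2\Vert_{\hat\pi_X,2}$. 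Dudley's chaining bound then gives
\begin{equation*}
\mathbb{E}_{\epsilon}\Big[\sup_{g\in\mathcal{G}}\Big\lvert \sum_{i=1}^n \epsilon_i g(X_i)\Big\rvert\Big]\leq C\sqrt{n}\,\Vert G\Vert_{\hat\pi_X,2}\int_0^1\sqrt{1+\log N\big(\epsilon \Vert G\Vert_{\hat\pi_X,2},\mathcal{G},L_2(\hat\pi_X)\big)}\,d\epsilon,
\end{equation*}
since the diameter of the class under $L_2(\hat\pi_X)$ is bounded by $2\Vert G\Vert_{\hat\pi_X,2}$. The integrand is pointwise bounded by $\sup_Q\sqrt{1+\log N(\epsilon\Vert G\Vert_{Q,2},\mathcal{G},L_2(Q))}$, which, after integration over $\epsilon\in(0,1)$, is exactly $J(\mathcal{G})$.

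Finally I would take expectations over the sample. The remaining factor $\Vert G\Vert_{\hat\pi_X,2}$ satisfies $\mathbb{E}[\Vert G\Vert_{\hat\pi_X,2}]\leq \Vert G\Vert_2$ by Jensen's inequality applied to $\sqrt{\cdot}$. Combining yields
\begin{equation*}
\Vert \sup_{g\in\mathcal{G}}\lvert \mathbb{E}_{\hat\pi_X}[g(X)]-\mathbb{E}_{\pi_X}[g(X)]\rvert\Vert_1 \leq \tfrac{1}{\sqrt{n}}\cdot CJ(\mathcal{G})\Vert G\Vert_2,
\end{equation*}
which is the desired bound.

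The only subtle point is the measurability required to exchange the supremum with the expectation; this is handled in the usual way (outer expectations, or a countable approximating subclass using VC-subgraph structure), and does not affect the inequality. The bulk of the argument is standard — symmetrization plus Dudley's chaining — so the main obstacle is really just bookkeeping the constants and ensuring the substitution of the random envelope $\Vert G\Vert_{\hat\pi_X,2}$ by its deterministic counterpart $\Vert G\Vert_2$ is justified, which Jensen handles cleanly.
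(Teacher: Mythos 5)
Your proposal is correct: symmetrization, Dudley's chaining bound for the conditionally sub-Gaussian Rademacher process, domination of the empirical entropy by the uniform entropy over discrete measures, and Jensen's inequality to replace $\Vert G\Vert_{\hat\pi_X,2}$ by $\Vert G\Vert_2$ is exactly the standard argument behind Theorem 2.14.1 of van der Vaart and Wellner, which the paper cites without reproving. Since the paper treats this lemma as imported from the literature, your reconstruction matches the intended proof in substance and in all its key steps.
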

As a side note, a rigorous statement for the results in Lemma \ref{L1 bound:covering} involves a so-called P-measurability condition for the class $\mathcal{G}$, but we choose not to deal with the measurability requirement here. P-measurability holds for common function classes, e.g., if there exists a countable subclass $\mathcal{G}'\subset\mathcal{G}$ such that for every $g\in \mathcal{G}$ there exists a sequence from $\mathcal{G}'$ that converges to $g$ point-wise. We need one more result:
\begin{lemma}[Adapted from Theorem 2.14.5 in \citeAPX{van1996weak}]\label{sub-Gaussian bound}
Using the notations from Lemma \ref{vc:covering bound}, we have
\begin{eqnarray*}
    &&\Vert\sup_{g\in\mathcal{G}}\lvert \mathbb{E}_{\hat\pi_X}[g(X)]-\mathbb{E}_{\pi_X}[g(X)] \rvert\Vert_{\psi_2}\\
    &\leq& C\big( \Vert\sup_{g\in\mathcal{G}}\lvert \mathbb{E}_{\hat\pi_X}[g(X)]-\mathbb{E}_{\pi_X}[g(X)] \rvert\Vert_1 + \frac{1}{\sqrt{n}}\cdot \Vert G\Vert_{\psi_2}\big)
\end{eqnarray*}
where the sub-Gaussian norm $\Vert \cdot \Vert_{\psi_2}$ on the left hand size is with respect to the product measure $\pi_X^n$ (i.e., the data), and $C$ is a universal constant.
\end{lemma}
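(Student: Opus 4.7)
Let $Z := \sup_{g\in\mathcal{G}}\lvert\mathbb{E}_{\hat\pi_X}[g(X)] - \mathbb{E}_{\pi_X}[g(X)]\rvert$. The strategy is to decompose $Z$ into its expectation plus a centered piece, bound the centered piece by a sub-Gaussian concentration argument, and observe that the expectation is precisely the $L^1$-norm appearing on the right-hand side. By the triangle inequality for the Orlicz norm $\Vert\cdot\Vert_{\psi_2}$ (valid up to a universal constant),
\[
\Vert Z\Vert_{\psi_2} \;\leq\; C_1\bigl(\Vert Z-\mathbb{E}[Z]\Vert_{\psi_2} + \Vert\mathbb{E}[Z]\Vert_{\psi_2}\bigr).
\]
Since $\mathbb{E}[Z]$ is deterministic, $\Vert\mathbb{E}[Z]\Vert_{\psi_2} = \mathbb{E}[Z]/\sqrt{\log 2}$, and as $Z\geq 0$ this equals $\Vert Z\Vert_1/\sqrt{\log 2}$. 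This already accounts for the $L^1$-term in the stated bound, so what remains is to show $\Vert Z-\mathbb{E}[Z]\Vert_{\psi_2}\lesssim \Vert G\Vert_{\psi_2}/\sqrt n$.

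\textbf{Controlling the centered piece.} View $Z$ as a symmetric function of the i.i.d.\ sample, $Z=f(X_1,\ldots,X_n)$. Replacing $X_i$ by an independent copy $X_i'$ changes the empirical mean of any $g\in\mathcal{G}$ by $(g(X_i)-g(X_i'))/n$, so
\[
\bigl\lvert f(X_1,\ldots,X_n) - f(X_1,\ldots,X_i',\ldots,X_n)\bigr\rvert \;\leq\; \frac{G(X_i) + G(X_i')}{n}.
\]
Thus each per-coordinate oscillation is controlled by the (sub-Gaussian) envelope $G$. Feeding this bounded-differences property into a sub-Gaussian concentration inequality --- either via the entropy / modified-logarithmic-Sobolev method of Boucheron--Lugosi--Massart, or by truncating $G$ at a level of order $\Vert G\Vert_{\psi_2}\sqrt{\log n}$, applying classical McDiarmid to the truncated functional, and absorbing the residual into the sub-Gaussian tail of $G$ --- yields
\[
\Vert Z-\mathbb{E}[Z]\Vert_{\psi_2} \;\leq\; \frac{C_2}{\sqrt n}\,\Vert G\Vert_{\psi_2}.
\]
Combining this with the decomposition above gives the claim with $C=\max\{C_1/\sqrt{\log 2},\,C_1 C_2\}$.

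\textbf{Anticipated difficulty.} The chief obstacle is the promotion step: classical McDiarmid requires uniformly bounded coordinate oscillations, whereas here the ``diameter'' $(G(X_i)+G(X_i'))/n$ is only sub-Gaussian. One must therefore employ either an entropy-method argument specifically designed to tolerate sub-Gaussian increments, or a careful truncation-and-tail-splitting where the truncation level is tuned to balance the McDiarmid error (shrinking in the truncation level) against the $\psi_2$-tail of $G$ (growing in the truncation level). This is exactly the technical core of Theorem 2.14.5 in \citeAPX{van1996weak}; the remaining pieces are the standard Orlicz-triangle inequality and the elementary observation that the $\psi_2$-norm of a nonnegative constant is a multiple of its expectation.
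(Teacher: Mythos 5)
The paper never proves this lemma at all: it is quoted, in slightly weakened form, from Theorem 2.14.5 of \citeAPX{van1996weak}, so the only question is whether your blind argument actually establishes the stated bound, and as written it does not. The routine parts are fine: the coordinate-wise oscillation bound $(G(X_i)+G(X_i'))/n$, the triangle inequality for $\Vert\cdot\Vert_{\psi_2}$, and the identification of $\Vert\mathbb{E}[Z]\Vert_{\psi_2}$ with a constant multiple of $\Vert Z\Vert_1$. The gap is the core step $\Vert Z-\mathbb{E}[Z]\Vert_{\psi_2}\leq C\Vert G\Vert_{\psi_2}/\sqrt{n}$, which you delegate to tools that do not deliver it. The plain bounded-differences/entropy machinery of Boucheron--Lugosi--Massart requires uniformly bounded coordinate oscillations, so it does not apply as is; and the truncation route cannot be tuned to give the stated inequality: the discarded tail term $\mathbb{E}[G\,I_{G>\tau}]$ enters with constant (not $n^{-1/2}$) weight, so to make it of order $\Vert G\Vert_{\psi_2}/\sqrt{n}$ you need $\tau\gtrsim\Vert G\Vert_{\psi_2}\sqrt{\log n}$, and McDiarmid at that truncation level produces fluctuations of order $\Vert G\Vert_{\psi_2}\sqrt{\log n/n}$, i.e.\ you lose a $\sqrt{\log n}$ factor relative to the lemma. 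So your route (b) proves only a strictly weaker statement, and route (a) is precisely the nontrivial ingredient you would still have to supply; a correct version of it does exist (martingale concentration for functions that are Lipschitz with respect to a metric of sub-Gaussian diameter, as in Kontorovich's work), but that is a nonstandard result you would need to import or prove, not an off-the-shelf McDiarmid variant.

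Your closing remark also mislocates the technical core of the cited theorem. Van der Vaart and Wellner do not prove Theorem 2.14.5 by concentration around the mean: their argument is symmetrization followed by the Hoffmann--J{\o}rgensen inequality for sums of independent stochastic processes, which bounds the Orlicz norm of the supremum of the sum by its first moment plus the Orlicz norm of the maximal single summand (controlled through the envelope $G$); the $L^1$ term on the right-hand side comes from that quantile/first-moment comparison, not from a mean-plus-fluctuation decomposition. If you want a self-contained proof, either reproduce that symmetrization--Hoffmann--J{\o}rgensen route or invoke a sub-Gaussian-diameter concentration inequality; the two devices you actually cite do not close the argument as stated.
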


We now turn to the main proof. We first deal with $\phi_1$. Consider the centered class $\mathcal{H}_c:=\{h - \mathbb{E}_{\pi_X}[h(X)]:h\in\mathcal{H}\}$, whose envelope function is $H$. Since $\mathcal{H}_c\subset \mathcal{H}_+$, we have $\vc{\mathcal{H}_c}\leq \vc{\mathcal{H}_+}$. We calculate the complexity measure $J(\mathcal{H}_c)$ from Lemma \ref{L1 bound:covering}
\begin{eqnarray*}
J(\mathcal{H}_c)& =&\sup_{Q}\int_0^1\sqrt{1+\log N(\epsilon \Vert H\Vert_{Q,2},\mathcal{H}_c,L_2(Q))}d\epsilon\\
&\leq& \int_0^1\Big(1+2(\vc{\mathcal{H}_c}+1)\log\frac{1}{\epsilon}+16(\vc{\mathcal{H}_c}+1)+\log (\vc{\mathcal{H}_c}+1)+\log C\Big)^{\frac{1}{2}}d\epsilon\\
&&\hspace{2em}\text{by Lemma \ref{vc:covering bound}}\\
&\leq& \sqrt{1+16(\vc{\mathcal{H}_c}+1)+\log (\vc{\mathcal{H}_c}+1)+\log C}+\int_0^1\sqrt{2(\vc{\mathcal{H}_c}+1)\log\frac{1}{\epsilon}}d\epsilon\\
&\leq &C\sqrt{\vc{\mathcal{H}_c}}\text{\ \ \ for another universal constant }C\\
&\leq& C\sqrt{\vc{\mathcal{H}_+}}.
\end{eqnarray*}
Applying the bound in Lemma \ref{L1 bound:covering} to the class $\mathcal{H}_c$ gives
\begin{equation*}
\Vert\sup_{h\in\mathcal{H}_c}\lvert \mathbb{E}_{\hat\pi_X}[h(X)]-\mathbb{E}_{\pi_X}[h(X)] \rvert\Vert_1\leq \sqrt{\frac{\vc{\mathcal{H}_+}}{n}}\cdot C\Vert H\Vert_2.
\end{equation*}
Further applying Lemma \ref{sub-Gaussian bound}, and using the fact that $\Vert \cdot \Vert_2\leq C\Vert \cdot \Vert_{\psi_2}$ for some universal constant $C$ lead to
\begin{equation*}
    \Vert\sup_{h\in\mathcal{H}_c}\lvert \mathbb{E}_{\hat\pi_X}[h(X)]-\mathbb{E}_{\pi_X}[h(X)] \rvert\Vert_{\psi_2}\leq \sqrt{\frac{\vc{\mathcal{H}_+}}{n}}\cdot C\Vert H\Vert_{\psi_2}.
\end{equation*}
Finally, note that $\sup_{h\in\mathcal{H}}\lvert \mathbb{E}_{\hat\pi_X}[h(X)]-\mathbb{E}_{\pi_X}[h(X)] \rvert = \sup_{h\in\mathcal{H}_c}\lvert \mathbb{E}_{\hat\pi_X}[h(X)]-\mathbb{E}_{\pi_X}[h(X)] \rvert$, therefore the same bound holds for $\Vert\sup_{h\in\mathcal{H}}\lvert \mathbb{E}_{\hat\pi_X}[h(X)]-\mathbb{E}_{\pi_X}[h(X)] \rvert\Vert_{\psi_2}$. The sub-Gaussian tail bound then gives the expression for $\phi_1$.

Next we analyze $\phi_2$. First note that $\mathcal{H}\subset\mathcal{H}_+$, therefore $\vc{\mathcal{H}}\leq \vc{\mathcal{H}_+}$. By the definition of VC-subgraph, both its closed subgraph class $\mathcal{S}_{upper}:=\{\{(x,y):y\leq U(x)\}:U\in\mathcal{H}\}$ and open subgraph class $\mathcal{S}'_{lower}:=\{\{(x,y):y< L(x)\}:L\in\mathcal{H}\}$ have a VC dimension $\vc{\mathcal{S}_{upper}}=\vc{\mathcal{S}'_{lower}}=\vc{\mathcal{H}}$ (using $\leq$ or $<$ for defining subgraphs does not affect the resulting VC dimension, see Problem 10 from Section 2.6 in \citeAPX{van1996weak}). To proceed, we need the following preservation result for VC classes:
\begin{lemma}[Adapted from Lemma 9.7 statements (i) and (v) in \citeAPX{kosorok2007introduction}]\label{vc preservation}
Let $\mathcal{S}$ be a VC class of sets in a space $\mathbb{S}$, then
\begin{description}
\item[1.] $\psi:\mathbb{S}\to \mathbb{S}$ be a one-to-one mapping, then the class $\psi(\mathcal{S}):=\{\{\psi(s):s\in S\}:S\in\mathcal{S}\}$ is also a VC class with $\vc{\psi(\mathcal{S})}=\vc{\mathcal{S}}$
\item[2.] The complement class $\mathcal{S}^c:=\{\mathbb{S}\backslash S:S\in\mathcal{S}\}$ is a VC class with $\vc{\mathcal{S}^c}=\vc{\mathcal{S}}$.
\end{description}
\end{lemma}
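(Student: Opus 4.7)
Both statements follow directly from the shattering definition of VC dimension: $\vc{\mathcal{S}}$ is the largest $k$ such that some collection of $k$ points in $\mathbb{S}$ admits all $2^k$ subsets as intersections $S \cap \{x_1,\ldots,x_k\}$ for $S \in \mathcal{S}$. My plan for each part is to exhibit a bijection, at the level of finite point configurations, between shattering by the original class and shattering by the transformed class, which then forces the maximal shattered size to coincide.

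For statement (1), I would fix a finite set $\{x_1,\ldots,x_k\} \subset \mathbb{S}$ and show it is shattered by $\mathcal{S}$ if and only if $\{\psi(x_1),\ldots,\psi(x_k)\}$ is shattered by $\psi(\mathcal{S})$. Injectivity of $\psi$ enters in two places: first, it guarantees the image set still has cardinality $k$; second, for every $S \in \mathcal{S}$ and every index $j$, we have $\psi(x_j) \in \psi(S)$ iff $x_j \in S$, which translates any pattern $S$ picks out of $\{x_i\}$ into the corresponding pattern $\psi(S)$ picks out of $\{\psi(x_i)\}$, and conversely. This one-to-one correspondence on shattered configurations yields $\vc{\psi(\mathcal{S})} = \vc{\mathcal{S}}$.

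For statement (2), I would use the elementary observation that subsets of a finite set are closed under complementation. If $S \in \mathcal{S}$ picks out a subset $A \subset \{x_1,\ldots,x_k\}$, then $\mathbb{S}\setminus S \in \mathcal{S}^c$ picks out the complementary subset $\{x_1,\ldots,x_k\}\setminus A$. Since the map $A \mapsto \{x_1,\ldots,x_k\}\setminus A$ is a bijection of the power set onto itself, shattering by $\mathcal{S}$ and by $\mathcal{S}^c$ are equivalent, giving $\vc{\mathcal{S}^c} = \vc{\mathcal{S}}$.

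Neither part presents a substantive obstacle; these are essentially bookkeeping lemmas resting on the combinatorial definition of VC dimension, with no measurability or topological subtlety involved. The only place requiring mild care is statement (1), where injectivity must be invoked both to preserve the cardinality of the point configuration and to preserve the ``pick out'' relation through $\psi$; without injectivity, distinct $x_i$'s could be identified, collapsing $k$ points to fewer and breaking the correspondence.
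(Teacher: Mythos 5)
Your proof is correct. Note, however, that the paper does not prove this lemma at all: it is imported as a black box (``adapted from Lemma 9.7, statements (i) and (v)'' of Kosorok, 2007), so your self-contained argument from the shattering definition is a genuine first-principles route rather than a reproduction of anything in the paper, and it is the standard one. Both parts go through; the only spot worth tightening is the direction $\vc{\psi(\mathcal{S})}\leq \vc{\mathcal{S}}$ in statement (1). A priori, a set $\{y_1,\ldots,y_k\}$ shattered by $\psi(\mathcal{S})$ is not presented to you in the form $\{\psi(x_1),\ldots,\psi(x_k)\}$, so you should add the one-line observation that every member of $\psi(\mathcal{S})$ is contained in the range of $\psi$; hence a point outside the range can never be picked out, and any shattered configuration (for $k\geq 1$) must consist of points $y_i=\psi(x_i)$ with the $x_i$ distinct and uniquely determined by injectivity. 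With that remark your if-and-only-if correspondence applies verbatim and yields both inequalities, so $\vc{\psi(\mathcal{S})}=\vc{\mathcal{S}}$; the complementation argument in statement (2), using that $A\mapsto\{x_1,\ldots,x_k\}\setminus A$ is a bijection of the power set, is complete as written and gives $\vc{\mathcal{S}^c}=\vc{\mathcal{S}}$.
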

and a VC dimension bound for unions and intersections of VC classes:
\begin{lemma}[Adapted from Theorem 1.1 in \citeAPX{van2009note}]\label{vc bound for unions and intersections}
Suppose $\mathcal{S}_1,\ldots,\mathcal{S}_K$ are VC classes of sets in a space $\mathbb{S}$. Define $\sqcup_{k=1}^K\mathcal{S}_k:=\{\cup_{k=1}^KS_k:S_k\in\mathcal{S}_k\text{ for }k=1,\ldots,K\}$ and $\sqcap_{k=1}^K\mathcal{S}_k:=\{\cap_{k=1}^KS_k:S_k\in\mathcal{S}_k\text{ for }k=1,\ldots,K\}$. We have
\begin{equation*}
    \vc{\sqcup_{k=1}^K\mathcal{S}_k}\leq C\log(K)\sum_{k=1}^K\vc{\mathcal{S}_k},\;\;\;\vc{\sqcap_{k=1}^K\mathcal{S}_k}\leq C\log(K)\sum_{k=1}^K\vc{\mathcal{S}_k}
\end{equation*}
for some universal constant $C$.
\end{lemma}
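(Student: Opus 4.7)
The overall plan is to establish the union bound first and then derive the intersection bound as a corollary. By De Morgan's law, $\cap_{k=1}^K S_k = \mathbb{S}\setminus \cup_{k=1}^K(\mathbb{S}\setminus S_k)$, so the intersection class may be written as the complement of a union of complement classes. Statement 2 of Lemma \ref{vc preservation} says that taking complements preserves VC dimension, so once the union bound is proved, applying it to the complement classes $\{\mathcal{S}_k^c\}$ (which satisfy $\vc{\mathcal{S}_k^c}=\vc{\mathcal{S}_k}$) and complementing once more yields the intersection bound with the same constant. It therefore suffices to bound $\vc{\sqcup_{k=1}^K\mathcal{S}_k}$.

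Let $d_k := \vc{\mathcal{S}_k}$ and $D := \sum_{k=1}^K d_k$. The key combinatorial observation is that the trace of $\cup_k S_k$ on any $n$ points $x_1,\ldots,x_n$ is the union of the individual traces of each $S_k$; hence the number of distinct traces generated by $\sqcup_k\mathcal{S}_k$ is at most the product of the individual shatter coefficients $\Pi_{\mathcal{S}_k}(n)$. Invoking the Sauer--Shelah lemma $\Pi_{\mathcal{S}_k}(n)\leq (en/d_k)^{d_k}$ (valid for $n\geq d_k$), the shatter coefficient of the union is bounded by $\prod_{k=1}^K (en/d_k)^{d_k}$. If $n$ points are shattered by $\sqcup_k\mathcal{S}_k$ this must be at least $2^n$; taking logarithms gives $n\log 2 \leq D + D\log n - \sum_k d_k\log d_k$. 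The log-sum inequality with numerators $d_k$ and denominators $1$ gives $\sum_k d_k\log d_k \geq D\log(D/K)$, which collapses the per-$k$ dependence and yields the implicit bound
\begin{equation*}
n\log 2 \leq D\log\!\Big(\tfrac{eKn}{D}\Big).
\end{equation*}

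The final step is to invert this implicit inequality into the explicit form $n\leq CD\log K$. Setting $u := n/D$ reduces the task to showing that $u\log 2\leq 1+\log K+\log u$ implies $u\leq C\log K$ for a universal $C$. This follows by plugging in the ansatz $u=c\log K$ with any $c>1/\log 2$ and observing that the left-hand side then grows linearly in $\log K$ while the right-hand side grows only logarithmically in $\log K$, so the inequality fails for all sufficiently large $K$; the remaining small values of $K$ are absorbed into the constant. The boundary regime $n<\max_k d_k$ (where termwise Sauer--Shelah does not directly apply) is handled trivially since then $n\leq D\leq CD\log K$ for $K\geq 2$, and for $K=1$ the conclusion is just $\vc{\mathcal{S}_1}\leq d_1$. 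Combining yields $\vc{\sqcup_k\mathcal{S}_k}\leq CD\log K$, and the intersection bound follows from the De Morgan reduction.

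The main obstacle I anticipate is the inversion step: extracting a clean, universal constant $C$ from the transcendental inequality $n\log 2\leq D\log(eKn/D)$ while simultaneously ensuring the bound remains meaningful for all $K\geq 1$ and all configurations of $d_k$'s (including the edge case where some $d_k=0$, which can be discarded from the union without changing it). A secondary technical point is the verification of the multiplicativity $\Pi_{\sqcup_k\mathcal{S}_k}(n)\leq \prod_k\Pi_{\mathcal{S}_k}(n)$; this is essentially immediate from the union-of-traces identity, but must be stated carefully because different tuples $(S_1,\ldots,S_K)$ may produce the same union, meaning the product bound is in general loose yet sufficient for our purpose.
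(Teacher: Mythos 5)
Your proposal is correct in substance, but note that the paper never proves this lemma at all: it is imported wholesale as an adaptation of Theorem 1.1 of the cited van der Vaart--Wellner note, and the surrounding proofs (Theorems \ref{rate:vc-subgraph} and \ref{vc bound:regression tree}) simply invoke it. What you have written is essentially a self-contained reconstruction of the standard argument behind that cited result: bound the growth function of $\sqcup_k\mathcal{S}_k$ by the product of the individual shatter coefficients (the tuple-to-union map being many-to-one only helps), apply Sauer--Shelah termwise, smooth the exponents with the log-sum inequality to get $n\log 2\leq D\log(eKn/D)$ with $D=\sum_k\vc{\mathcal{S}_k}$, and invert. Your De Morgan reduction of the intersection case to the union case via statement 2 of Lemma \ref{vc preservation} is also fine, though one could equally note that the trace-counting step applies verbatim to intersections, so no reduction is strictly needed. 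So the comparison is: the paper buys brevity by citation; you buy self-containedness at the cost of the inversion bookkeeping.

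Two points in your write-up deserve tightening, though neither is a fatal gap. First, the inversion: "plug in $u=c\log K$ and the inequality fails for large $K$" only yields $u<c\log K$ once you add that $u\mapsto u\log 2-\log u$ is increasing for $u\geq 1/\log 2$, so that failure at $u=c\log K$ implies failure for all larger $u$; with that monotonicity and a concrete constant (e.g.\ $C=10$ works for all $K\geq 2$) the step is airtight, and no "large $K$ versus small $K$" case split is really needed. Second, the edge cases: for $K=1$ the stated bound degenerates to $0$, so the lemma must be read with $K\geq 2$ (or $\log K$ replaced by $\log(K+1)$), which is how the paper uses it; and a class with $\vc{\mathcal{S}_k}=0$ cannot literally be "discarded without changing the union," but since its shatter coefficient is $1$ it contributes a factor of $1$ to the product, so the count, and hence the bound, is unaffected.
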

Further consider $\mathcal{S}_{lower}:=\{\{(x,y):y\geq L(x)\}:L\in\mathcal{H}\}$, and $\mathcal{S}_{btw}:=\{\{(x,y):L(x)\leq y\leq U(x)\}:L,U\in\mathcal{H}\text{ and }L\leq U\}$. We observe that $\mathcal{S}_{lower} = \mathcal{S}_{lower}^{\prime c}$, and that $\mathcal{S}_{btw}\subset \mathcal{S}_{lower}\sqcap \mathcal{S}_{upper}$. Therefore by Lemma \ref{vc preservation} we have $\vc{\mathcal{S}_{lower}}=\vc{\mathcal{S}'_{lower}}$, and applying the bound for intersection from Lemma \ref{vc bound for unions and intersections} to $\mathcal{S}_{btw}$ gives $\vc{\mathcal{S}_{btw}}\leq \vc{\mathcal{S}_{lower}\sqcap \mathcal{S}_{upper}}\leq C\vc{\mathcal{S}_{upper}}=C\vc{\mathcal{H}}$ for some universal constant $C$. With this VC bound for $\mathcal{S}_{btw}$, we are ready to use standard deviation bounds for VC set classes (see, e.g., equation (3.3) in \citeAPX{vapnik2013nature}) to get
\begin{eqnarray*}
&&\sup_{L,U\in\mathcal{H}\;\text{and}\;L\leq U}\mathbb{P}(\lvert \mathbb{P}_{\hat\pi}(L(X)\leq Y\leq U(X)) -  \mathbb{P}_{\pi}(L(X)\leq Y\leq U(X))\rvert>t)\\
&=&\sup_{S\in\mathcal{S}_{btw}}\mathbb{P}(\lvert \mathbb{P}_{\hat\pi}((X,Y)\in S) -  \mathbb{P}_{\pi}((X,Y)\in S)\rvert>t)\\
&\leq&4\mathrm{Growth}(2n)\exp(- t^2n)\\
\end{eqnarray*}
where $\mathrm{Growth}(2n)$ is the growth function, or the shattering number, for the class $\mathcal{S}_{btw}$. By the Sauer–Shelah lemma we have $\mathrm{Growth}(2n)=2^{2n}$ if $2n<\vc{\mathcal{S}_{btw}}$ and $\leq \big(\frac{2en}{\vc{\mathcal{S}_{btw}}}\big)^{\vc{\mathcal{S}_{btw}}}$ if $2n\geq \vc{\mathcal{S}_{btw}}$. With the upper bound for $\vc{\mathcal{S}_{btw}}$, we can bound
\begin{equation*}
    \mathrm{Growth}(2n)\leq 
    \begin{cases}
    2^{2n}&\text{if }2n< C\vc{\mathcal{H}}\\
    \big(\frac{2en}{C\vc{\mathcal{H}}}\big)^{C\vc{\mathcal{H}}}&\text{if }2n\geq C\vc{\mathcal{H}}
    \end{cases}
\end{equation*}
giving rise to our formula for $\phi_2$.

The feasibility and optimality bound for $\hat\delta_t^*$ can be obtained by solving $\phi_1(n,\epsilon,\mathcal{H})=\frac{\eta}{2}$ and $\phi_2(n,t,\mathcal{H})=\frac{\eta}{2}$ for $\epsilon$ and $t$, and then applying Theorem \ref{convergence rate}.
\end{proof}

\begin{proof}[Proof of Theorem \ref{rate:Lipschitz}]
We first treat $\phi_1$. We need a maximal inequality that is similar to Lemma \ref{L1 bound:covering}, but based on bracketing numbers instead:
\begin{lemma}[Adapted from Theorem 2.14.2 in \citeAPX{van1996weak}]\label{L1 bound:bracketing}
Using the notations from Lemma \ref{vc:covering bound}, for a function class $\mathcal{G}$ we define
\begin{equation*}
    J_{[]}(\mathcal{G}):=\int_0^1\sqrt{1+\log N(\epsilon \Vert G\Vert_{\pi_X,2},\mathcal{G},L_2(\pi_X))}d\epsilon.
\end{equation*}
Then we have
\begin{equation*}
\Vert\sup_{g\in\mathcal{G}}\lvert \mathbb{E}_{\hat\pi_X}[g(X)]-\mathbb{E}_{\pi_X}[g(X)] \rvert\Vert_1\leq \frac{1}{\sqrt{n}}\cdot CJ_{[]}(\mathcal{G})\Vert G\Vert_2
\end{equation*}
where the $L_1$ norm $\Vert \cdot \Vert_1$ on the left hand size is with respect to the product measure $\pi_X^n$ (i.e., the data), and $C$ is a universal constant.
\end{lemma}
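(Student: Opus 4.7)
The plan is to run a dyadic chaining argument on the empirical process $\{g \mapsto (\mathbb{P}_n - \mathbb{P})g : g \in \mathcal{G}\}$, approximating at each resolution $k$ by the lower endpoints of a minimal $L_2(\pi_X)$-bracketing rather than by centers of a covering. Because the definition of $J_{[]}(\mathcal{G})$ measures scale in units of $\Vert G\Vert_{\pi_X,2}=\Vert G\Vert_2$, the whole argument is carried out with brackets of size $\eta_k := 2^{-k}\Vert G\Vert_2$, and the extra factor $\Vert G\Vert_2$ on the right hand side emerges from the change of variables relating the dyadic sum to the integral $J_{[]}$.

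Concretely, I would first fix a minimal bracketing $\{[l_i^{(k)}, u_i^{(k)}]\}_{i=1}^{N_k}$ of $\mathcal{G}$ at scale $\eta_k$ with $N_k = N_{[]}(\eta_k, \mathcal{G}, L_2(\pi_X))$, and for each $g$ select $i(g,k)$ with $l^{(k)}_{i(g,k)} \leq g \leq u^{(k)}_{i(g,k)}$. Writing $\hat\pi_k g := l^{(k)}_{i(g,k)}$ I decompose
\begin{equation*}
(\mathbb{P}_n - \mathbb{P})g = (\mathbb{P}_n - \mathbb{P})\hat\pi_0 g + \sum_{k=1}^{K}(\mathbb{P}_n - \mathbb{P})(\hat\pi_k g - \hat\pi_{k-1} g) + (\mathbb{P}_n - \mathbb{P})(g - \hat\pi_K g),
\end{equation*}
so that $\sup_g|(\mathbb{P}_n - \mathbb{P})g|$ is dominated by suprema over finite families: at level $k \geq 1$ the increments $\hat\pi_k g - \hat\pi_{k-1} g$ range over at most $N_k N_{k-1}$ functions with $L_2(\pi_X)$-norm at most $3\eta_{k-1}$, while the tail term uses only $0 \leq g - \hat\pi_K g \leq u^{(K)}_{i(g,K)} - l^{(K)}_{i(g,K)}$, whose $L_2(\pi_X)$-norm is at most $\eta_K$.

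Next I would apply a Bernstein-type maximal inequality of the form $\mathbb{E}\max_{j \leq N}|(\mathbb{P}_n - \mathbb{P})f_j| \leq C(\sigma\sqrt{\log N/n} + M\log N/n)$ to each chain link, with $\sigma \leq 3\eta_{k-1}$. Because $\mathcal{G}$ need not be uniformly bounded, a standard truncation step is required: replace each link increment by its truncation at level $a_k := \eta_{k-1}\sqrt{n}/\sqrt{\log(N_k N_{k-1})}$, control the truncated remainder pointwise by $G \cdot \mathbf{1}\{G > a_k/2\}$, and absorb these tails into a single envelope factor through Cauchy-Schwarz with the bracketing norm. Optimizing $a_k$ so that only the $\sigma\sqrt{\log N/n}$ term survives yields $\mathbb{E}\sup_g|(\mathbb{P}_n - \mathbb{P})g| \leq C n^{-1/2}\sum_{k=1}^K \eta_{k-1}\sqrt{1 + \log N_k}$ plus a tail bounded by a constant multiple of $\eta_K$ and $\mathbb{E}[G\,\mathbf{1}\{G > a_K\}]$.

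Finally, since $\epsilon \mapsto \sqrt{1 + \log N_{[]}(\epsilon, \mathcal{G}, L_2(\pi_X))}$ is non-increasing, the dyadic Riemann sum is bounded up to a universal constant by $\int_0^{\Vert G\Vert_2}\sqrt{1 + \log N_{[]}(\epsilon, \mathcal{G}, L_2(\pi_X))}\,d\epsilon$, and the change of variables $\epsilon \mapsto \epsilon\Vert G\Vert_2$ identifies this with $\Vert G\Vert_2\cdot J_{[]}(\mathcal{G})$. Letting $K\to\infty$ eliminates the chaining tail since $\eta_K\to 0$ and $\mathbb{E}[G\,\mathbf{1}\{G>a_K\}]\to 0$ by dominated convergence under $\Vert G\Vert_2<\infty$. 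The main obstacle is the per-link truncation: ensuring that the sub-Gaussian $\sigma\sqrt{\log N/n}$ piece dominates uniformly in $k$, and that the accumulated truncation remainders aggregate into a single $\Vert G\Vert_2$ factor rather than a sup-norm dependence on $\mathcal{G}$, requires careful coupling of the truncation level $a_k$ to the bracketing scale $\eta_{k-1}$ and the entropy $\log N_k$, which is the one genuinely delicate bookkeeping step in the argument.
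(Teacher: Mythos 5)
You are not deviating from the paper so much as replacing a citation: the paper never proves this lemma, it imports it wholesale from Theorem 2.14.2 of van der Vaart and Wellner (1996) (and your reading of the $N$ in the definition of $J_{[]}$ as the bracketing number $N_{[]}$ is the intended one, consistent with how the paper later uses the lemma; the analogous statement with plain $L_2(\pi_X)$ covering numbers is false in general). Your outline — dyadic bracketing scales $\eta_k=2^{-k}\Vert G\Vert_2$, chaining through lower bracket endpoints, a Bernstein-type maximal inequality per link with a truncation level tied to the entropy, and a dyadic-sum-to-integral comparison — is exactly the strategy of that textbook proof, and the truncated (bounded) parts are handled correctly: with $a_k=\eta_{k-1}\sqrt n/\sqrt{\log(N_kN_{k-1})}$ the Bernstein linear term indeed matches the sub-Gaussian term.

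The genuine gap is in the step you yourself flag as delicate. Controlling the un-truncated remainder of each link pointwise by $G\,\mathbf{1}\{G>a_k/2\}$ and applying Cauchy--Schwarz gives a level-$k$ contribution of order $\Vert G\Vert_2^2/a_k=2^{k-1}\Vert G\Vert_2\sqrt{\log(N_kN_{k-1})}/\sqrt n$, which is a factor $4^{k-1}$ larger than the corresponding entropy term $2^{-(k-1)}\Vert G\Vert_2\sqrt{1+\log N_k}/\sqrt n$; these remainders grow geometrically in $k$ and cannot be summed into $CJ_{[]}(\mathcal G)\Vert G\Vert_2/\sqrt n$ (take $\log N_k$ bounded to see the sum behaves like $2^K$). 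Even the more favorable reading, $P[\Delta_k\mathbf{1}\{G>a_k/2\}]\leq\Vert\Delta_k\Vert_2\sqrt{P(G>a_k/2)}\approx\Vert G\Vert_2\sqrt{\log N_k}/\sqrt n$, loses the $2^{-k}$ decay and overshoots by the number of chaining levels. The same issue affects your terminal term: $\mathbb{E}\sup_g\mathbb{P}_n[\,g-\hat\pi_Kg\,]$ does not vanish merely because $\eta_K\to0$, since it is a maximum over $N_K\to\infty$ empirical means of possibly unbounded bracket widths. The cited proof avoids all of this with a device your sketch does not contain: nested brackets and a function-dependent stopping level, so that for each $g$ the chain is stopped at the first level $q$ where the bracket width exceeds its truncation level, and the unbounded parts are bounded through $P[\Delta_q g\,\mathbf{1}\{\Delta_q g>a_q\}]\leq\Vert\Delta_q g\Vert_2^2/a_q\leq\eta_q\sqrt{\log N_{q+1}}/\sqrt n$, i.e. by the $L_2$ norm of the bracket at that level rather than by the envelope. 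Without this (or an equivalent) mechanism, your argument yields at best the stated bound multiplied by a factor growing in the number of levels, not the lemma as stated; given that the result is anyway being quoted from the literature, the clean options are either to cite it as the paper does or to reproduce the stopping-time construction in full.
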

We consider the centered class $\mathcal{H}_c:=\{h - \mathbb{E}_{\pi_X}[h(X)]:h\in\mathcal{H}\}$ as in the proof of Theorem \ref{rate:vc-subgraph}. Note that, by Jensen's inequality, the Lipschitzness condition stipulates that $\lvert \mathbb{E}_{\pi_X}[h(X,\theta_1)]-\mathbb{E}_{\pi_X}[h(X,\theta_2)] \rvert\leq \Vert \mathcal L\Vert_1\Vert \theta_1-\theta_2\Vert_2$, therefore the centered class is also Lipschitz in $\theta$, with a slightly larger coefficient
\begin{equation*}
    \lvert h(x,\theta_1) - \mathbb{E}_{\pi_X}[h(X,\theta_1)]-(h(x,\theta_2) - \mathbb{E}_{\pi_X}[h(X,\theta_2)]) \rvert\leq  (\mathcal L(x)+\Vert \mathcal L\Vert_1)\Vert \theta_1-\theta_2\Vert_2.
\end{equation*}
The envelope function of $\mathcal{H}_c$ is $H$. We then calculate the bracketing number of $\mathcal{H}_c$. Using the Lipschitzness condition, the bracketing number of $\mathcal{H}_c$ can be bounded by the covering number of the parameter space $\Theta$ as below
\begin{equation*}
    N_{[]}(4\epsilon\Vert \mathcal L \Vert_2,\mathcal{H}_c,L_2(\pi_X))\leq N(\epsilon,\Theta,\Vert \cdot \Vert_2)
\end{equation*}
where $N(\epsilon,\Theta,\Vert \cdot \Vert_2)$ is the $\epsilon$-covering number of $\Theta$ with respect to the $l_2$ norm, i.e., the minimum number of $l_2$-balls of size $\epsilon$ needed to cover $\Theta$. Since $\Theta$ is bounded, its covering number is upper bounded by that of the $l_2$-ball of radius $\diam{\Theta}$, which is further bounded by $\big(\frac{3\diam{\Theta}}{\epsilon}\big)^l$ (see Problem 6 from Section 2.1 in \citeAPX{van1996weak}). All these lead to
\begin{equation*}
    N_{[]}(\epsilon\Vert H \Vert_2,\mathcal{H}_c,L_2(\pi_X))\leq N(\frac{\epsilon\Vert H\Vert_2}{4\Vert \mathcal L\Vert_2},\Theta,\Vert \cdot \Vert_2)\leq \big(\frac{12\diam{\Theta}\Vert \mathcal L\Vert_2}{\epsilon \Vert H\Vert_2}\big)^l.
\end{equation*}
Also note that when $\epsilon\geq \frac{4\diam{\Theta}\Vert \mathcal L\Vert_2}{\Vert H\Vert_2}$ the bracketing number $N_{[]}(\epsilon\Vert H \Vert_2,\mathcal{H}_c,L_2(\pi_X))=1$ because $N(\diam{\Theta},\Theta,\Vert \cdot \Vert_2)=1$. We can now compute the complexity measure $J_{[]}(\mathcal{H}_c)$ as follows
\begin{eqnarray*}
J_{[]}(\mathcal{H}_c)&\leq &\int_{0}^{\min\{1,\frac{4\diam{\Theta}\Vert \mathcal L\Vert_2}{\Vert H\Vert_2}\}}\sqrt{1+l\log \frac{12\diam{\Theta}\Vert \mathcal L\Vert_2}{\epsilon \Vert H\Vert_2}}d\epsilon+1-\min\{1,\frac{4\diam{\Theta}\Vert \mathcal L\Vert_2}{\Vert H\Vert_2}\}\\
&\leq& 1+\sqrt{l}\int_{0}^{\min\{1,\frac{4\diam{\Theta}\Vert \mathcal L\Vert_2}{\Vert H\Vert_2}\}}\sqrt{\log \frac{12\diam{\Theta}\Vert \mathcal L\Vert_2}{\epsilon \Vert H\Vert_2}}d\epsilon\\
&= &1+\sqrt{l}\cdot\frac{12\diam{\Theta}\Vert \mathcal L\Vert_2}{\Vert H\Vert_2}\int_{0}^{\min\{\frac{1}{3}, \frac{\Vert H\Vert_2}{12\diam{\Theta}\Vert \mathcal L\Vert_2}\}}\sqrt{\log \frac{1}{\epsilon}}d\epsilon\\
&= &1+C\sqrt{l}\cdot\frac{12\diam{\Theta}\Vert \mathcal L\Vert_2}{\Vert H\Vert_2}\sqrt{\log \max\{3,\frac{12\diam{\Theta}\Vert \mathcal L\Vert_2}{\Vert H\Vert_2}\}}\cdot \min\{\frac{1}{3}, \frac{\Vert H\Vert_2}{12\diam{\Theta}\Vert \mathcal L\Vert_2}\}\\
&&\hspace{2em}\text{by Lemma \ref{bound for log integral} below}\\
&\leq &1+C\sqrt{l}\cdot\sqrt{\log \max\{3,\frac{12\diam{\Theta}\Vert \mathcal L\Vert_2}{\Vert H\Vert_2}\}}\cdot \min\{\frac{4\diam{\Theta}\Vert \mathcal L\Vert_2}{\Vert H\Vert_2}, 1\}\\
&\leq &C\sqrt{l}\cdot\sqrt{\max\{\log\frac{\diam{\Theta}\Vert \mathcal L\Vert_2}{\Vert H\Vert_2},1\}}.
\end{eqnarray*}
\begin{lemma}\label{bound for log integral}
For every $c\in (0,\frac{1}{3}]$, we have
\begin{equation*}
    \int_0^c\sqrt{\log\frac{1}{\epsilon}}d\epsilon\leq C\cdot c\sqrt{\log\frac{1}{c}}
\end{equation*}
where $C$ is a universal constant.
\end{lemma}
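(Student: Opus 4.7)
The plan is to convert the integral into a standard incomplete--gamma expression by the substitution $u = \log(1/\epsilon)$, which gives $\epsilon = e^{-u}$, $d\epsilon = -e^{-u}\,du$, and, writing $a := \log(1/c)$,
\begin{equation*}
\int_0^c \sqrt{\log(1/\epsilon)}\,d\epsilon \;=\; \int_a^\infty \sqrt{u}\,e^{-u}\,du.
\end{equation*}
Since the hypothesis is $c \in (0,1/3]$, we have $a \geq \log 3 > 1$, which will be the only place the assumption is used.

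Next I would apply integration by parts with $v = \sqrt{u}$ and $dw = e^{-u}\,du$, which gives
\begin{equation*}
\int_a^\infty \sqrt{u}\,e^{-u}\,du \;=\; \sqrt{a}\,e^{-a} + \int_a^\infty \frac{e^{-u}}{2\sqrt{u}}\,du.
\end{equation*}
The remainder is controlled by pulling $\tfrac{1}{2\sqrt{u}} \leq \tfrac{1}{2\sqrt{a}}$ out of the integral, so it is at most $\tfrac{e^{-a}}{2\sqrt{a}}$. Using $a \geq 1$ gives $\tfrac{1}{2\sqrt{a}} \leq \tfrac{\sqrt{a}}{2}$, so the whole expression is at most $\tfrac{3}{2}\sqrt{a}\,e^{-a} = \tfrac{3}{2}\,c\sqrt{\log(1/c)}$, yielding the claim with $C = 3/2$.

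There is no real obstacle; the only step that deserves a sanity check is the role of the constant $1/3$. Without $c$ bounded away from $1$ the substitution still works, but the lower endpoint $a$ becomes small, the correction term $(2\sqrt{a})^{-1}e^{-a}$ stops being dominated by $\sqrt{a}\,e^{-a}$, and the implicit constant would blow up as $c \uparrow 1$. The restriction $c \leq 1/3$ (hence $a \geq \log 3 > 1$) is precisely what allows this correction to be absorbed into a clean universal constant.
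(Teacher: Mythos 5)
Your proof is correct and follows essentially the same route as the paper's: a change of variables turning the integral into an exponential-tail integral (the paper uses $t=\sqrt{\log(1/\epsilon)}$ where you use $u=\log(1/\epsilon)$, a trivial reparametrization), followed by integration by parts and absorption of the remainder into the leading term via $\log(1/c)\geq\log 3>1$. The resulting constants differ slightly ($3/2$ versus the paper's $1+\tfrac{1}{2\log 3}$) but both arguments are sound.
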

\begin{proof}[Proof of Lemma \ref{bound for log integral}]
By a change of variable $t = \sqrt{\log\frac{1}{\epsilon}}$, we write
\begin{eqnarray*}
\int_0^c\sqrt{\log\frac{1}{\epsilon}}d\epsilon&=&\int_{\sqrt{\log\frac{1}{c}}}^{\infty}2t^2\exp(-t^2)dt\\
&=&-t\exp(-t^2)\Big\vert_{t=\sqrt{\log\frac{1}{c}}}^{t=\infty} + \int_{\sqrt{\log\frac{1}{c}}}^{\infty}\exp(-t^2)dt\\
&\leq &c\sqrt{\log\frac{1}{c}}+\int_{\sqrt{\log\frac{1}{c}}}^{\infty}\frac{t}{\sqrt{\log\frac{1}{c}}}\exp(-t^2)dt\\
&=&c\sqrt{\log\frac{1}{c}}+\frac{c}{2\sqrt{\log\frac{1}{c}}}\leq \big(1+ \frac{1}{2\log 3}\big)c\sqrt{\log\frac{1}{c}}
\end{eqnarray*}
where in the last line we use $c\leq \frac{1}{3}$.
\end{proof}
Lemma \ref{L1 bound:bracketing} then entails the following maximal inequality for the centered class $\mathcal{H}_c$
\begin{equation*}
    \Vert\sup_{h\in\mathcal{H}_c}\lvert \mathbb{E}_{\hat\pi_X}[h(X)]-\mathbb{E}_{\pi_X}[h(X)] \rvert\Vert_1\leq \sqrt{\frac{l}{n}}\cdot C\sqrt{\max\{\log\frac{\diam{\Theta}\Vert \mathcal L\Vert_2}{\Vert H\Vert_2},1\}}\Vert H\Vert_2.
\end{equation*}
Further applying Lemma \ref{sub-Gaussian bound} gives
\begin{equation*}
    \Vert\sup_{h\in\mathcal{H}_c}\lvert \mathbb{E}_{\hat\pi_X}[h(X)]-\mathbb{E}_{\pi_X}[h(X)] \rvert\Vert_{\psi_2}\leq \sqrt{\frac{l}{n}}\cdot C\sqrt{\max\{\log\frac{\diam{\Theta}\Vert \mathcal L\Vert_2}{\Vert H\Vert_2},1\}}\Vert H\Vert_{\psi_2}.
\end{equation*}
Finally, note that $\sup_{h\in\mathcal{H}}\lvert \mathbb{E}_{\hat\pi_X}[h(X)]-\mathbb{E}_{\pi_X}[h(X)] \rvert=\sup_{h\in\mathcal{H}_c}\lvert \mathbb{E}_{\hat\pi_X}[h(X)]-\mathbb{E}_{\pi_X}[h(X)] \rvert$, hence the same sub-Gaussian norm holds for the original class $\mathcal{H}$ too. The tail bound $\phi_1$ follows from the sub-Gaussian tail bound.

Secondly, we analyze $\phi_2$. We first calculate the bracketing number of the corresponding indicator class
\begin{equation*}
    \mathcal{H}_{ind}:=\{(x,y)\to I_{h(x,\theta_l)\leq y\leq h(x,\theta_u)}:\theta_l,\theta_u\in\Theta,\text{ and }h(\cdot,\theta_l)\leq h(\cdot,\theta_u)\}.
\end{equation*}
For fixed $\theta_l^o,\theta_u^o\in\Theta$ and $\epsilon>0$, consider a bracket enclosed by
\begin{align*}
l^o(x,y)&:=   I_{h(x,\theta_l^o) +\mathcal L(x)\epsilon \leq y\leq h(x,\theta_u^o) -\mathcal L(x)\epsilon}\\
u^o(x,y)&:=I_{h(x,\theta_l^o) -\mathcal L(x)\epsilon\leq y\leq h(x,\theta_u^o) +\mathcal L(x)\epsilon}
\end{align*}
where $\mathcal L(x)$ is the Lipschitz coefficient. It is clear that $l^o\leq u^o$. By Lipschitzness, for all $\theta_l,\theta_u$ such that $\Vert \theta_l - \theta_l^o\Vert_2\leq \epsilon$ and $\Vert \theta_u - \theta_u^o\Vert_2\leq \epsilon$ we have $h(x,\theta_l^o)-\mathcal L(x)\epsilon\leq h(x,\theta_l)\leq h(x,\theta_l^o)+\mathcal L(x)\epsilon$ (similar for $h(x,\theta_u)$). Therefore $l^o(x,y)\leq I_{h(x,\theta_l)\leq y\leq h(x,\theta_u)}\leq u^o(x,y)$, i.e., $I_{h(x,\theta_l)\leq y\leq h(x,\theta_u)}$ belongs to the bracket $[l^o,u^o]$ whenever $\Vert \theta_l - \theta_l^o\Vert_2\leq \epsilon$ and $\Vert \theta_u - \theta_u^o\Vert_2\leq \epsilon$. To calculate the size of the bracket, we write
\begin{eqnarray*}
&&\mathbb{E}_{\pi}[u^o(X,Y)-l^o(X,Y)]\\
&=&\mathbb{P}_{\pi}(h(X,\theta_l^o) - \mathcal L(X)\epsilon\leq Y < h(X,\theta_l^o) + \mathcal L(X)\epsilon)+\mathbb{P}_{\pi}(h(X,\theta_u^o) - \mathcal L(X)\epsilon < Y\leq h(X,\theta_u^o) + \mathcal L(X)\epsilon)\\
&=&\mathbb{E}_{\pi_X}[\mathbb{P}_{\pi}(h(X,\theta_l^o) - \mathcal L(X)\epsilon\leq Y < h(X,\theta_l^o) + \mathcal L(X)\epsilon\vert X)]+\\
&&\hspace{2em}\mathbb{E}_{\pi_X}[\mathbb{P}_{\pi}(h(X,\theta_u^o) - \mathcal L(X)\epsilon < Y\leq h(X,\theta_u^o) + \mathcal L(X)\epsilon\vert X)]\\
&\leq& 2\mathbb{E}_{\pi_X}[2D_{Y\vert X}\mathcal L(X)\epsilon]=4D_{Y\vert X}\Vert \mathcal L\Vert_1\epsilon.
\end{eqnarray*}
Note that the bracket size is independent of $\theta_l$ and $\theta_u$, therefore the bracketing number $N_{[]}(4D_{Y\vert X}\Vert \mathcal L\Vert_1\epsilon,\mathcal{H}_{ind}, L_1(\pi))\leq \big(N(\epsilon,\Theta,\Vert \cdot \Vert_2)\big)^2\leq \big( \frac{3\diam{\Theta}}{\epsilon} \big)^{2l}$, i.e.,
\begin{equation}\label{bound for bracketing number}
    N_{[]}(\epsilon,\mathcal{H}_{ind}, L_1(\pi))\leq \big( \frac{12\diam{\Theta}D_{Y\vert X}\Vert \mathcal L\Vert_1}{\epsilon} \big)^{2l}=\big( \frac{C_{\mathcal{H}}}{\epsilon} \big)^{2l}.
\end{equation}
To derive the deviation bound using the bracketing number \eqref{bound for bracketing number}, we need one more result:
\begin{lemma}[Adapted from Theorem 6.8 in \citeAPX{talagrand1994sharper}]\label{deviation of indicator:bracketing}
Suppose $\mathcal{G}$ is a class of measurable indicator functions from $\mathcal{X}\times \R\to \R$, and that its $\epsilon$-bracketing number $N_{[]}(\epsilon, \mathcal{G}, L_1(\pi))\leq \big(\frac{V}{\epsilon}\big)^{\nu}$ for $V,\nu>0$, then there exists a universal constant $C$ such that for all $t\geq C\sqrt{\frac{\nu\log (V)\log\log (V)}{n}}$ we have
\begin{equation*}
    \mathbb{P}(\sup_{g\in \mathcal{G}}\lvert \mathbb{E}_{\hat\pi}[g(X,Y)] - \mathbb{E}_{\pi}[g(X,Y)] \rvert>t)\leq \frac{C}{t\sqrt{n}}\big( \frac{CVt^2n}{\nu}\big)^{\nu}\exp(-2t^2n).
\end{equation*}
\end{lemma}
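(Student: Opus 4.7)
The goal is to bound the tail of the supremum of a centered empirical process indexed by an indicator class $\mathcal{G}$ with polynomial bracketing entropy $(V/\epsilon)^\nu$. My plan is to follow the standard bracketing-plus-chaining template, tailored so that the sharp Hoeffding constant $2$ in the exponent is preserved and the residual cost shows up only in a polynomial prefactor.

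First, I would fix a scale $\delta>0$ to be optimized and take a minimal $L_1(\pi)$-bracketing $\{[l_j,u_j]\}_{j=1}^N$ of $\mathcal{G}$ with $N\le (V/\delta)^\nu$ and $\mathbb{E}_\pi[u_j-l_j]\le \delta$. For any $g\in\mathcal{G}$ lying in bracket $j$, the standard sandwich inequality
\[
(\mathbb{E}_{\hat\pi}-\mathbb{E}_\pi)[g] \;\le\; (\mathbb{E}_{\hat\pi}-\mathbb{E}_\pi)[u_j] + \mathbb{E}_\pi[u_j-l_j],
\]
together with its symmetric counterpart, reduces $\sup_{g\in\mathcal{G}}|(\mathbb{E}_{\hat\pi}-\mathbb{E}_\pi)[g]|$ to a maximum over the $2N$ bracket endpoints, modulo an additive error at most $\delta$.

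Second, since each endpoint is $[0,1]$-valued, Hoeffding's inequality applied to a single function delivers $\mathbb{P}(|(\mathbb{E}_{\hat\pi}-\mathbb{E}_\pi)[u_j]|>s)\le 2\exp(-2ns^2)$ with the sharp constant $2$, and a union bound over the $2N$ endpoints produces a preliminary tail $4(V/\delta)^\nu\exp(-2n(t-\delta)^2)$. Merely setting $\delta=t/2$ would degrade the exponent to $nt^2/2$; to preserve the constant $2$, I would refine the one-level bracketing into a chaining argument, approximating $g$ by a sequence of brackets at geometrically decreasing scales $\delta_k=\delta_0 2^{-k}$ and splitting $t$ as $\sum_k t_k$. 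Applying a Bernstein/Bennett-type inequality level by level, and using the indicator-specific variance bound $\mathrm{Var}(g)\le\mathbb{E}[g]$ to keep the increment tails sub-Gaussian at each scale, allows almost all of the ``budget'' $t$ to be allocated to the coarsest scale, so that the effective exponent remains $2nt^2(1-o(1))$.

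Third, choosing the finest scale $\delta$ at roughly $\sqrt{\nu/(t^2n)}$ balances the union-bound cost $(V/\delta)^\nu\sim (CVt^2n/\nu)^\nu$ against the chaining residual, producing the polynomial prefactor appearing in the statement. The extra $C/(t\sqrt{n})$ factor arises from translating a sub-Gaussian tail into a bound expressed through $t^2n$, i.e.\ the usual Gaussian-density accounting one obtains from the improper integral $\int_t^\infty\exp(-2ns^2)\,ds$.

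The main obstacle is the sharp constant tracking: each level of the chaining introduces a loss that must be shown to be absorbed by the polynomial prefactor rather than leaking into the exponent. The threshold $t\ge C\sqrt{\nu\log(V)\log\log(V)/n}$ in the statement is precisely the regime in which the bracketing integral at the coarsest active scale is dominated by $t$, so that the budget allocation used in the chaining remains consistent; verifying this threshold explicitly, while harmless in principle, is where the computational work concentrates, and I would lean on the indicator-specific variance inequality to keep the per-level errors logarithmic as in Talagrand's original treatment.
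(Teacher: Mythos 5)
First, note that the paper does not prove this lemma at all: it is imported verbatim (as an ``adapted'' statement) from Theorem 6.8 of Talagrand (1994), so there is no in-paper argument to compare yours against. What you have written is therefore an attempt to reprove a research-level sharp deviation inequality from scratch, and as such it has a genuine gap. The entire difficulty of the statement is that the exponent is \emph{exactly} $-2t^2n$ (the one-dimensional Hoeffding constant) while all the complexity of the class is absorbed into the polynomial prefactor $\frac{C}{t\sqrt n}\big(CVt^2n/\nu\big)^\nu$. Your chaining step asserts that allocating the budget across geometric scales keeps ``the effective exponent $2nt^2(1-o(1))$,'' but $\exp(-2nt^2(1-o(1)))$ is not of the required form: the discrepancy $\exp(2nt^2\cdot o(1))$ can dwarf any polynomial prefactor unless you show the total loss across all chaining levels is at most $\nu\log(CVt^2n/\nu)+\log\frac{C}{t\sqrt n}$ — and that quantitative accounting, level by level with Bernstein increments of $L_1$-size $\delta_k$, is precisely the hard content of Talagrand's proof (which in fact relies on his concentration/isoperimetric machinery, not on a routine bracketing chain). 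The Mills-ratio factor $\frac{C}{t\sqrt n}$ likewise cannot be obtained by the heuristic ``Gaussian-density accounting'' you invoke; it expresses that the supremum has a tail as sharp as a single Gaussian coordinate, which is exactly what must be proved.

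A secondary observation: your stated reason for needing chaining at all — that $\delta=t/2$ degrades the exponent to $nt^2/2$ — reflects a suboptimal choice of bracket scale rather than a structural obstruction. Taking a single bracketing at scale $\delta\asymp\nu/(nt)$ (legitimate in the regime $t\gtrsim\sqrt{\nu/n}$ guaranteed by the stated threshold) gives $2n(t-\delta)^2\geq 2nt^2-\nu$ and hence the bound $(CVnt/\nu)^\nu e^{\nu}\exp(-2nt^2)$ directly from Hoeffding plus a union bound over the (truncated, $[0,1]$-valued) bracket endpoints. That elementary one-level argument already yields the sharp constant $2$ with a polynomial prefactor — only a slightly worse one, off from Talagrand's by the factor $t^{\nu-1}/\sqrt n$ — and would in fact suffice for the way the lemma is used downstream in the proof of Theorem \ref{rate:Lipschitz}. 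So either prove the weaker prefactor by this elementary route and check it still supports the application, or cite Talagrand as the paper does; the intermediate path you sketch (chaining to recover his exact prefactor) is the one option that cannot be completed at the level of detail you give.
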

Applying Lemma \ref{deviation of indicator:bracketing} to the indicator class $\mathcal{H}_{ind}$, we obtain
\begin{eqnarray}
    \notag&&\mathbb{P}(\sup_{L,U\in\mathcal{H}\;\text{and}\;L\leq U}\lvert \mathbb{P}_{\hat\pi}(Y\in [L(X),U(X)]) - \mathbb{P}_{\pi}(Y\in [L(X),U(X)])\rvert>t)\\
    \notag &\leq&\frac{C}{t\sqrt{n}}\big( \frac{CC_{\mathcal{H}}t^2n}{2l}\big)^{2l}\exp(-2t^2n)\text{ \ \ \ if }t\geq C\sqrt{\frac{2l\log (C_{\mathcal{H}})\log\log (C_{\mathcal{H}})}{n}}\\
 &\leq&\big( \frac{CC_{\mathcal{H}}t^2n}{2l}\big)^{2l}\exp(-2t^2n)\text{ \ \ \ assuming }2l\log (C_{\mathcal{H}})\log\log (C_{\mathcal{H}})\geq 1.\label{tail bound of indicator:bracketing}
\end{eqnarray}
To make the bound \eqref{tail bound of indicator:bracketing} valid for small $t$, we next enlarge the constant $C$ so that the bound \eqref{tail bound of indicator:bracketing} is trivial. That is, we seek for a $\kappa\geq 1$ such that the bound from \eqref{tail bound of indicator:bracketing} satisfies
\begin{equation*}
    \big( \frac{\kappa CC_{\mathcal{H}}t^2n}{2l}\big)^{2l}\exp(-2t^2n)\geq 1\text{ \ \ \ when }t= C\sqrt{\frac{2l\log (C_{\mathcal{H}})\log\log (C_{\mathcal{H}})}{n}}
\end{equation*}
which reduces to
\begin{equation*}
    \big( \kappa C^3C_{\mathcal{H}}\log (C_{\mathcal{H}})\log\log (C_{\mathcal{H}})\big)^{2l}\exp(-4C^2l\log (C_{\mathcal{H}})\log\log (C_{\mathcal{H}}))\geq 1.
\end{equation*}
Solving the above inequality for $\kappa$ gives
\begin{equation*}
    \kappa\geq \frac{C_{\mathcal{H}}^{2C^2\log\log(C_{\mathcal{H}})-1}}{C^3\log(C_{\mathcal{H}})\log\log(C_{\mathcal{H}})}.
\end{equation*}
Using this $\kappa$ in \eqref{tail bound of indicator:bracketing} leads to a trivial bound for $t= C\sqrt{\frac{2l\log (C_{\mathcal{H}})\log\log (C_{\mathcal{H}})}{n}}$, and for smaller $t$ we simply use this trivial bound, therefore we obtain the following unified tail bound
\begin{eqnarray}
    \notag&&\mathbb{P}(\sup_{L,U\in\mathcal{H}\;\text{and}\;L\leq U}\lvert \mathbb{P}_{\hat\pi}(Y\in [L(X),U(X)]) - \mathbb{P}_{\pi}(Y\in [L(X),U(X)])\rvert>t)\\
    \notag &\leq&\Big( \frac{C\cdot C_{\mathcal{H}}^{2C^2\log\log(C_{\mathcal{H}})}}{C^32l\log(C_{\mathcal{H}})\log\log(C_{\mathcal{H}})}\max\{t^2n,2C^2l\log(C_{\mathcal{H}})\log\log(C_{\mathcal{H}})\}\Big)^{2l}\exp(-2t^2n)\\
    \notag&\leq &\Big(C_{\mathcal{H}}^{2C^2\log\log(C_{\mathcal{H}})}\max\{\frac{t^2n}{2C^2l\log(C_{\mathcal{H}})\log\log(C_{\mathcal{H}})},1\}\Big)^{2l}\exp(-2t^2n)\\
    \notag&\leq &\Big(C_{\mathcal{H}}^{2C^2\log\log(C_{\mathcal{H}})}\max\{\frac{t^2n}{2C^2l},1\}\Big)^{2l}\exp(-2t^2n)\text{\ \ \ assuming }\log(C_{\mathcal{H}})\log\log(C_{\mathcal{H}})\geq 1.
\end{eqnarray}
Replacing $C^2$ with $C$ in the above bound gives the expression for $\phi_2$.

Finally, like in Theorem \ref{rate:vc-subgraph}, we solve $\phi_1(n,\epsilon,\mathcal{H})=\frac{\eta}{2}$ and $\phi_2(n,t,\mathcal{H})=\frac{\eta}{2}$ for $\epsilon$ and $t$, and then apply Theorem \ref{convergence rate} to get the feasibility and optimality errors. We briefly explain how $t$ is derived. Consider the case $\frac{t^2n}{2Cl}\geq 1$, so we can derive the following upper bound for $\phi_2$
\begin{eqnarray*}
\phi_2(n,t,\mathcal{H})&\leq& \Big(C_{\mathcal{H}}^{2C\log\log(C_{\mathcal{H}})}\frac{t^2n}{2Cl}\Big)^{2l}\exp(-2t^2n)\\
&=&\Big(\frac{C_{\mathcal{H}}^{2C\log\log(C_{\mathcal{H}})}}{C}\Big)^{2l}\cdot \big(\frac{t^2n}{2l}\exp(-2\frac{t^2n}{2l})\big)^{2l}\\
&< &\Big(\frac{C_{\mathcal{H}}^{2C\log\log(C_{\mathcal{H}})}}{C}\Big)^{2l}\cdot \big(\exp(-\frac{t^2n}{2l})\big)^{2l}\text{\ \ \ using }\frac{t^2n}{2l}< \exp(\frac{t^2n}{2l})\\
&\leq &\Big(C_{\mathcal{H}}^{2C\log\log(C_{\mathcal{H}})}\Big)^{2l}\cdot \exp(-t^2n)\text{\ \ \ \ assuming }C\geq 1.
\end{eqnarray*}
The choice of $t$ presented in the theorem is obtained by equating this upper bound to $\frac{\eta}{2}$.
\end{proof}

\begin{proof}[Proof of Theorem \ref{vc bound:regression tree}]
The regression tree class $\mathcal{H}$ consists of all functions that of form $h(x)=\sum_{s=1}^{S+1}c_sI_{x\in R_s}$, and note that the augmented class $\mathcal{H}_+$ consists of functions $\sum_{s=1}^{S+1}(c_s-c)I_{x\in R_s}$ which are of the same form, therefore $\mathcal{H}=\mathcal{H}_+$. As discussed before, the subgraph of a regression tree takes the form of the union of at most $S+1$ hyper-rectangles in $\R^{d+1}$, where each rectangle is formed by at most $S$ axis-parallel cuts.

We first calculate the VC dimension of the set of all axis-parallel cuts. Four sets of axis-parallel cuts in $\R^{d+1}$ are defined as
\begin{align*}
    \mathcal{C}^{\leq}_{d+1}&:=\{\{(x_1,\ldots,x_{d+1})\in \R^{d+1}:x_j\leq a\}:j\in \{1,2,\ldots,d+1\},a\in [-\infty,+\infty]\}\\
    \mathcal{C}^{<}_{d+1}&:=\{\{(x_1,\ldots,x_{d+1})\in \R^{d+1}:x_j< a\}:j\in \{1,2,\ldots,d+1\},a\in [-\infty,+\infty]\}\\
    \mathcal{C}^{\geq}_{d+1}&:=\{\{(x_1,\ldots,x_{d+1})\in \R^{d+1}:x_j\geq a\}:j\in \{1,2,\ldots,d+1\},a\in [-\infty,+\infty]\}\\
    \mathcal{C}^{>}_{d+1}&:=\{\{(x_1,\ldots,x_{d+1})\in \R^{d+1}:x_j> a\}:j\in \{1,2,\ldots,d+1\},a\in [-\infty,+\infty]\}.
\end{align*}
Proposition 1 in \citeAPX{gey2018vapnik} states that $\vc{\mathcal{C}^{\leq}_{d+1}}\leq C \log d$ for some universal constant $C$. Note that $\mathcal{C}^{>}_{d+1}$ is the complement class of $\mathcal{C}^{\leq}_{d+1}$, therefore by Lemma \ref{vc preservation} statement 2 we have the same bound for $\vc{\mathcal{C}^{>}_{d+1}}$. The class $\mathcal{C}^{\geq}_{d+1}$ (resp. $\mathcal{C}^{<}_{d+1}$) can be mapped from $\mathcal{C}^{\leq}_{d+1}$ (resp. $\mathcal{C}^{>}_{d+1}$) via the mapping $(x_1,\ldots,x_{d+1})\to (-x_1,\ldots,-x_{d+1})$, therefore by statement 1 in Lemma \ref{vc preservation} we have the same VC bound for them too. Using Lemma \ref{vc bound for unions and intersections} we know that the VC bound for the set of all axis-parallel cuts $\mathcal{C}_{d+1}:=\mathcal{C}^{\leq}_{d+1}\sqcup \mathcal{C}^{<}_{d+1}\sqcup \mathcal{C}^{\geq}_{d+1}\sqcup \mathcal{C}^{>}_{d+1}$ has VC dimension $\vc{\mathcal{C}_{d+1}}\leq C\log d$.

Now we can readily obtain $\vc{\mathcal{H}}$ via intersections and unions. Each hyper-rectangle $R_s\times (-\infty,c_s)$ is the intersection of at most $S+1$ axis-parallel cuts in $\R^{d+1}$. Formally, denoting by $\mathcal{R}$ as set of all such hyper-rectangles, then $\mathcal{R}\subset \sqcap_{s=1}^{S+1}\mathcal{C}_{d+1}$, and hence $\vc{\mathcal{R}}\leq CS\log (d)\log(S)$ by Lemma \ref{vc bound for unions and intersections}. Moreover, the set of subgraphs of regression trees is a subset of $\sqcup_{s=1}^{S+1}\mathcal{R}$, therefore the VC dimension of the set of subgraphs is at most $CS^2\log(d)\log^2(S)$ again by Lemma \ref{vc bound for unions and intersections}.

Finally, when $\max_{x}h(x)-\min_{x}h(x)\leq M$, then the envelope $H \leq 2M$ in Theorem \ref{rate:vc-subgraph}, therefore the sub-Gaussian norm $\Vert H\Vert_{\psi_2}\leq C'M$ for some universal constant $C'$.
\end{proof}

\begin{proof}[Proof of Theorem \ref{Lipschitz property for neural network}]
To make the parameterizaton more instructive, we explicitly write $\theta=(W_1,b_1,\ldots,W_S,b_S)$ in terms of the weights and biases. We consider a perturbation $\Delta \theta:=(\Delta W_1,\Delta b_1,\ldots,\Delta W_S,\Delta b_S)$, and wants to bound the difference $\lvert h(x,\theta+\Delta \theta) - h(x,\theta) \rvert$. Denoting $W'_s = W_s+\Delta W_s$ and $b'_s = b_s+\Delta b_s$, we define two mappings
\begin{align*}
    \psi_{s:S}&:x\in \R^{n_s}\to \phi_S(W_S\phi_{S-1}(\cdots \phi_{s+1}(W_{s+1}\phi_s(x)+b_{s+1}) \cdots)+b_S)\in \R\\
    \psi'_{0:s-1}&:x\in \mathcal{X}\to \phi_{s-1}(W'_{s-1}\phi_{s-2}(\cdots \phi_{1}(W'_{1}x+b'_{1}) \cdots)+b'_{s-1})\in \R^{n_{s-1}}.
\end{align*}
Then the difference can be expressed as
\begin{eqnarray}
    \notag&&\lvert h(x;W'_1,b'_1,\ldots,W'_S,b'_S)-h(x;W_1,b_1,\ldots,W_S,b_S)\rvert\\
   \notag &\leq &\sum_{s=1}^S\lvert h(x;W'_1,b'_1,\ldots, W'_s,b'_s, W_{s+1},b_{s+1},\ldots,W_S,b_S) -\\
   \notag &&\hspace{2em}h(x;W'_1,b'_1,\ldots, W'_{s-1},b'_{s-1}, W_s,b_s,\ldots,W_S,b_S) \rvert\\
    \notag&\leq& \sum_{s=1}^S\lvert \psi_{s:S}(W'_s\psi'_{1:s-1}(x)+b'_s)-\psi_{s:S}(W_s\psi'_{1:s-1}(x)+b_s) \rvert\\
    \notag&\leq& \sum_{s=1}^SL_{\psi_{s:S}}\Vert \Delta W_s\psi'_{0:s-1}(x)+\Delta b_s \Vert_2\text{\ \ \ where $L_{\psi_{s:S}}$ is the Lipschitz constant of $\psi_{s:S}$}\\
    \notag&\leq& \sum_{s=1}^SL_{\psi_{s:S}}(\Vert\psi'_{0:s-1}(x) \Vert_2+1)\Vert \big[\Delta W_s,\Delta b_s\big] \Vert_2 \\
    &\leq& \sum_{s=1}^SL_{\psi_{s:S}}(\Vert\psi'_{0:s-1}(x) \Vert_2+1)\Vert \big[\Delta W_s,\Delta b_s\big] \Vert_F \label{bound from telescoping}
\end{eqnarray}
where in the last two lines $\Vert \cdot \Vert_2$ and $\Vert \cdot \Vert_F$ respectively denote the spectral and Frobenius norms of a matrix. Therefore the problem boils down to bounding the Lipschitz constant $L_{\psi_{s:S}}$ and the norm of the intermediate output $\Vert\psi'_{0:s-1}(x) \Vert_2$.

We first calculate $L_{\psi_{s:S}}$. This is relatively straightforward, since $\psi_{s:S}$ is a composition of linear mappings and activation functions. By the chain rule we have
\begin{equation*}
    L_{\psi_{s:S}}\leq M\cdot \prod_{k=s+1}^SM\Vert W_k \Vert_2\leq M^{S-s+1}\prod_{k=s+1}^S\Vert W_k \Vert_F\leq M^{S-s+1}(B\sqrt{W})^{S-s}
\end{equation*}
where the last inequality uses the fact that $\Vert W_k\Vert_F\leq B\sqrt{W}$ because each entry in $W_k$ is bounded within $[-B,B]$ and there are $W$ parameters in total.

Then we bound $\Vert\psi'_{0:s-1}(x) \Vert_2$. Note that $\psi'_{0:s-1}(x)=\phi_{s-1}(W'_{s-1}\psi'_{0:s-2}(x)+b'_{s-1})$, and $\psi'_{0:0}(x)=x$, therefore we have the following recursion
\begin{eqnarray*}
    \Vert\psi'_{0:s-1}(x) \Vert_2&=&\Vert \phi_{s-1}(W'_{s-1}\psi'_{0:s-2}(x)+b'_{s-1})\Vert_2\\
    &\leq& M_0\sqrt{U}+M\Vert \big[W'_{s-1},b'_{s-1}\big] \Vert_2(\Vert \psi'_{0:s-2}(x) \Vert_2+1)\\
    &\leq& M_0\sqrt{U}+M\Vert \big[W'_{s-1},b'_{s-1}\big] \Vert_F(\Vert \psi'_{0:s-2}(x) \Vert_2+1)\\
    &\leq &M_0\sqrt{U}+MB\sqrt{W}(\Vert \psi'_{0:s-2}(x) \Vert_2+1)
\end{eqnarray*}
where $U$ is the total number of neurons. Expanding the above recursion we get
\begin{eqnarray*}
    \Vert\psi'_{0:s-1}(x) \Vert_2&\leq &(MB\sqrt{W})^{s-1}\Vert x\Vert_2+(MB\sqrt{W}+M_0\sqrt{U})\frac{(MB\sqrt{W})^{s-1}-1}{MB\sqrt{W}-1}\\
    &\leq &(MB\sqrt{W})^{s-1}(\Vert x\Vert_2+MB\sqrt{W}+M_0\sqrt{U}).
\end{eqnarray*}

Finally, we substitute the upper bounds in \eqref{bound from telescoping} to obtain the final bound
\begin{eqnarray*}
&&\lvert h(x;W'_1,b'_1,\ldots,W'_S,b'_S)-h(x;W_1,b_1,\ldots,W_S,b_S)\rvert\\
&\leq& \sum_{s=1}^SM(MB\sqrt{W})^{S-s}((MB\sqrt{W})^{s-1}(\Vert x\Vert_2+MB\sqrt{W}+M_0\sqrt{U})+1)\Vert \big[\Delta W_s,\Delta b_s\big] \Vert_F\\
&\leq& \sum_{s=1}^S(MB\sqrt{W})^{S-s}(MB\sqrt{W})^{s}(\Vert x\Vert_2+MB\sqrt{W}+M_0\sqrt{U})\Vert \big[\Delta W_s,\Delta b_s\big] \Vert_F\\
&\leq&(MB\sqrt{W})^{S}(\Vert x\Vert_2+MB\sqrt{W}+M_0\sqrt{U}) \sum_{s=1}^S\Vert \big[\Delta W_s,\Delta b_s\big] \Vert_F\\
&\leq &(MB\sqrt{W})^{S}(\Vert x\Vert_2+MB\sqrt{W}+M_0\sqrt{U})\cdot \sqrt{S}\Vert \Delta \theta\Vert_2
\end{eqnarray*}
giving rise to the Lipschitz constant.
\end{proof}

\subsection{Proofs for Results in Section \ref{sec:calibration}}
We first introduce several Berry-Esseen theorems in high dimensions that serve as the main tools of the proofs. Let $\{X_{i}=(X_i^{(1)},\ldots,X_i^{(m)}): i=1,\ldots,n$\} be an i.i.d. data set from $\R^m$. Let $\bar{X}^{(j)}=(1/n)\sum_{i=1}^nX_{i}^{(j)}$ be the sample mean of the $j$-th component, and $\hat{\Sigma}$ be the sample covariance matrix formed from the data. We denote by $\hat{Z}:=(\hat{Z}^{(1)},\ldots,\hat{Z}^{(m)})$ an $m$-dimensional multivariate Gaussian with mean zero and covariance $\hat{\Sigma}$. Then under several light-tail conditions:
\begin{assumption}\label{condition1:clt}
$\mathrm{Var}[X_1^{(j)}]>0$ for all $j=1,\ldots,m$ and there exists some constant $D\geq 1$ such that
\begin{align*}
&\mathbb E\big[\exp\Big(\frac{\vert X_1^{(j)}-\mathbb{E}[X_1^{(j)}]\rvert^2}{D^2\mathrm{Var}[X_1^{(j)}]}\Big)\big]\leq 2\text{ for all }j=1,\ldots,m\\
&\mathbb E\big[\Big(\frac{\lvert X_1^{(j)}-\mathbb{E}[X_1^{(j)}]\rvert}{\sqrt{\mathrm{Var}[X_1^{(j)}]}}\Big)^{2+k}\big]\leq D^k\text{ for all }j=1,\ldots,m\text{ and }k=1,2.
\end{align*}
\end{assumption}
\begin{assumption}\label{condition2:clt}
Each $X_1^{(j)}$ is $[0,1]$-valued and $\mathrm{Var}[X_1^{(j)}]\geq \eta$ for all $j=1,\ldots,m$ and some constant $\eta>0$.
\end{assumption}
we have the following Berry Esseen theorems (\citeAPX{chernozhukov2017central}):
\begin{lemma}[Unnormalized supremum, adopted from Theorem EC.9 in \citeAPX{lam2019combating}]\label{clt:unnormalized}
Under Assumption \ref{condition1:clt}, for every $0<\beta<1$ we have
\begin{align*}
\lvert \mathbb P(\sqrt n(\bar{X}^{(j)}-\mathbb{E}[X_1^{(j)}])\leq q_{1-\beta}\text{ for all }j=1,\ldots,m)- (1-\beta)\rvert\leq C\Big(\frac{D^2\log^7(mn)}{n}\Big)^{\frac{1}{6}}
\end{align*}
where $q_{1-\beta}$ is the $1-\beta$ quantile of $\max_{1\leq j\leq m}\hat Z^{(j)}$, i.e.
\begin{align*}
\mathbb P(\hat Z^{(j)}\leq q_{1-\beta}\text{ for all }j=1,\ldots,m\vert \{X_i:i=1,\ldots,n\})=1-\beta
\end{align*}
and $C$ is a universal constant.
\end{lemma}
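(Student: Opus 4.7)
The plan is to reduce to the high-dimensional Berry-Esseen machinery of Chernozhukov, Chetverikov, and Kato (hereafter CCK), by interposing an intermediate Gaussian maximum whose covariance is the \emph{true} population covariance $\Sigma$ of $X_1$, and then paying a small price to replace $\Sigma$ by the data-dependent $\hat\Sigma$ that defines the random quantile $q_{1-\beta}$. Three ingredients are required: (i) the CCK CLT for maxima of i.i.d.\ sums, (ii) a Gaussian-to-Gaussian comparison controlled by $\lVert\hat\Sigma-\Sigma\rVert_\infty$, and (iii) a concentration estimate for $\hat\Sigma$ in max-norm.

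First, I would invoke the CCK CLT under Assumption~\ref{condition1:clt} to obtain
\[\sup_{t\in\R}\Bigl\lvert \mathbb P\bigl(\max_j \sqrt{n}(\bar X^{(j)}-\mathbb E[X_1^{(j)}])\leq t\bigr)-\mathbb P\bigl(\max_j Z^{(j)}\leq t\bigr)\Bigr\rvert\leq C\Bigl(\frac{D^2\log^7(mn)}{n}\Bigr)^{1/6},\]
where $Z\sim N(0,\Sigma)$. Uniformity over $t$ is essential because $q_{1-\beta}$ is itself a function of the data. Next, I would use a Gaussian comparison bound (the CCK version) to swap $\Sigma$ for $\hat\Sigma$: for every realization of $\hat\Sigma$,
\[\sup_{t\in\R}\Bigl\lvert \mathbb P\bigl(\max_j Z^{(j)}\leq t\bigr)-\mathbb P\bigl(\max_j\hat Z^{(j)}\leq t\,\big|\,\hat\Sigma\bigr)\Bigr\rvert\leq C\lVert\hat\Sigma-\Sigma\rVert_\infty^{1/3}(\log m)^{2/3}.\]
Finally, the sub-exponential moment control in Assumption~\ref{condition1:clt} propagates to the coordinatewise products $X_1^{(j)}X_1^{(k)}$, so a Bernstein inequality together with a union bound over the $m^2$ entries delivers $\lVert\hat\Sigma-\Sigma\rVert_\infty=O_p(\sqrt{\log(m)/n})$; on a high-probability event $A_n$ of probability at least $1-1/n$, the Gaussian comparison gap becomes $O((\log^5 m/n)^{1/6})$, comfortably within the target rate.

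Combining these on $A_n$ and invoking the defining relation $\mathbb P(\max_j\hat Z^{(j)}\leq q_{1-\beta}\mid\hat\Sigma)=1-\beta$, a triangle inequality yields
\[\Bigl\lvert \mathbb P\bigl(\max_j \sqrt{n}(\bar X^{(j)}-\mathbb E[X_1^{(j)}])\leq q_{1-\beta}\bigr)-(1-\beta)\Bigr\rvert \leq C\Bigl(\frac{D^2\log^7(mn)}{n}\Bigr)^{1/6},\]
with the $\mathbb P(A_n^c)$ contribution absorbed into the same rate.

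The main obstacle is handling the randomness of $q_{1-\beta}$: the CCK CLT is a statement about deterministic $t$, whereas here $t=q_{1-\beta}$ depends on the data. This is why the first bound must be uniform in $t$ (so one may condition on $\hat\Sigma$ and then evaluate at $q_{1-\beta}$) and why the second must also be uniform in $t$ (so that the swap from $\Sigma$ to $\hat\Sigma$ survives evaluation at the random quantile). The most delicate bookkeeping is verifying that the slowest of the three rates, namely $(\log^5 m/n)^{1/6}$ from Gaussian comparison, does not dominate the CCK rate $(D^2\log^7(mn)/n)^{1/6}$ after accounting for the tail of $A_n^c$.
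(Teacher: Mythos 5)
Your plan is sound, but note that the paper does not prove this lemma at all: it is imported verbatim (``adapted from Theorem EC.9 in \citeAPX{lam2019combating}''), and that cited result is itself a packaging of the high-dimensional CLT and empirical-bootstrap validity theorems of \citeAPX{chernozhukov2017central}. So what you have written is a blind reconstruction of the argument behind the citation rather than an alternative to an in-paper proof, and at the level of strategy it is the standard and correct one: CCK Gaussian approximation uniformly in the threshold, a Gaussian-to-Gaussian comparison to replace $\Sigma$ by $\hat\Sigma$, max-norm concentration of $\hat\Sigma$, and a quantile-sandwich (on the event $A_n$, uniform closeness of the two max-CDFs pins $q_{1-\beta}$ between deterministic quantiles of $\max_j Z^{(j)}$ at levels $1-\beta\pm\pi_n$, which is how the randomness of $q_{1-\beta}$ is legitimately handled); your rate bookkeeping $(\log^5 m/n)^{1/6}\lesssim (D^2\log^7(mn)/n)^{1/6}$ is also right.

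One technical caveat you should repair to actually deliver a \emph{universal} constant depending only on $D$: Assumption \ref{condition1:clt} imposes $\mathrm{Var}[X_1^{(j)}]>0$ but no lower bound, and all moment conditions are on the \emph{normalized} coordinates. The single-threshold Gaussian comparison bound you quote, $C\lVert\hat\Sigma-\Sigma\rVert_\infty^{1/3}(\log m)^{2/3}$, and likewise the anti-concentration buried in the CCK CLT, carry constants depending on $\min_j\Sigma_{jj}$ and $\max_j\Sigma_{jj}$; applied to the raw coordinates they are not universal, and your Bernstein bound $\lVert\hat\Sigma-\Sigma\rVert_\infty=O_p(\sqrt{\log m/n})$ is likewise scale-dependent. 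The fix is to rescale each coordinate by $\sigma_j=\sqrt{\mathrm{Var}[X_1^{(j)}]}$, which turns the event $\{\sqrt n(\bar X^{(j)}-\mathbb E[X_1^{(j)}])\le t\ \forall j\}$ into a hyperrectangle event for the standardized vector, and then invoke the hyperrectangle versions of the CLT, comparison, and bootstrap-quantile validity in \citeAPX{chernozhukov2017central}, with the covariance error measured entrywise relative to $\sigma_j\sigma_k$ (i.e., for the correlation-scale matrix), where Assumption \ref{condition1:clt} gives exactly the $D$-dependent concentration you need. With that adjustment your three-step argument goes through and reproduces the stated bound.
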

\begin{lemma}[Normalized supremum, adopted from Theorem EC.10 in \citeAPX{lam2019combating}]\label{clt:normalized}
Let $\hat\sigma_j^2=\hat\Sigma_{j,j}$. Under Assumptions \ref{condition1:clt} and \ref{condition2:clt}, for every $0<\beta<1$ we have
\begin{align*}
&\lvert \mathbb P(\sqrt n(\bar{X}_{(j)}-\mathbb E[X_1^{(j)}])\leq \hat\sigma_j q_{1-\beta}\text{ for all }j=1,\ldots,m)-(1-\beta)\rvert\\
\leq& C\Big(\Big(\frac{D^2\log^7(mn)}{n}\Big)^{\frac{1}{6}}+\frac{\log^2 (mn)}{\sqrt{n\eta}}+m\exp\big(-c\eta D^{2/3}n^{2/3}\big)\Big).
\end{align*}
Here $q_{1-\beta}$ is such that
\begin{align*}
\mathbb P(\hat Z^{(j)}\leq \hat\sigma_j q_{1-\beta}\text{ for all }j=1,\ldots,m\vert \{X_i\}_{i=1}^n)=1-\beta
\end{align*}
and $C,c$ are universal constants.
\end{lemma}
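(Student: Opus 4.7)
The plan is to reduce this self-normalized high-dimensional central limit bound to the fixed-normalization version (Lemma \ref{clt:unnormalized}) plus a Gaussian anti-concentration argument, treating the swap from $\hat\sigma_j$ to the true $\sigma_j$ as a controlled perturbation. Concretely, I would first rescale by the true standard deviations: define $\tilde X_i^{(j)}:=(X_i^{(j)}-\mathbb E[X_1^{(j)}])/\sigma_j$, which under Assumptions \ref{condition1:clt}--\ref{condition2:clt} still satisfies the sub-Gaussian and moment hypotheses (with constants degraded by powers of $1/\sqrt\eta$). Applying Lemma \ref{clt:unnormalized} to $\{\tilde X_i\}$, uniformly in the thresholds $t_j$, yields a fixed-normalization Berry--Esseen bound with the target $(D^2\log^7(mn)/n)^{1/6}$ rate against a Gaussian $Z$ with the true covariance $\Sigma$.

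Second, I would establish a high-probability bound on $\max_j|\hat\sigma_j-\sigma_j|$. Because each $X_i^{(j)}\in[0,1]$ and $\sigma_j^2\geq \eta$, applying a Bernstein / Hoeffding-type bound to $\frac{1}{n}\sum_i(X_i^{(j)})^2$ componentwise and union-bounding over $j$ shows that $\max_j|\hat\sigma_j^2-\sigma_j^2|\lesssim \sqrt{\log(m)/n}$ except on an event whose probability is of order $m\exp(-c\eta D^{2/3}n^{2/3})$ — this is exactly the last error term in the lemma, with the $D^{2/3}n^{2/3}$ exponent arising from trading off the Bernstein variance and range terms against the sub-Gaussian tail constant $D$. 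Dividing by $\sigma_j\geq\sqrt\eta$ then transfers this into a multiplicative bound on $\max_j|\hat\sigma_j/\sigma_j-1|$.

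Third, on the good event above, the event $\{\sqrt n(\bar X^{(j)}-\mathbb E[X_1^{(j)}])\leq \hat\sigma_j q_{1-\beta}\,\forall j\}$ is sandwiched between $\{\sqrt n(\bar X^{(j)}-\mathbb E[X_1^{(j)}])/\sigma_j\leq (1\pm\varepsilon)q_{1-\beta}\,\forall j\}$ for $\varepsilon$ of order $\sqrt{\log(m)/(n\eta)}$. Combining with the fixed-normalization bound from step one, the residual error is controlled by the Gaussian anti-concentration inequality of Nazarov / Chernozhukov--Chetverikov--Kato applied to $\max_j Z^{(j)}/\sigma_j$, which states that shifting the threshold by $\varepsilon\cdot q_{1-\beta}$ changes the probability by at most $C\varepsilon q_{1-\beta}$. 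Since $q_{1-\beta}=O(\sqrt{\log m})$ for maxima of Gaussians, this contributes the middle term $\log^2(mn)/\sqrt{n\eta}$. The argument must be applied twice — once to the quantile $q_{1-\beta}$ computed from $\hat\Sigma$ and once to its population analogue — so one also needs to know that the quantile is stable under sample-covariance perturbation, which follows by repeating the anti-concentration argument with $\Sigma$ vs.\ $\hat\Sigma$.

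The main obstacle will be the third step: the quantile $q_{1-\beta}$ is itself a (data-dependent) random variable defined through the sample covariance $\hat\Sigma$, so one cannot directly treat it as a deterministic threshold when invoking anti-concentration. I would handle this by conditioning on $\{X_i\}$, applying Nazarov's inequality on the conditional Gaussian law of $Z\mid\hat\Sigma$, and then integrating out the data, using the sample-variance concentration from step two to keep the anti-concentration constant bounded. This is essentially the technical core of the CCK-type self-normalized CLT, and once the three error sources (polynomial-in-$1/n^{1/6}$ from the fixed-normalization CLT, polynomial-in-$1/\sqrt{n}$ from anti-concentration of a perturbed threshold, exponential from sample-variance deviation) are assembled, they align with the three terms stated in the lemma.
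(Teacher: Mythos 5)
You should first note that the paper does not prove this lemma at all: it is imported, as stated, from Theorem EC.10 of the cited reference (itself built on the Chernozhukov--Chetverikov--Kato high-dimensional CLT machinery), so there is no in-paper argument to compare against. Judged on its own merits, your sketch has the right skeleton for such self-normalized bounds (fixed-normalization Gaussian approximation, concentration of the empirical variances, anti-concentration to absorb the threshold perturbation), but it has a genuine gap at the step you yourself flag as the core difficulty. Asserting that stability of the data-dependent quantile ``follows by repeating the anti-concentration argument with $\Sigma$ vs.\ $\hat\Sigma$'' does not work: Nazarov-type anti-concentration controls the mass a \emph{single} Gaussian law places near a shifted threshold, whereas here you must compare the law of $\max_j Z^{(j)}/\sigma_j$ under $N(0,\Sigma)$ with the conditional law of $\max_j \hat Z^{(j)}/\hat\sigma_j$ under $N(0,\hat\Sigma)$, through which $q_{1-\beta}$ is defined. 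That requires a Gaussian comparison inequality whose error is governed by $\max_{j,k}\lvert \hat\Sigma_{j,k}-\Sigma_{j,k}\rvert$, and hence concentration of the \emph{entire} sample covariance, off-diagonal entries included; your step two only controls the diagonal, so the link between the unconditional limit $N(0,\Sigma)$ from step one and the conditional quantile computed from $\hat\Sigma$ is never established.

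A second, smaller problem is the bookkeeping behind the exponential error term. For $[0,1]$-valued coordinates, a Bernstein or Hoeffding bound at deviation level $\sqrt{\log m/n}$ gives a failure probability that is polynomial in $1/m$, not $m\exp\big(-c\eta D^{2/3}n^{2/3}\big)$; the stated exponent corresponds to tolerating a much larger variance deviation, of order $\sqrt{\eta}\,D^{1/3}n^{-1/6}$, whose effect on the threshold must then be absorbed into the middle term $\log^2(mn)/\sqrt{n\eta}$. As written, your chosen deviation scale and the resulting three error contributions do not reconcile with the lemma's bound, so the quantitative assembly in your third step would need to be redone (this is exactly the accounting carried out in the cited Theorem EC.10 and in the CCK papers on which it rests).
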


We are now ready to prove Theorems \ref{feasibility:unnormalized validator} and \ref{feasibility:normalized validator_simple}:
\begin{proof}[Proof of Theorem \ref{feasibility:unnormalized validator}]
Define events
\begin{align*}
E_1&=\big\{\hat{\CR}(\PI_j)\geq 1-\alpha_{\min} + \frac{ q'_{1-\beta}}{\sqrt{n_2}}\text{ for some }j=1,\ldots,m\big\}\\
E_2&=\big\{\CR(\PI_j)\geq\hat{\CR}(\PI_j)-  \frac{ q'_{1-\beta}}{\sqrt{n_2}}\; \text{for all $j$ such that}\; \CR(\PI_j)\in(\tilde{\alpha}/2,1-\alpha_{\min})\big\}\\
E_3&=\big\{\hat{\CR}(\PI_j)<\tilde{\alpha} + \frac{ q'_{1-\beta}}{\sqrt{n_2}}\; \text{for all $j$ such that}\; \CR(\PI_j)\leq \tilde{\alpha}/2\big\}.
\end{align*}
We claim that if $E_1\cap E_2\cap E_3$ happens then we must have that $\CR(\PI_{j^*_{1-\alpha_k}})\geq 1-\alpha_k\; \text{for all}\; k=1,\ldots,K$. To explain, for each $k$, $E_1$ entails that the optimization problem in Step 3 of Algorithm \ref{calibration:unnormalized} has at least one feasible solution, and $E_2$ and $E_3$ further imply that every interval with a true coverage level strictly less than $1-\alpha_k$ violates the margin constraint, hence the selected interval must have a true coverage level of at least $1-\alpha_k$. Therefore we have
\begin{eqnarray}
\notag \mathbb P_{\mathcal D_v}(\CR(\PI_{j^*_{1-\alpha_k}})\geq 1-\alpha_k\; \text{for all}\; k=1,\ldots,K)&\geq& \mathbb P_{\mathcal D_v}(E_1\cap E_2\cap E_3)\\
\notag&\geq&1-\mathbb P_{\mathcal D_v}(E_1^c)-\mathbb P_{\mathcal D_v}(E_2^c)-\mathbb P_{\mathcal D_v}(E_3^c)\\
&=&\mathbb P_{\mathcal D_v}(E_2)-\mathbb P_{\mathcal D_v}(E_1^c)-\mathbb P_{\mathcal D_v}(E_3^c).\label{three probability bounds}
\end{eqnarray}

We derive bounds for the three probabilities. Let $\tilde{q}_{1-\beta}$ be the $1-\beta$ quantile of $\max\{Z_j:\CR(\PI_j)\in(\tilde{\alpha}/2,1-\alpha_{\min}),1\leq j\leq m\}$ where $(Z_1,\ldots,Z_m)\sim N_m(0,\hat\Sigma)$. By stochastic dominance it is clear that $\tilde{q}_{1-\beta}\leq q'_{1-\beta}$ almost surely, therefore
\begin{eqnarray*}
\mathbb P_{\mathcal D_v}(E_2)&\geq& \mathbb P_{\mathcal D_v}\big(\CR(\PI_j)\geq\hat{\CR}(\PI_j)-  \frac{ \tilde{q}_{1-\beta}}{\sqrt{n_v}}\; \text{for all $j$ such that}\; \CR(\PI_j)\in(\tilde{\alpha}/2,1-\alpha_{\min})\big)\\
&\geq&1-\beta-C\Big(\frac{\log^7(mn_v)}{n_v\tilde{\alpha}}\Big)^{\frac{1}{6}}
\end{eqnarray*}
by applying Lemma \ref{clt:unnormalized} to $\{I_{Y\in \PI_j(X)}:\CR(\PI_j)\in(\tilde{\alpha}/2,1-\alpha_{\min}),1\leq j\leq m\}$ and noticing that Assumption \ref{condition1:clt} is satisfied with $D=\frac{C}{\sqrt{\tilde{\alpha}}}$ for some universal constant $C$.

We then bound the second probability
\begin{eqnarray*}
\mathbb P_{\mathcal D_v}(E_1^c)&=&\mathbb P_{\mathcal D_v}(\hat{\CR}(\PI_j)< 1-\alpha_{\min} + \frac{ q'_{1-\beta}}{\sqrt{n_v}}\text{ for all }j=1,\ldots,m)\\
&\leq&\mathbb P_{\mathcal D_v}(\hat{\CR}(\PI_{\bar j})< 1-\alpha_{\min} + \frac{ q'_{1-\beta}}{\sqrt{n_v}})\text{\ \ where $\bar{j}$ is the index such that }\CR(\PI_{\bar j})=1-\underline{\alpha}\\
&\leq &\mathbb P_{\mathcal D_v}(\hat{\CR}(\PI_{\bar j})< 1-\alpha_{\min} + \frac{C\sqrt{\log(m/\beta)}}{\sqrt{n_v}})\\
&&\text{\ \ because }q'_{1-\beta}\leq C\max_{j}\hat{\sigma}_j\sqrt{\log(m/\beta)}\leq C\sqrt{\log(m/\beta)}\\
&\leq&\exp\big(-\frac{n_v\epsilon^2}{2(\underline{\alpha}(1-\underline{\alpha})+\epsilon/3)}\big)
\end{eqnarray*}
where in the last line we use Bennett's inequality (e.g., equation (2.10) in \citeAPX{boucheron2013concentration}). Note that this is further bounded by $\exp\big(-C_2n_v\min\{\epsilon,\frac{\epsilon^2}{\underline{\alpha}(1-\underline{\alpha})}\}\big)$ with another universal constant $C_2$.

The third probability can be bounded as
\begin{eqnarray*}
\mathbb P_{\mathcal D_v}(E_3^c)&\leq&\mathbb P_{\mathcal D_v}\big(\hat{\CR}(\PI_j)\geq \tilde{\alpha}\; \text{for some $j$ such that}\; \CR(\PI_j)\leq \tilde{\alpha}/2\big)\\
&\leq&\sum_{j:\CR(\PI_j)\leq \tilde{\alpha}/2}\mathbb P_{\mathcal D_v}(\hat{\CR}(\PI_j)\geq \tilde{\alpha})\\
&\leq&\sum_{j:\CR(\PI_j)\leq \tilde{\alpha}/2}\exp\big(-\frac{n_v(\tilde{\alpha}/2)^2}{2(\CR(\PI_j)(1-\CR(\PI_j))+\tilde{\alpha}/6)}\big)\;\text{by Bennett's inequality}\\
&\leq& m\exp\big(-\frac{n_v(\tilde{\alpha}/2)^2}{\tilde{\alpha}(1-\tilde{\alpha}/2)+\tilde{\alpha}/3}\big)\leq m\exp(-C_3n_v\tilde{\alpha})
\end{eqnarray*}
where $C_3$ is a universal constant. Substituting the bounds into \eqref{three probability bounds} leads to the overall probability bound
\begin{equation*}
1-\beta-C\Big(\frac{\log^7(mn_v)}{n_v\tilde{\alpha}}\Big)^{\frac{1}{6}}-\exp\big(-C_2n_v\min\{\epsilon,\frac{\epsilon^2}{\underline{\alpha}(1-\underline{\alpha})}\}\big)-m\exp(-C_3n_v\tilde{\alpha}).
\end{equation*}

It remains to show that $m\exp(-C_3n_v\tilde{\alpha})$ is negligible relative to other error terms. Since $\tilde{\alpha}<1$ it is clear that $\big(\frac{1}{n_v}\big)^{1/6}\leq \big(\frac{\log^7(mn_v)}{n_v\tilde{\alpha}}\big)^{1/6}$, and we argue that $\big(\frac{1}{n_v}\big)^{1/6}\geq m\exp(-C_3n_v\tilde{\alpha})$ can be assumed so that $m\exp(-C_3n_v\tilde{\alpha})\leq \big(\frac{\log^7(mn_v)}{n_v\tilde{\alpha}}\big)^{1/6}$. If $\big(\frac{1}{n_v}\big)^{1/6}< m\exp(-C_3n_v\tilde{\alpha})$, then $m>\exp(C_3n_v\tilde{\alpha})n_v^{-1/6}$, hence $\frac{\log^7(mn_v)}{n_v\tilde{\alpha}}\geq \frac{(C_3n_v\tilde{\alpha})^7}{n_v\tilde{\alpha}}\geq C_3^7(n_v\tilde{\alpha})^6$, which ultimately leads to $n_v\tilde{\alpha}\leq \frac{\log(mn_v)}{C_3}$ and $\frac{\log^7(mn_v)}{n_v\tilde{\alpha}}\geq C_3\log^6(mn_v)$. Note that in this case the first error term already exceeds $1$ (by enlarging the universal constant $C$ if necessary) and the error bound holds true trivially.
\end{proof}

\begin{proof}[Proof of Theorem \ref{feasibility:normalized validator_simple}]
The proof follows the one for Theorem \ref{feasibility:unnormalized validator}, and we focus on the modifications. The events are now defined as
\begin{align*}
E_1&=\big\{\hat{\CR}(\PI_j)\geq 1-\alpha_{\min} + \frac{ q_{1-\beta}\hat\sigma_j}{\sqrt{n_v}}\text{ for some }j=1,\ldots,m\big\}\\
E_2&=\big\{\CR(\PI_j)\geq\hat{\CR}(\PI_j)-  \frac{ q_{1-\beta}\hat\sigma_j}{\sqrt{n_v}}\text{ for all $j$ such that }\CR(\PI_j)\in(\tilde{\alpha}/2,1-\alpha_{\min})\big\}\\
E_3&=\big\{\hat{\CR}(\PI_j)<\tilde{\alpha} + \frac{ q_{1-\beta}\hat\sigma_j}{\sqrt{n_v}}\text{ for all $j$ such that }\CR(\PI_j)\leq \tilde{\alpha}/2\big\}.
\end{align*}
Again we have $\mathbb P_{\mathcal D_v}(\CR(\PI_{j^*_{1-\alpha_k}})\geq 1-\alpha_k\; \text{for all}\; k=1,\ldots,K)\geq \mathbb P_{\mathcal D_v}(E_2)-\mathbb P_{\mathcal D_v}(E_1^c)-\mathbb P_{\mathcal D_v}(E_3^c)$.

The first probability bound becomes
\begin{eqnarray*}
\mathbb P_{\mathcal D_v}(E_2)\geq 1-\beta-C\Big(\Big(\frac{\log^7(mn_v)}{n_v\tilde{\alpha}}\Big)^{\frac{1}{6}}+\frac{\log^2 (mn_v)}{\sqrt{n_v\tilde{\alpha}}}+m\exp\big(-c(n_v\tilde{\alpha})^{2/3}\big)\Big)
\end{eqnarray*}
by applying Lemma \ref{clt:normalized} and noting that Assumption \ref{condition2:clt} holds with $\eta=\tilde{\alpha}/2\cdot (1-\tilde{\alpha}/2)\geq \frac{1}{4}\tilde{\alpha}$ and $D=\frac{C}{\sqrt{\tilde{\alpha}}}$ in Assumption \ref{condition1:clt}. Here $C,c$ are universal constants.

For the second probability we have
\begin{eqnarray}
\notag\mathbb P_{\mathcal D_v}(E_1^c)&\leq&\mathbb P_{\mathcal D_v}(\hat{\CR}(\PI_{\bar j})< 1-\alpha_{\min} + \frac{ q_{1-\beta}\hat\sigma_{\bar{j}}}{\sqrt{n_v}})\text{\ \ where $\bar{j}$ is the index such that }\CR(\PI_{\bar j})=1-\underline{\alpha}\\
\notag&\leq&\mathbb P_{\mathcal D_v}(\hat{\CR}(\PI_{\bar j})< 1-\alpha_{\min} + \frac{ q_{1-\beta}t}{\sqrt{n_v}})+\mathbb P_{\mathcal D_v}(\hat\sigma_{\bar{j}}>t)\\
\notag&&\text{\ \  where }t=\sqrt{\underline{\alpha}(1-\underline{\alpha})}+\sqrt{2\log(n_v\alpha_{\min})/n_v}\\
\notag&\leq &\mathbb P_{\mathcal D_v}(\hat{\CR}(\PI_{\bar j})< 1-\alpha_{\min} + \frac{ q_{1-\beta}t}{\sqrt{n_v}})+\frac{1}{n_v\alpha_{\min}}\\
\notag&&\text{\ where the bound $1/(n_v\alpha_{\min})$ follows from Theorem 10 in \citeAPX{maurer2009empirical}}\\
\notag&\leq &\mathbb P_{\mathcal D_v}(\hat{\CR}(\PI_{\bar j})< 1-\alpha_{\min} + \frac{C\sqrt{(\underline{\alpha}(1-\underline{\alpha})+\log(n_v\alpha_{\min})/n_v)\log(m/\beta)}}{\sqrt{n_v}})+\frac{1}{n_v\alpha_{\min}}\\
&&\text{\ \ because }q_{1-\beta}\leq C\sqrt{\log(m/\beta)}\label{normal quantile bound}\\
\notag&\leq&\exp\big(-\frac{n_v\epsilon^2}{2(\underline{\alpha}(1-\underline{\alpha})+\epsilon/3)}\big)+\frac{1}{n_v\alpha_{\min}}\text{\ \ by Bennett's inequality}\\
\notag&\leq&\exp\big(-C_2n_v\min\{\epsilon,\frac{\epsilon^2}{\underline{\alpha}(1-\underline{\alpha})}\}\big)+\frac{1}{n_v\alpha_{\min}}.
\end{eqnarray}

As for the third probability, by repeating the same analysis in Theorem \ref{feasibility:unnormalized validator} we see that the bound $\mathbb P_{\mathcal D_v}(E_3^c)\leq m\exp(-C_3n_v\tilde{\alpha})$ remains valid.

Finally, using a similar argument in the proof of Theorem \ref{feasibility:unnormalized validator}, we can show that $\frac{1}{n_v\alpha_{\min}}$, $m\exp(-C_3n_v\tilde{\alpha})$, and $m\exp(-c(n_v\tilde{\alpha})^{2/3})$ are all dominated by $\big(\frac{\log^7(mn_v)}{n_v\tilde{\alpha}}\big)^{1/6}$ when $\big(\frac{\log^7(mn_v)}{n_v\tilde{\alpha}}\big)^{1/6}<1$. Moreover, $\frac{\log^2 (mn_v)}{\sqrt{n_v\tilde{\alpha}}}$ can also be neglected, because $\frac{\log^2 (mn_v)}{\sqrt{n_v\tilde{\alpha}}}= \big(\frac{\log^{5/2}(mn_v)}{n_v\tilde{\alpha}}\big)^{\frac{1}{3}} \big(\frac{\log^7(mn_v)}{n_v\tilde{\alpha}}\big)^{\frac{1}{6}}\leq \big(\frac{\log^7(mn_v)}{n_v\tilde{\alpha}}\big)^{\frac{1}{6}}$ when $\big(\frac{\log^7(mn_v)}{n_v\tilde{\alpha}}\big)^{\frac{1}{6}}<1$. Therefore the desired conclusion follows from combining the three probability bounds.
\end{proof}

\begin{proof}[Proof of Theorem \ref{optimality guarantee: normalized}]
The proof consists of deriving concentration bounds for the empirical width and coverage rate. We first deal with the empirical width. Using standard concentration bounds for sub-Gaussian variables, we write for every interval $\mathrm{PI}_j=[L_j,U_j]$ and every $\epsilon>0$
\begin{eqnarray*}
&&\mathbb P_{\mathcal D_v}\big(\lvert \frac{1}{n_v}\sum_{i=1}^{n_v}(U_j(X_i') - L_j(X_i')) - \mathbb E_{\pi_X}[U_j(X) - L_j(X)]\rvert >\epsilon\Vert H\Vert_{\psi_2}\big)\\
&\leq& \mathbb P_{\mathcal D_v}\big(\lvert \frac{1}{n_v}\sum_{i=1}^{n_v}U_j(X_i') - \mathbb E_{\pi_X}[U_j(X)]\rvert >\frac{\epsilon\Vert H\Vert_{\psi_2}}{2}\big) + \mathbb P_{\mathcal D_v}\big(\lvert \frac{1}{n_v}\sum_{i=1}^{n_v}L_j(X_i') - \mathbb E_{\pi_X}[L_j(X)]\rvert >\frac{\epsilon\Vert H\Vert_{\psi_2}}{2}\big)\\
&&\text{ \ \ by the union bound}\\
&\leq&2\exp\Big(-\frac{\epsilon^2\Vert H\Vert_{\psi_2}^2n_v}{4C^2\Vert U_j - \mathbb E_{\pi_X}[U_j] \Vert_{\psi_2}^2}\Big) + 2\exp\Big(-\frac{\epsilon^2\Vert H\Vert_{\psi_2}^2n_v}{4C^2\Vert L_j - \mathbb E_{\pi_X}[L_j] \Vert_{\psi_2}^2}\Big)\\
&&\text{ \ \ for some universal constant }C\\
&\leq&4\exp\Big(-\frac{\epsilon^2\Vert H\Vert_{\psi_2}^2n_v}{4C^2\Vert H \Vert_{\psi_2}^2}\Big) \text{ \ \ since }\lvert L_j - \mathbb E_{\pi_X}[L_j]\rvert, \lvert U_j - \mathbb E_{\pi_X}[U_j]\rvert\leq H\\
&=&4\exp\Big(-\frac{\epsilon^2n_v}{4C^2}\Big).
\end{eqnarray*}
Applying the union bound to all the $m$ candidate PIs, we have
\begin{equation*}
    \mathbb P_{\mathcal D_v}\big(\lvert \frac{1}{n_v}\sum_{i=1}^{n_v}(U_j(X_i') - L_j(X_i')) - \mathbb E_{\pi_X}[U_j(X) - L_j(X)]\rvert >\epsilon\Vert H\Vert_{\psi_2} \text{ for some }j=1,\ldots,m\big)\leq 4m\exp\Big(-\frac{\epsilon^2n_v}{4C^2}\Big)
\end{equation*}
or equivalently
\begin{equation}\label{width concentration}
    \mathbb P_{\mathcal D_v}\big(\lvert \frac{1}{n_v}\sum_{i=1}^{n_v}(U_j(X_i') - L_j(X_i')) - \mathbb E_{\pi_X}[U_j(X) - L_j(X)]\rvert >C\epsilon\Vert H\Vert_{\psi_2} \text{ for some }j=1,\ldots,m\big)\leq 4m\exp\Big(-\frac{\epsilon^2n_v}{4}\Big).
\end{equation}
Next we handle the empirical coverage rate
\begin{eqnarray*}
&&\mathbb P_{\mathcal D_v}\big(\hat{\mathrm{CR}}(\mathrm{PI}_j) - \mathrm{CR}(\mathrm{PI}_j) < -\epsilon + \frac{q_{1-\beta}\hat\sigma_j}{\sqrt{n_v}} \big)\\
&\leq &\mathbb P_{\mathcal D_v}\big(\hat{\mathrm{CR}}(\mathrm{PI}_j) - \mathrm{CR}(\mathrm{PI}_j) < -\epsilon + \frac{C\sqrt{\log(m/\beta)}}{\sqrt{n_v}} \big)\text{ \ \ by \eqref{normal quantile bound} and the fact that $\hat\sigma_j\leq \frac{1}{2}$}\\
&&\text{\ \ where $C$ is another universal constant}\\
&\leq& \exp \Big( -2\max\big\{\epsilon - C\sqrt{\frac{\log (m/\beta)}{n_v}}, 0\big\}^2n_v\Big)\text{ \ \ by Hoeffding's inequality.}
\end{eqnarray*}
Again applying the union bound we get for every $\epsilon>0$
\begin{equation}\label{coverage concentration}
    \mathbb P_{\mathcal D_v}\big(\hat{\mathrm{CR}}(\mathrm{PI}_j) - \mathrm{CR}(\mathrm{PI}_j) < -\epsilon + \frac{q_{1-\beta}\hat\sigma_j}{\sqrt{n_v}} \text{ for some }j=1,\ldots,m \big)\leq m\exp \Big( -2\max\big\{\epsilon - C\sqrt{\frac{\log (m/\beta)}{n_v}}, 0\big\}^2n_v\Big).
\end{equation}
Note that by choosing both universal constants in \eqref{width concentration} and \eqref{coverage concentration} large enough, we can use the same universal constant $C$ in both. To relate to the width performance, we observe that when $\hat{\mathrm{CR}}(\mathrm{PI}_j) - \mathrm{CR}(\mathrm{PI}_j) \geq -\epsilon + \frac{q_{1-\beta}\hat\sigma_j}{\sqrt{n_v}}$ for all $j=1,\ldots,m$, we have that for all $\mathrm{PI}_j$ whose true coverage rate $\mathrm{CR}(\mathrm{PI}_j)\geq 1-\alpha_k+\epsilon$ the inequality $\hat{\mathrm{CR}}(\mathrm{PI}_j) \geq \mathrm{CR}(\mathrm{PI}_j) -\epsilon + \frac{q_{1-\beta}\hat\sigma_j}{\sqrt{n_v}} \geq 1-\alpha_k+ \frac{q_{1-\beta}\hat\sigma_j}{\sqrt{n_v}}$ holds (i.e., the constraint in Step 3 of Algorithm \ref{calibration:normalized} is satisfied), therefore by the optimality of each $\mathrm{PI}_{j^*_{1-\alpha_k}}$ it must hold that
\begin{equation*}
    \frac{1}{n_v}\sum_{i=1}^{n_v}\lvert\mathrm{PI}_{j^*_{1-\alpha_k}}(X_i') \rvert \leq \min_{j:\mathrm{CR(\mathrm{PI}_j)\geq 1-\alpha_k+\epsilon}}\frac{1}{n_v}\sum_{i=1}^{n_v}\lvert\mathrm{PI}_j(X_i')\rvert\text{ \ \ for each $k=1,\ldots, K$}.
\end{equation*}
If we further have $\lvert \frac{1}{n_v}\sum_{i=1}^{n_v}(U_j(X_i') - L_j(X_i')) - \mathbb E_{\pi_X}[U_j(X) - L_j(X)]\rvert \leq C\epsilon\Vert H\Vert_{\psi_2}$ for all $j=1,\ldots,m$, then
\begin{eqnarray*}
\mathbb E_{\pi_X}[U_{j^*_{1-\alpha_k}}(X) - L_{j^*_{1-\alpha_k}}(X)] &\leq &\frac{1}{n_v}\sum_{i=1}^{n_v}\lvert\mathrm{PI}_{j^*_{1-\alpha_k}}(X_i')\rvert + C\epsilon\Vert H\Vert_{\psi_2}\\
&\leq& \min_{j:\mathrm{CR(\mathrm{PI}_j)\geq 1-\alpha_k+\epsilon}}\mathbb E_{\pi_X}[U_{j}(X) - L_{j}(X)] + 2C\epsilon\Vert H\Vert_{\psi_2}
\end{eqnarray*}
for every $k=1,\ldots,K$. Altogether we can conclude that
\begin{eqnarray*}
&&\mathbb P_{\mathcal D_v}\Big(\mathbb E_{\pi_X}[U_{j^*_{1-\alpha_k}}(X) - L_{j^*_{1-\alpha_k}}(X)]\leq \min_{j:\mathrm{CR(\mathrm{PI}_j)\geq 1-\alpha_k+\epsilon}}\mathbb E_{\pi_X}[U_{j}(X) - L_{j}(X)]+2C\epsilon\Vert H\Vert_{\psi_2}\text{ for all }k=1,\ldots,K\Big)\\
&\geq& \mathbb P_{\mathcal D_v}\Big(\lvert \frac{1}{n_v}\sum_{i=1}^{n_v}(U_j(X_i') - L_j(X_i')) - \mathbb E_{\pi_X}[U_j(X) - L_j(X)]\rvert \leq C\epsilon\Vert H\Vert_{\psi_2} \text{ for all } j=1,\ldots,m, \text{ and}\\
&&\hspace{2em} \hat{\mathrm{CR}}(\mathrm{PI}_j) - \mathrm{CR}(\mathrm{PI}_j) \geq -\epsilon + \frac{q_{1-\beta}\hat\sigma_j}{\sqrt{n_v}} \text{ for all } j=1,\ldots,m\Big)\\
&\geq& 1- 4m\exp\Big(-\frac{\epsilon^2n_v}{4}\Big) - m\exp \Big( -2\max\big\{\epsilon - C\sqrt{\frac{\log (m/\beta)}{n_v}}, 0\big\}^2n_v\Big)\text{ \ \ by \eqref{width concentration} and \eqref{coverage concentration}}\\
&\geq&1-8m\exp \Big( -\frac{1}{4}\max\big\{\epsilon - C\sqrt{\frac{\log (m/\beta)}{n_v}}, 0\big\}^2n_v\Big)
\end{eqnarray*}
where the last inequality holds because $\epsilon\geq \max\big\{\epsilon - C\sqrt{\frac{\log (m/\beta)}{n_v}}, 0\big\}$.
\end{proof}

\subsection{Proofs for Appendix \ref{sec:consistency}}
We provide proofs for Theorems \ref{basic consistency} and \ref{consistency:strong GC H}:
\begin{proof}[Proof of Theorem \ref{basic consistency}]
We first construct the sequence $t_n$ as follows. For each integer $k>0$, weak $\pi$-GC implies that $\mathbb{P}\big(\sup_{L,U\in\mathcal{H},L\leq U}\lvert \mathbb{P}_{\hat\pi}(Y\in [L(X),U(X)]) - \mathbb{P}_{\pi}(Y\in [L(X),U(X)] \rvert > \frac{1}{k}\big)\to 0$ as the data size $n\to\infty$. Therefore, there exists a large enough $n_k$ such that $\mathbb{P}\big(\sup_{L,U\in\mathcal{H},L\leq U}\lvert \mathbb{P}_{\hat\pi}(Y\in [L(X),U(X)]) - \mathbb{P}_{\pi}(Y\in [L(X),U(X)] \rvert > \frac{1}{k}\big)<\frac{1}{k}$ whenever $n\geq n_k$. Moreover, the $n_k$ can be chosen such that $n_k < n_{k+1}$ for each $k$. We then let $t_n = \min\{\frac{1}{k}:n\geq n_k\}$. Clearly $t_n\to 0$ as $n\to\infty$, and, by construction, we have $\mathbb{P}\big(\sup_{L,U\in\mathcal{H},L\leq U}\lvert \mathbb{P}_{\hat\pi}(Y\in [L(X),U(X)]) - \mathbb{P}_{\pi}(Y\in [L(X),U(X)] \rvert > t_n\big)<t_n$.

Then we show joint optimality and feasibility for the chosen $t_n$. Denote by $\hat{\mathcal{H}}^2_t$ the feasible set of the optimization \eqref{OP2}, and by $\mathcal{H}^2_t$ the feasible set of \eqref{OP3}. In particular $\mathcal{H}^2_0$ is the feasible set of \eqref{OP1}. By the construction of $t_n$, we have $\mathbb{P}(\mathcal{H}^2_{2t_n}\subset\hat{\mathcal{H}}^2_{t_n}\subset \mathcal{H}^2_0)> 1-t_n$. Therefore $\mathbb{P}((\hat{L}^*_{t_n},\hat{U}^*_{t_n})\in \mathcal{H}^2_0)\geq \mathbb{P}(\hat{\mathcal{H}}^2_{t_n}\in \mathcal{H}^2_0)\geq 1-t_n\to 0$, concluding the asymptotic feasibility of $(\hat{L}^*_{t_n},\hat{U}^*_{t_n})$. To show optimality, when the events
\begin{equation*}
    W_{\epsilon}:=\big\{\sup_{h\in \mathcal{H}}\lvert \mathbb{E}_{\hat\pi_X}[h(X)]-\mathbb{E}_{\pi_X}[h(X)] \rvert\leq \epsilon\big\}
\end{equation*}
and
\begin{equation*}
    C_{t_n}:=\big\{\sup_{L,U\in \mathcal{H}\;\text{and}\;L\leq U}\lvert \mathbb{P}_{\hat\pi_X}(Y\in [L(X),U(X)])-\mathbb{P}_{\pi_X}(Y\in [L(X),U(X)]) \rvert\leq t_n\big\}
\end{equation*}
occur, it holds that $\mathcal{H}^2_{2t_n}\subset \hat{\mathcal{H}}^2_{t_n}\subset \mathcal{H}^2_0$, and $(\hat L_{t_n}^*,\hat U_{t_n}^*)\in\hat{\mathcal{H}}^2_{t_n}$ is feasible for \eqref{OP1}. We also have
\begin{eqnarray}
&&\notag\mathbb{E}_{\pi_X}[\hat U_{t_n}^*(X)-\hat L_{t_n}^*(X)]\\
\notag &\leq& \mathbb{E}_{\hat\pi_X}[\hat U_{t_n}^*(X)-\hat L_{t_n}^*(X)] + 2\epsilon\text{\ \ because of $W_{\epsilon}$}\\
\notag&\leq& \inf_{(L,U)\in\mathcal{H}^2_{2{t_n}}\subset \hat{\mathcal{H}}^2_{t_n}}\mathbb{E}_{\hat\pi_X}[U(X)-L(X)]+2\epsilon\text{\ \ by optimality of $(\hat L_{t_n}^*,\hat U_{t_n}^*)$ in $\hat{\mathcal{H}}^2_{t_n}$}\\
\notag&\leq& \inf_{(L,U)\in\mathcal{H}^2_{2{t_n}}}\mathbb{E}_{\pi_X}[U(X)-L(X)]+4\epsilon\text{\ \ because of $W_{\epsilon}$}\\
\notag&=&\mathcal{R}_{2{t_n}}^*(\mathcal{H})+4\epsilon\\
&\leq &\mathcal{R}^*(\mathcal{H})+\frac{12t_n}{(\alpha-2t_n)\gamma_{\frac{\alpha-2t_n}{3}}}+4\epsilon\text{\ \ by Theorem \ref{sensitivity bound}}. \label{optimality bound:consistency}
\end{eqnarray}
Note that $\mathbb{P}(W_{\epsilon}\cap C_{t_n})\geq 1-\mathbb{P}(W_{\epsilon}^c)-\mathbb{P}(C_{t_n}^c)\geq 1-\mathbb{P}(W_{\epsilon}^c) - t_n$. 
Note that $\mathcal{H}$ being $\pi$-GC implies that $\mathbb{P}(W_{\epsilon}^c)\to 0$ as $n\to \infty$ for any $\epsilon>0$, and that the term involving $t_n$ in \eqref{optimality bound:consistency} goes to zero as $t_n\to 0$.
On the other hand, $\mathbb{E}_{\pi_X}[\hat U_{t_n}^*(X)-\hat L_{t_n}^*(X)]\geq \mathcal{R}^*(\mathcal{H})$ when $(\hat L_{t_n}^*,\hat U_{t_n}^*)\in \mathcal{H}^2_0$  by the definition of $\mathcal{R}^*(\mathcal{H})$. Since $\mathbb{P}(\mathcal{H}_{2t_n}^2\subset\hat{\mathcal{H}}_{t_n}^2\text{ and }(\hat L_{t_n}^*,\hat U_{t_n}^*)\in \mathcal{H}_0^2)\to 1$, the derived lower and upper bounds for $\mathbb{E}_{\pi_X}[\hat U_{t_n}^*(X)-\hat L_{t_n}^*(X)]$ entails that $\mathbb{E}_{\pi_X}[\hat U_{t_n}^*(X)-\hat L_{t_n}^*(X)]\to \mathcal{R}^*(\mathcal{H})$ in probability, concluding optimality.

\end{proof}

\begin{proof}[Proof of Theorem \ref{consistency:strong GC H}]
It suffices to show that the induced set class is strong $\pi$-GC. Consistency then follows from Theorem \ref{basic consistency} and the fact that strong GC implies weak GC. We will need the following preservation results for GC classes:
\begin{lemma}[Adapted from Theorem 9.26 in \citeAPX{kosorok2007introduction}]\label{GC preservation}
Suppose that $\mathcal{G}_1,\ldots,\mathcal{G}_K$ are strong $\pi$-GC classes of functions with $\max_{1\leq k\leq K}\sup_{g\in \mathcal{G}_k}\lvert\mathbb{E}_{\pi}[g(X,Y)]\rvert<\infty$, and that $\psi:\R^K \to \R$ is continuous. Then the class $\psi(\mathcal{G}_1,\ldots,\mathcal{G}_K):=\{\psi(g_1(\cdot), \ldots, g_K(\cdot)):g_k\in\mathcal{G}_k\text{ for }k=1,\ldots,K\}$ is strong $\pi$-GC provided that $\mathbb{E}_{\pi}[\sup_{g\in \psi(\mathcal{G}_1,\ldots,\mathcal{G}_K)}\lvert g(X,Y) \rvert]<\infty$.
\end{lemma}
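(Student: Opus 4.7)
My approach has two stages. First, I reduce the preservation claim for the ``between-graphs'' class $\{\{(x,y):L(x)\le y\le U(x)\}:L,U\in\mathcal{H},L\le U\}$ to the analogous claim for the two half-space indicator classes $\mathcal{F}_L:=\{(x,y)\mapsto I_{y\ge L(x)}:L\in\mathcal{H}\}$ and $\mathcal{F}_U:=\{(x,y)\mapsto I_{y\le U(x)}:U\in\mathcal{H}\}$. This is because the indicator of the between-graph factors as $I_{y\ge L(x)}\,I_{y\le U(x)}$; once $\mathcal{F}_L$ and $\mathcal{F}_U$ are shown strong $\pi$-GC, Lemma~\ref{GC preservation} applied with the continuous map $\psi(a,b)=ab$ (and envelope bounded by $1$, trivially integrable) yields strong $\pi$-GC of the product class, and restricting to the subclass with $L\le U$ preserves GC. By symmetry it suffices to handle $\mathcal{F}_L$.

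The central obstacle is that $t\mapsto I_{t\le 0}$ is not continuous, so Lemma~\ref{GC preservation} does not apply directly to $(x,y)\mapsto I_{L(x)-y\le 0}$. I will resolve this by a smoothing-bracketing device that exploits the bounded conditional density $D:=\sup_{x,y}p(y\vert x)<\infty$. For each $\delta>0$, pick piecewise-linear continuous bracket functions $\psi_\delta,\tilde\psi_\delta:\R\to[0,1]$ with $\psi_\delta(t)\le I_{t\le 0}\le\tilde\psi_\delta(t)$, $\tilde\psi_\delta-\psi_\delta$ supported in $[-\delta,\delta]$. Then uniformly in $L\in\mathcal H$,
\[
\mathbb{E}_\pi\bigl[\tilde\psi_\delta(L(X)-Y)-\psi_\delta(L(X)-Y)\bigr]\le \mathbb{P}_\pi\bigl(L(X)-\delta\le Y\le L(X)+\delta\bigr)\le 2D\delta,
\]
where the last inequality uses conditioning on $X$ and the bounded density.

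Next I show that the smoothed classes $\{\psi_\delta(L(X)-Y):L\in\mathcal{H}\}$ and $\{\tilde\psi_\delta(L(X)-Y):L\in\mathcal{H}\}$ are each strong $\pi$-GC by invoking Lemma~\ref{GC preservation} with three ingredients: $\mathcal{G}_1:=\{(x,y)\mapsto L(x):L\in\mathcal{H}\}$ is strong $\pi$-GC (empirical and population means under $\pi$ coincide with those under $\pi_X$, so this follows from the strong $\pi_X$-GC assumption on $\mathcal{H}$) and satisfies $\sup_L\lvert\mathbb{E}_\pi[L(X)]\rvert<\infty$ (any strong GC class is $L_1$-bounded); $\mathcal{G}_2:=\{(x,y)\mapsto y\}$ is a singleton class that is strong $\pi$-GC by the SLLN under $\mathbb{E}_\pi[\lvert Y\rvert]<\infty$; and the composition $(a,b)\mapsto \psi_\delta(a-b)$ (resp. $\tilde\psi_\delta(a-b)$) is continuous and produces a class with envelope $\le 1$, so the envelope integrability required by Lemma~\ref{GC preservation} is immediate.

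Finally I combine the pieces. The sandwich $\psi_\delta(L(X)-Y)\le I_{Y\ge L(X)}\le\tilde\psi_\delta(L(X)-Y)$ together with adding and subtracting $\mathbb{E}_\pi$ of the smoothings gives
\[
\sup_{L\in\mathcal{H}}\bigl\lvert\mathbb{P}_{\hat\pi}(Y\ge L(X))-\mathbb{P}_\pi(Y\ge L(X))\bigr\rvert
\le \sup_L\lvert(\mathbb{P}_{\hat\pi}-\mathbb{P}_\pi)\tilde\psi_\delta(L(X)-Y)\rvert+\sup_L\lvert(\mathbb{P}_{\hat\pi}-\mathbb{P}_\pi)\psi_\delta(L(X)-Y)\rvert+2D\delta.
\]
The first two terms tend to $0$ a.s.\ by the GC property of the smoothed classes, so $\limsup_n$ of the left-hand side is $\le 2D\delta$ a.s.; sending $\delta\downarrow 0$ along a countable sequence yields strong $\pi$-GC of $\mathcal{F}_L$. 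This completes the preservation step, and the consistency conclusion then follows by feeding the strong $\pi$-GC of the induced set class into Theorem~\ref{basic consistency} (strong GC implies weak GC). The main technical obstacle is exactly the non-continuity of the indicator map, which is precisely what the smoothing bracket plus the bounded-density control is designed to overcome.
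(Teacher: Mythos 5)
There is a genuine gap, and it is a mismatch of target: the statement you were asked to prove is Lemma~\ref{GC preservation} itself, the general preservation result that a continuous transformation $\psi(\mathcal{G}_1,\ldots,\mathcal{G}_K)$ of strong $\pi$-GC classes (with uniformly bounded means and an integrable envelope for the transformed class) is again strong $\pi$-GC. Your write-up never proves this; instead it \emph{invokes} Lemma~\ref{GC preservation} twice as a black box (once with $\psi(a,b)=ab$, once with $(a,b)\mapsto\psi_\delta(a-b)$) in order to establish the downstream result, namely Theorem~\ref{consistency:strong GC H} on strong $\pi$-GC of the between-graph indicator class. That downstream argument closely parallels the paper's own proof of Theorem~\ref{consistency:strong GC H} (continuous sandwich functions $\psi_\delta\le I_{\cdot\le 0}\le\tilde\psi_\delta$, the $2D\delta$ bound via the bounded conditional density, and the triangle-inequality assembly), but it is circular as a proof of the lemma: you assume exactly what was to be shown. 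A blind proof of the lemma would need its own argument, e.g.\ the standard Kosorok/van der Vaart--Wellner route: truncate via the integrable envelopes so that all coordinates live in a compact set up to an $L_1$-small remainder, use uniform continuity of $\psi$ on that compact set to turn finite $L_1$-brackets (or finite subclasses furnished by the GC property of each $\mathcal{G}_k$) into finite brackets for the composed class, and conclude almost-sure uniform convergence; none of this machinery appears in your proposal.

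A secondary, substantive error worth flagging even for the theorem you did prove: you assert that $\sup_{L\in\mathcal{H}}\lvert\mathbb{E}_\pi[L(X)]\rvert<\infty$ ``because any strong GC class is $L_1$-bounded.'' This is false as stated: the GC property is invariant under adding an arbitrary constant to each function (and Assumption~\ref{class: non-negative translation} forces $\mathcal{H}$ to contain all such translates), so the means can be unbounded; what is true is only that the \emph{centered} class $\{h-\mathbb{E}_{\pi_X}[h(X)]:h\in\mathcal{H}\}$ has an integrable envelope. This is precisely why the paper's proof of Theorem~\ref{consistency:strong GC H} first restricts to $\lvert\mathbb{E}_{\pi_X}[U(X)]\rvert\le M$ and controls the two tail regimes $\mathbb{E}_{\pi_X}[U(X)]>M$ and $<-M$ separately before applying the preservation lemma; your argument skips that step, so the hypotheses of Lemma~\ref{GC preservation} (bounded means) would not be met even in the use you make of it.
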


We first use Lemma \ref{GC preservation} to simplify the problem. Denote by $\mathcal{G}:=\{(x,y)\to I_{L(x)\leq y\leq U(x)}:L,U\in\mathcal{H},L\leq U\}$ the target indicator class for which we want to show strong $\pi$-GC. Then $\mathcal{G}\subset\psi(\mathcal{G}_l,\mathcal{G}_u)$, where $\psi(z_1,z_2):=z_1z_2$, and
\begin{align*}
    \mathcal{G}_l &:= \{(x,y)\to I_{L(x)-y\leq 0}:L\in\mathcal{H}\}\\
    \mathcal{G}_u &:= \{(x,y)\to I_{y-U(x)\leq 0}:U\in\mathcal{H}\}.
\end{align*}
Note that functions in the class $\psi(\mathcal{G}_l,\mathcal{G}_u)$ are all bounded by $1$ and $\psi(\cdot,\cdot)$ is obviously continuous, therefore by Lemma \ref{GC preservation}, $\psi(\mathcal{G}_l,\mathcal{G}_u)$ (hence $\mathcal{G}$) is strong $\pi$-GC if both $\mathcal{G}_l$ and $\mathcal{G}_u$ are strong $\pi$-GC. So it suffices to show strong $\pi$-GC for $\mathcal{G}_l$ and $\mathcal{G}_u$. In the following analysis, we only show $\pi$-GC for $\mathcal{G}_u$, because the $\mathcal{G}_l$ case can be shown by the same argument.

In order to prove strong $\pi$-GC for $\mathcal{G}_u$, we first demonstrate that, without loss of generality, we can assume that the class $\mathcal{H}$ has a upper bound for $\lvert \mathbb{E}_{\pi_X}[h(X)]\rvert$, say $\lvert\mathbb{E}_{\pi_X}[h(X)]\rvert\leq M$ (so that Lemma \ref{GC preservation} can be used to propagate the GC property from $\mathcal{H}$ to $\mathcal{G}_u$). To proceed, we write for any constant $M>0$
\begin{eqnarray}
\notag&&\sup_{U \in\mathcal{H}}\lvert \mathbb{P}_{\hat\pi}(Y-U(X)\leq 0)-\mathbb{P}_{\pi}(Y-U(X)\leq 0)\rvert\\
&\leq&\sup_{U \in\mathcal{H},\lvert \mathbb{E}_{\pi_X}[U(X)]\rvert \leq M}\lvert \mathbb{P}_{\hat\pi}(Y-U(X)\leq 0)-\mathbb{P}_{\pi}(Y-U(X)\leq 0)\rvert+\label{tail term0}\\
&&\sup_{U \in\mathcal{H},\mathbb{E}_{\pi_X}[U(X)]> M}\lvert \mathbb{P}_{\hat\pi}(Y-U(X)\leq 0)-\mathbb{P}_{\pi}(Y-U(X)\leq 0)\rvert+\label{tail term}\\
&&\sup_{U \in\mathcal{H},\mathbb{E}_{\pi_X}[U(X)]<-M}\lvert \mathbb{P}_{\hat\pi}(Y-U(X)\leq 0)-\mathbb{P}_{\pi}(Y-U(X)\leq 0)\rvert.\label{tail term2}
\end{eqnarray}
We show how we control the second and third suprema in \eqref{tail term} and \eqref{tail term2}. Since the class $\mathcal{H}$ is strong $\pi_X$-GC, the centered function class $\{h(\cdot)-\mathbb{E}_{\pi_X}[h(X)]:h\in\mathcal{H}\}$ must have an integrable envelope (see, e.g., Lemma 8.13 in \citeAPX{kosorok2007introduction}), i.e., $F(x):=\sup_{h(x) \in\mathcal{H}}\lvert h-\mathbb{E}_{\pi_X}[h(X)] \rvert$ satisfies $\mathbb{E}_{\pi_X}[F(X)]<\infty$. Therefore when $\mathbb{E}_{\pi_X}[U(X)]> M$ we can bound $Y-U(X)$ almost surely as
\begin{equation*}
    Y-U(X)< Y- U(X)+\mathbb{E}_{\pi_X}[U(X)]-M\leq Y+F(X)-M.
\end{equation*}
Similarly, when $\mathbb{E}_{\pi_X}[U(X)]<-M$ we have
\begin{equation*}
    Y-U(X)> Y- U(X)+\mathbb{E}_{\pi_X}[U(X)]+M\geq Y-F(X)+M.
\end{equation*}
With these bounds for $Y-U(X)$, the supremum from \eqref{tail term} can be further bounded as
\begin{eqnarray}
\notag&&\sup_{U \in\mathcal{H},\mathbb{E}_{\pi_X}[U(X)]> M}\lvert \mathbb{P}_{\hat\pi}(Y-U(X)\leq 0)-\mathbb{P}_{\pi}(Y-U(X)\leq 0)\rvert\\
\notag&\leq& \sup_{U \in\mathcal{H},\mathbb{E}_{\pi_X}[U(X)]> M}\lvert \mathbb{P}_{\hat\pi}(Y-U(X)\leq 0)-1\rvert +\\
\notag&&\hspace{2em}\sup_{U \in\mathcal{H},\mathbb{E}_{\pi_X}[U(X)]> M}\lvert \mathbb{P}_{\pi}(Y-U(X)\leq 0)-1\rvert\text{\ \ by triangle inequality}\\
\notag&\leq& \lvert \mathbb{P}_{\hat\pi}(Y+F(X)\leq M)-1\rvert +\lvert \mathbb{P}_{\pi}(Y+F(X)\leq M)-1\rvert\\
\notag&\leq& \lvert \mathbb{P}_{\hat\pi}(Y+F(X)\leq M)-\mathbb{P}_{\pi}(Y+F(X)\leq M)\rvert+\\
&&\hspace{2em}2\lvert \mathbb{P}_{\pi}(Y+F(X)\leq M)-1\rvert\text{\ \ again by triangle inequality}\label{bound for tail term}
\end{eqnarray}
and \eqref{tail term2} can be similarly bounded as
\begin{eqnarray}
\notag&&\sup_{U \in\mathcal{H},\mathbb{E}_{\pi_X}[U(X)]<-M}\lvert \mathbb{P}_{\hat\pi}(Y-U(X)\leq 0)-\mathbb{P}_{\pi}(Y-U(X)\leq 0)\rvert\\
\notag&\leq&\sup_{U \in\mathcal{H},\mathbb{E}_{\pi_X}[U(X)]<-M}\lvert \mathbb{P}_{\hat\pi}(Y-U(X)\leq 0)\rvert+\sup_{U \in\mathcal{H},\mathbb{E}_{\pi_X}[U(X)]<-M}\lvert \mathbb{P}_{\pi}(Y-U(X)\leq 0)\rvert\\
\notag&\leq& \lvert \mathbb{P}_{\hat\pi}(Y-F(X)\leq -M)\rvert +\lvert \mathbb{P}_{\pi}(Y-F(X)\leq -M)\rvert\\
\notag&\leq& \lvert \mathbb{P}_{\hat\pi}(Y-F(X)\leq -M)-\mathbb{P}_{\pi}(Y-F(X)\leq -M)\rvert+\\
&&\hspace{2em}2\lvert \mathbb{P}_{\pi}(Y-F(X)\leq -M)\rvert\text{\ \ by triangle inequality}\label{bound for tail term2}
\end{eqnarray}
Note that $\mathbb{P}_{\pi}(Y+F(X)\leq M)\to 1$ as $M\to\infty$ therefore the second absolute value in \eqref{bound for tail term} can be made arbitrarily small by choosing $M$ sufficiently large. The first absolute value in \eqref{bound for tail term} vanishes almost surely for every fixed $M$ by the classical strong law of large numbers. As a result, for every $\epsilon>0$, there exists an $M>0$ such that almost surely there exists an $n'$ for which the second supremum in \eqref{tail term} is less than $\epsilon$ for all sample size $n>n'$. A similar conclusion can be drawn for \eqref{bound for tail term2}. Therefore it's enough to show that for every fixed $M$ the supremum in \eqref{tail term0} vanishes almost surely, or in other words, the following constrained version of $\mathcal{G}_u$
\begin{equation*}
    \mathcal{G}_u^M := \{(x,y)\to I_{y-U(x)\leq 0}:U\in\mathcal{H},\lvert\mathbb{E}_{\pi_X}[U(X)]\rvert\leq M\}
\end{equation*}
is strong $\pi$-GC.

It remains to show that $\mathcal{G}_u^M$ is strong $\pi$-GC. We first note that, since $Y$ is assumed integrable, the class $\mathcal{H}_u^M:=\{(x,y)\to y-U(x):U\in\mathcal{H},\lvert\mathbb{E}_{\pi_X}[U(X)]\rvert\leq M\}$ is strong $\pi$-GC, and has an integrable envelope $\lvert Y\rvert + M + F(X)$ where $F$ is the envelope for $\{h(\cdot )-\mathbb{E}_{\pi_X}[h(X)]:h\in\mathcal H\}$. Our plan is to propagate GC from $\mathcal{H}^M_u$ to $\mathcal{G}_u^M$ using Lemma \ref{GC preservation}. To this end, we define a function $\mathrm{sign}(z)=1$ if $z\leq 0$ and $0$ if $z>0$, then $\mathcal{G}_u^M$ can be written as $\mathrm{sign}(\mathcal{H}^M_u)$. We also define two auxiliary functions both parameterized by $\epsilon>0$
\begin{align*}
\mathrm{sign}_{\epsilon}^l(z)&:=
\begin{cases}
1&\text{if }z\leq -\epsilon\\
-\frac{z}{\epsilon}&\text{if }\epsilon< z<0\\
0&\text{if }z\geq 0
\end{cases}\\
\mathrm{sign}_{\epsilon}^u(z)&:=
\begin{cases}
1&\text{if }z\leq 0\\
1-\frac{z}{\epsilon}&\text{if }0< z<\epsilon\\
0&\text{if }z\geq \epsilon
\end{cases}.
\end{align*}
Since $\mathrm{sign}_{\epsilon}^l\leq \mathrm{sign}\leq \mathrm{sign}_{\epsilon}^u$, we can approximate the class $\mathcal{G}_u^M$ (i.e., $\mathrm{sign}(\mathcal{H}^M_u)$) from below by $\mathrm{sign}_{\epsilon}^l(\mathcal{H}^M_u)$ and from above by $\mathrm{sign}_{\epsilon}^u(\mathcal{H}^M_u)$. Moreover, we have the following approximation bound
\begin{eqnarray}
\notag&&\sup_{h\in \mathcal{H}^M_u}\big(\mathbb{E}_{\pi}[\mathrm{sign}_{\epsilon}^u(h(X,Y))] - \mathbb{E}_{\pi}[\mathrm{sign}_{\epsilon}^l(h(X,Y))]\big)\\
\notag&=&\sup_{h\in \mathcal{H}^M_u}\mathbb{E}_{\pi}[(\mathrm{sign}_{\epsilon}^u-\mathrm{sign}_{\epsilon}^l)(h(X,Y))]\\
\notag&\leq&\sup_{U\in \mathcal{H},\lvert\mathbb{E}_{\pi_X}[U(X)]\rvert\leq M}\mathbb{P}_{\pi}(Y-U(X)\in (-\epsilon,\epsilon))\\
\notag&&\hspace{2em}\text{\ \ because }(\mathrm{sign}_{\epsilon}^u-\mathrm{sign}_{\epsilon}^l)(z)\leq I_{z\in (-\epsilon,\epsilon)}\\
\notag&=&\sup_{U\in \mathcal{H},\lvert\mathbb{E}_{\pi_X}[U(X)]\rvert\leq M}\mathbb{E}_{\pi_X}[\mathbb{P}_{\pi}(Y-U(X)\in (-\epsilon,\epsilon)\vert X)]\\
&\leq &2\epsilon\sup_{x,y}p(y\vert x).\label{difference of expected values}
\end{eqnarray}
On the other hand, both $\mathrm{sign}_{\epsilon}^l$ and $\mathrm{sign}_{\epsilon}^u$ are continuous and bounded, therefore by Lemma \ref{GC preservation} both $\mathrm{sign}_{\epsilon}^l(\mathcal{H}^M_u)$ and $\mathrm{sign}_{\epsilon}^l(\mathcal{H}^M_u)$ are strong $\pi$-GC for each fixed $\epsilon>0$. We can then write
\begin{eqnarray*}
&&\sup_{h\in \mathcal{H}^M_u}\lvert \mathbb{E}_{\hat\pi}[\mathrm{sign}(h(X,Y))] - \mathbb{E}_{\pi}[\mathrm{sign}(h(X,Y))] \rvert\\
&\leq &\sup_{h\in \mathcal{H}^M_u}\big(\lvert \mathbb{E}_{\hat\pi}[\mathrm{sign}_{\epsilon}^l(h(X,Y))] - \mathbb{E}_{\pi}[\mathrm{sign}(h(X,Y))] \rvert + \lvert \mathbb{E}_{\hat\pi}[\mathrm{sign}_{\epsilon}^u(h(X,Y))] - \mathbb{E}_{\pi}[\mathrm{sign}(h(X,Y))] \rvert\big)\\
&&\hspace{2em}\text{because }\mathrm{sign}_{\epsilon}^l\leq \mathrm{sign}\leq \mathrm{sign}_{\epsilon}^u\\
&\leq &\sup_{h\in \mathcal{H}^M_u}\big(\lvert \mathbb{E}_{\hat\pi}[\mathrm{sign}_{\epsilon}^l(h(X,Y))] - \mathbb{E}_{\pi}[\mathrm{sign}_{\epsilon}^l(h(X,Y))] \rvert + \lvert \mathbb{E}_{\hat\pi}[\mathrm{sign}_{\epsilon}^u(h(X,Y))] - \mathbb{E}_{\pi}[\mathrm{sign}_{\epsilon}^u(h(X,Y))] \rvert\\
&&\hspace{2em}+\mathbb{E}_{\pi}[\mathrm{sign}_{\epsilon}^u(h(X,Y))] - \mathbb{E}_{\pi}[\mathrm{sign}_{\epsilon}^l(h(X,Y))]\big)\text{\ \ by triangle inequality}\\
&\leq &\sup_{h\in \mathcal{H}^M_u}\lvert \mathbb{E}_{\hat\pi}[\mathrm{sign}_{\epsilon}^l(h(X,Y))] - \mathbb{E}_{\pi}[\mathrm{sign}_{\epsilon}^l(h(X,Y))] \rvert + \\
&&\hspace{2em}\sup_{h\in \mathcal{H}^M_u}\lvert \mathbb{E}_{\hat\pi}[\mathrm{sign}_{\epsilon}^u(h(X,Y))] - \mathbb{E}_{\pi}[\mathrm{sign}_{\epsilon}^u(h(X,Y))] \rvert+2\epsilon\sup_{x,y}p(y\vert x)\text{\ \ by the bound \eqref{difference of expected values}}.
\end{eqnarray*}
Since $\epsilon$ is arbitrary, the strong GC of $\mathcal{G}_u^M$ then follows from the strong GC of $\mathrm{sign}_{\epsilon}^l(\mathcal{H}^M_u)$ and $\mathrm{sign}_{\epsilon}^l(\mathcal{H}^M_u)$, and the finiteness of $\sup_{x,y}p(y\vert x)$. This concludes the proof.
\end{proof}

\subsection{Proofs for Appendix \ref{sec:linear}}
\begin{proof}[Proof of Theorem \ref{deviation bounds for linear hypothesis}]
Since $\sup_{h \in \mathcal{H}}\lvert \mathbb{E}_{\hat\pi_X}[h(X)] - \mathbb{E}_{\pi_X}[h(X)] \rvert\leq B\Vert \frac{1}{n}\sum_{i=1}^nX_i-\mathbb{E}_{\pi_X}[X]\Vert_{\infty}$, we have $\phi_1(n,\epsilon,\mathcal{H})\leq \mathbb{P}(B\Vert \frac{1}{n}\sum_{i=1}^nX_i-\mathbb{E}_{\pi_X}[X]\Vert_{\infty}>\epsilon)$. We bound the sub-Gaussian norm of $\Vert \frac{1}{n}\sum_{i=1}^nX_i-\mathbb{E}_{\pi_X}[X]\Vert_{\infty}$. Let $X^{(j)},j=1,\ldots,d$ be the $j$-th component of the random vector $X\in \R^d$. Since $X^{(j)}_i,i=1,\ldots,n$ are i.i.d., we have $\Vert \frac{1}{n}\sum_{i=1}^nX_i^{(j)} - \mathbb{E}_{\pi_X}[X^{(j)}]\Vert_{\psi_2}\leq \frac{C}{\sqrt{n}}\Vert X^{(j)} - \mathbb{E}_{\pi_X}[X^{(j)}] \Vert_{\psi_2}$ for some universal constant $C$, by general Hoeffding's inequality (e.g., Proposition 2.6.1 in \citeAPX{vershynin2018high}). Now we can use the sub-Gaussian maximal inequality (e.g., Lemma 2.2.2 in \citeAPX{van1996weak}) to get
\begin{eqnarray*}
\Vert \Vert \frac{1}{n}\sum_{i=1}^nX_i-\mathbb{E}_{\pi_X}[X]\Vert_{\infty}\Vert_{\psi_2}&\leq& C'\sqrt{\log d}\max_{1\leq j\leq d}\Vert \frac{1}{n}\sum_{i=1}^nX_i^{(j)} - \mathbb{E}_{\pi_X}[X^{(j)}]\Vert_{\psi_2}\\
&&\hspace{2em}\text{for another universal constant }C'\\
&\leq& C'C\sqrt{\frac{\log d}{n}}\max_{1\leq j\leq d}\Vert X^{(j)} - \mathbb{E}_{\pi_X}[X^{(j)}] \Vert_{\psi_2}\\
&\leq& C'C\sqrt{\frac{\log d}{n}}\Vert\Vert X - \mathbb{E}_{\pi_X}[X] \Vert_{\infty}\Vert_{\psi_2}
\end{eqnarray*}
where the last inequality holds because $\lvert X^{(j)} - \mathbb{E}_{\pi_X}[X^{(j)}]\rvert\leq \Vert X - \mathbb{E}_{\pi_X}[X] \Vert_{\infty}$ implies $\Vert X^{(j)} - \mathbb{E}_{\pi_X}[X^{(j)}]\Vert_{\psi_2}\leq \Vert  \Vert X - \mathbb{E}_{\pi_X}[X] \Vert_{\infty}\Vert_{\psi_2}$ for all $j=1,\ldots,d$. The expression for $\phi_1$ then comes from the sub-Gaussian tail bound.
\end{proof}


{ \bibliographystyleAPX{abbrvnat} \bibliographyAPX{reference} }

\end{document}